\definecolor{myColor}{RGB}{66,245,236}
\newtheorem{theorem}{Theorem}
\newtheorem{definition}[theorem]{Definition}
\newtheorem{lemma}[theorem]{Lemma}
\newtheorem{proposition}[theorem]{Proposition}
\newtheorem{remark}[theorem]{Remark}
\providecommand{\F}{\mathcal{F}}
\providecommand{\G}{\mathcal{G}}
\renewcommand{\H}{\mathcal{H}}
\providecommand{\I}{\mathcal{I}}
\providecommand{\C}{\mathcal{C}}
\providecommand{\T}{\mathcal{T}}
\renewcommand{\hat}{\widehat}
\renewcommand{\phi}{\varphi}
\providecommand{\att}{\rightarrowtail}
\providecommand{\natt}{\not\rightarrowtail}
\providecommand{\nat}{\mathbb{N}}
\renewcommand{\omega}{\nat}
\providecommand{\Lo}{\textbf{L}}
\providecommand{\Po}{\textbf{P}}
\providecommand{\NP}{\textbf{NP}}
\providecommand{\coNP}{\textbf{coNP}}
\providecommand{\DP}{\textbf{DP}}
\providecommand{\IH}{\textbf{IH}}
\providecommand{\Inf}{\textbf{Inf}}
\providecommand{\Fin}{\textbf{Fin}}
\providecommand{\tot}{\textbf{Tot}}
\providecommand{\ad}{\textit{ad}}
\providecommand{\stb}{\textit{stb}}
\providecommand{\co}{\textit{co}}
\providecommand{\infad}{\textit{infad}}
\providecommand{\infstb}{\textit{infstb}}
\providecommand{\infco}{\textit{infco}}
\providecommand{\cf}{\textit{cf}}
\providecommand{\na}{\textit{na}}
\providecommand{\infcf}{\textit{infcf}}
\providecommand{\infna}{\textit{infna}}
\providecommand{\ins}{\textit{In}}
\providecommand{\out}{\textit{Out}}
\providecommand{\range}{\text{range}}
\providecommand{\cred}{\text{Cred}}
\providecommand{\skep}{\text{Skept}}
\providecommand{\ex}{\text{Exist}}
\providecommand{\nemp}{\text{NE}}
\providecommand{\uni}{\text{Uni}}
\providecommand{\icf}{{\text{cfin}}}
\title[Complexity in Finitary Argumentation (extended version)]{Complexity in Finitary Argumentation\\ (extended version)}
\author[Uri Andrews]{Uri Andrews}
\address{Department of Mathematics, University of Wisconsin-Madison}
\email{\href{mailto:andrews@math.wisc.edu}{andrews@math.wisc.edu}}
\author[Luca San Mauro]{Luca San Mauro}
\address{Department of Humanistic Research and Innovation, University of Bari}
\email{\href{mailto:luca.sanmauro@gmail.com}{luca.sanmauro@gmail.com}}
\begin{document}

\begin{abstract}
Abstract argumentation frameworks (AFs) provide a formal setting to analyze many forms of reasoning with conflicting information. 
While the expressiveness of general infinite AFs make them a tempting tool for modeling many kinds of reasoning scenarios, the computational intractability of solving infinite AFs limit their use, even in many theoretical applications.

We investigate the complexity of computational problems related to infinite but finitary argumentations frameworks, that is, infinite AFs where each argument is attacked by only finitely many others. 
Our results reveal a surprising scenario. 
On one hand, we see that the assumption of being finitary does not automatically guarantee a drop in complexity. However, for the admissibility-based semantics, we find a remarkable combinatorial constraint which entails a dramatic decrease in complexity. 

We conclude that for many forms of reasoning, the finitary infinite AFs provide a natural setting for reasoning which balances well the competing goals of being expressive enough to be applied to many reasoning settings while being computationally tractable enough for the analysis within the framework to be useful.
\end{abstract}

\keywords{Formal argumentation, argumentation framework, finitary, complexity hierarchy.}

\thanks{This work was supported by the the National Science Foundation under Grant DMS-2348792. San Mauro is a member of INDAM-GNSAGA}

\maketitle

\section{Introduction}

The study of abstract arguments and their interactions originated in Dung's seminal work \cite{dung1995acceptability} and has since become a central area of research within Artificial Intelligence (for a rich survey of this area and its ramifications, see the handbook \cite{Baroni2018-BARHOF}). In particular, abstract argumentation frameworks (AFs) allow us to model and process a wide range of reasoning problems. The core idea is that arguments, whether arising from dialogues between multiple agents or from information available to a single agent, can conflict through attacks, and abstract argumentation offers a structured way to manage these conflicts.

A key challenge for any AF is determining which sets of arguments, known as \emph{extensions}, are acceptable. Different reasoning contexts necessitate different acceptability criteria, which has resulted in the development of a wide range of argumentation \emph{semantics} that prescribe which extensions of a given AF are deemed acceptable. 
For each semantics, there are natural associated computational problems, such as determining whether an argument belongs to some accepted extension (credulous acceptance) or to all accepted extensions (skeptical acceptance). Determining the computational complexity of these problems has been a major research direction in formal argumentation \cite{dimopoulos1996graph,dunne2002coherence,dunne2009complexity} (see Table \ref{finiteTable} for collected results).

Infinite AFs played a prominent role in Dung's seminal work and arise in multiple practical domains, but they have received far less attention than their finite counterparts. In application-focused research, where direct solvability is often crucial, this focus on the finite is understandable. Yet restricting to finite frameworks sacrifices generality and overlooks settings where infinite structures are unavoidable. Dung's own encoding of logic programming already produces infinite, and sometimes non-finitary, AFs \cite{dung1995acceptability}, and later work has identified many further examples \cite{baumann2017study,baroni2013automata,flouris2019comprehensive}, including continuum-sized frameworks from cooperative game theory \cite{young2020continuum}. Infinite AFs also arise naturally in structured argumentation systems such as ASPIC+ \cite{modgil2014aspic+} and assumption-based argumentation \cite{toni2014tutorial}, where expressive languages can generate infinitely many arguments even from finite knowledge bases. Finally, domains such as science, law, and AI-driven dialogue involve evolving, unbounded reasoning processes; here, infinite AFs provide a natural idealization, supporting analyses of reasoning “in the limit” \cite{belardinelli2015formal} and offering insights into scalability, convergence, and complexity. 



There has been an increase in interest in infinite AFs in recent years, with particular attention paid to how the existence and interaction of different semantics are influenced in the infinite domain \cite{baroni2013automata,baumann2015infinite,caminada2014grounded-inf,baumann2017study}. Recently, Andrews and San Mauro \cite{andrews2024NMR, andrews2024FCR} developed a paradigm, rooted in computability theory, which is suitable to analyze the computational problems of credulous and skeptical acceptance of arguments in infinite AFs. They defined an infinite AF as computable if its attack relation is computable, meaning there exists a Turing machine that can decide, for any pair of arguments, whether one attacks the other. By studying the complexity of the computational problems of credulous and skeptical acceptance of arguments on the computable AFs, we can isolate the complexity of the reasoning inside the AF from the complexity of the AF itself.
As for the case of many of the computational problems in the finite setting, their infinite analogues are, if non-trivial, maximally hard (see Table \ref{infinitaryTable} for some collected results).

With these complexity results in mind, we are faced with a familiar dichotomy: We would like to work with infinite AFs due to their vast expressiveness and usefulness for a myriad of reasoning tasks, but their computational problems are intractably hard. In particular, there is no computation which, at any finite time, yields any useful evidence towards accepting or not accepting an argument. 

In this paper, we offer an approach to manage this situation by considering
a fundamental class of AFs which dates back to the work of Dung \cite[Definition 27]{dung1995acceptability}: namely, infinite but \emph{finitary} AFs, where each argument is attacked by at most finitely many others. This class is still quite expressive with application. For example, \cite[Section 7.3,7.4]{baroni2013automata} show applications of finitary AFs in the contexts of multi-agent negotiation and in ambient intelligence.
Yet, we show that the computational problems associated to many familiar semantics are surprisingly simple. In particular, though these problems are still incomputable, we can perform calculations which, at finite time, provide evidence for accepting or rejecting an argument. For example, we show that some problems are limits of computable functions and others are $\limsup$s of computable functions.

This approach to handling computational intractability is familiar also in the finite setting. In the finite setting, researchers have identified specific classes of directed graphs, such as symmetric, acyclic, or bipartite directed graphs \cite{coste2005symmetric,dunne2007computational,dvovrak2012augmenting}, that allow more efficient evaluation of AFs. 


The findings presented in this paper, collected in Table \ref{finitaryTable}, reveal a surprising scenario. On one hand, we see that the assumption of being finitary does not yield a drop in complexity for all semantics. However, for the admissibility-based semantics, we find a remarkable combinatorial constraint which entail a dramatic decrease in complexity. This is most surprising when we consider the infinite semantics
(see Definition \ref{def: infinite semantics}), which were introduced in \cite{andrews2024FCR, andrews2024NMR} motivated by the idea that infinite AFs should yield an infinite amount of information. 

In Section \ref{sec:background}, we review the background, both from argumentation theory and computability theory needed in this paper. In Section \ref{sec:comply finitary AFs}, we formally define the complexity problems which we solve in this paper. Specifically, we introduce the computably finitary AFs. In Section \ref{sec:results}, we give a table describing all the results proved in this paper, which are then detailed in the remainder. The omitted proofs are provided in the supplementary material (Section \ref{supplement}). 


\section{Background}\label{sec:background}
\subsection{Argumentation theoretic background}\label{sec:AF Background}

We briefly review some key concepts of Dung-style   argumentation
theory, focusing on the  semantics notions considered in this paper and the fundamental computational
problems associated with them (the surveys \cite{baroni2009semantics,dunne2009complexity} offer an overview of these topics). 

An \emph{argumentation framework} (AF) $\F$  is a pair $(A_\F,R_\F)$ consisting of  a set $A_\F$ of arguments and an attack relation $R_\F\subseteq A_\F\times A_\F$. If some argument $a$ attacks some argument $b$, 
 we often write $a\att b$ instead of $(a, b)\in R_\F$.   Collections of arguments $S\subseteq A_\F$ are called \emph{extensions}. For an extension $S$, the symbols $S^+$ and $S^-$ denote, respectively, the arguments that $S$ attacks and the arguments that attack $S$: 
\[
S^+=\{x : (\exists y \in  S)(y \att x)\};
S^-=\{x : (\exists y \in  S)(x \att y) \}.
\]
$S$ \emph{defends} an argument $a$, if  any argument that attacks $a$ is attacked by some argument in $S$ (i.e., $\{a\}^-\subseteq S^+$). The \emph{characteristic function} of $\F$ is the mapping $f_\F$ which sends subsets of $A_\F$ to subsets of $A_\F$ via 
$f_\F(S) := \{x :  x \text{ is defended by $S$}\}$. 
An AF
$\F$ is \emph{finitary} if  $\{x\}^-$ is finite for all $x\in A_\F$.

A \emph{semantics} $\sigma$  assigns to every AF $\F$ a set of extensions $\sigma(\F)$ which are  deemed as acceptable.  Several semantics, fueled by different motivations, have been proposed and analyzed.  Here, we focus on five prominent choices, whose computational aspects are well-understood in the finite setting: conflict-free, naive, admissible, complete, and stable semantics (abbreviated by $\cf, \na, \ad, \co, \stb$, respectively). Let $\F=(A_\F,R_\F)$ be an AF. Denote by $\cf(\F)$ the collection of extensions of $\F$ which are \emph{conflict-free} (i.e., $S\in \cf(\F)$ iff  $a \natt b$, for all $a,b\in S$). Then, for $S\in \cf(\F)$,
\begin{itemize}
\item $S\in \na(\F)$ iff there is no $S'\supsetneq S$ which is conflict-free.
\item $S\in \ad(\F)$ iff $S$ is self-defending (i.e., $S\subseteq f_\F(S)$);
\item $S\in \stb(\F)$ iff $S$ attacks all arguments outside of itself (i.e., $S^+= A_\F\smallsetminus S$);
\item $S\in \co(\F)$ iff $S$ is a fixed point of $f_\F$ (i.e., $S= f_\F(S)$).
\end{itemize}
For each of these semantics it is natural to ask for infinite extensions within this semantic. 
\begin{definition}\label{def: infinite semantics}
For each $\sigma\in \{\cf,\na,\ad,\co,\stb\}$, we also consider $\textit{inf}\sigma$, where $S\in \textit{inf}\sigma(\F)$ iff $S\in \sigma(\F)$ and $S$ is infinite.
\end{definition}

As an illustration of why we might want to accept only infinite extensions, we consider that a given infinite AF may contain a single argument $b$ so that $b$ attacks every other argument, and every other argument attacks $b$. We imagine that $b$ is a statement of extreme solipsism denying the truth of any other statement. While $\{b\}$ is a stable extension, it represents a negligible fraction of arguments, and we may prefer not to accept it. In an infinite AF, any finite set is as negligible as $\{b\}$, so we may prefer to accept only infinite extensions.

For a given semantics $\sigma$, the following are some well-known computational problems related to $\sigma$:
\begin{itemize}
    \item \textbf{Credulous acceptance} ($\cred_\sigma$): Given a pair $(\F, a)$, decide whether $a$ belongs to \emph{some} extension $S \in \sigma(\F)$.
    
    \item \textbf{Skeptical acceptance} ($\skep_\sigma$): Given a pair $(\F, a)$, decide whether $a$ belongs to \emph{every} extension $S \in \sigma(\F)$.
    
    \item \textbf{Extension existence} ($\ex_\sigma$): Given an argumentation framework $\F$, determine whether $\sigma(\F)$ is non-empty.
    
    \item \textbf{Non-empty extension existence} ($\nemp_\sigma$): Given $\F$, decide whether $\sigma(\F)$ contains at least one \emph{non-empty} extension.
    
    \item \textbf{Uniqueness of extension} ($\uni_\sigma$): Given $\F$, determine whether there exists exactly one extension under $\sigma$, i.e., $|\sigma(\F)| = 1$.
\end{itemize}

In formal argumentation theory, evaluating the computational complexity of the aforementioned problems for various semantics has been a noteworthy research thread for more than 20 years \cite{dunne2009complexity}. Table \ref{finiteTable}, which appears in \cite{dvorak2018computational}, collects known complexity results for finite AFs and these semantics. In the next section, we introduce the computability theoretic machinery that enable tackling complexity issues concerning infinite AFs.

\begin{table}[ht]
\caption{Complexity of computational problems for finite AFs. $\mathcal{C}$-c denotes completeness for $\mathcal{C}$.}
	\centering
	\begin{tabular}{|l|l|l|l|l|l|}
	
		\hline
		$\sigma$ & $\cred_\sigma$ & $\skep_\sigma$ & $\ex_\sigma$ & $\nemp_\sigma$ & $\uni_\sigma$ \\

            \hline
		$\cf$ & \Lo & trivial  & trivial &  \Lo  & \Lo  \\
		\hline

            \hline
		$\na$ & \Lo & \Lo  & trivial & \Lo  & \Lo  \\
		\hline
  
		\hline
		$\ad$ & \NP-c & trivial  & trivial & \NP-c  & \coNP-c  \\
		\hline

		$\stb$ & \NP-c &  \coNP-c & \NP-c & \NP-c & \DP-c  \\
		\hline

		$\co$ & \NP-c  & \Po-c & trivial  & \NP-c  & \coNP-c  \\
		\hline
	\end{tabular}
	\label{finiteTable}
\end{table}

\subsection{Computability theoretic background}
In this section, we offer a succinct summary of the computability theoretic notions needed to assess the complexity of computational problems for infinite AFs. 
A more formal and comprehensive
exposition of the fundamentals of computability theory
can be found, e.g., in the textbook \cite{rogers1987theory}.

\subsubsection{Numbers, strings, and trees}
We denote the set of natural numbers by $\omega$. 
In order to formulate our problems as subsets of $\omega$, it will be convenient to encode pairs of numbers into single numbers. The pairing function does this. Fix $p: \omega\times \omega\to \omega$ 
 to be a computable bijection. We adopt the common practice of denoting $p(x,y)$ by $\langle x,y\rangle$. 
 Given a finite set $A = \{x_1, x_2, \ldots , x_n\}\subset \nat$, we define $y = 2^{x_1} + 2^{x_2} + \ldots + 2^{x_n}$ to be the \emph{canonical index} of $A$; let $D_y$ denote
the finite set with canonical index $y$. Using canonical indices, we can quantify over all finite subsets of $A_\F$. 


The set of all finite strings of natural numbers is denoted by $\omega^{<\omega}$, and the set of infinite strings of natural numbers is denoted by $\nat^\nat$.  
The concatenation of strings $\sigma,\tau$ is denoted by $\sigma^\smallfrown \tau$. The length of a string $\sigma$ is denoted by $|\sigma|$. If there is $\rho$ so that $\sigma^\smallfrown \rho=\tau$, we say that $\sigma$ is a \emph{prefix} of $\tau$ and  we write $\sigma \preceq\tau$. If $\sigma$ is not a prefix of $\tau$ and $\tau$ is not a prefix of $\sigma$, then we say $\sigma$ and $\tau$ are incomparable.   
Similarly, if $\pi\in \omega^{\omega}$ and $\sigma$ is a prefix of $\pi $, we write $\sigma\prec \pi$.

A \emph{tree} is a set $\T\subseteq \omega^{<\omega}$ closed under prefixes. We picture trees growing upwards, with $\sigma^\smallfrown i$ to the left of $\sigma^\smallfrown j$, whenever $i<j$. A \emph{path} $\pi\in \omega^{\omega}$ through a tree $\T\subseteq \omega^{<\omega}$ is an infinite sequence so that $\sigma  \in  \T$ for every $\sigma\prec \pi$. The set of paths through a tree $\T$ is denoted by $[\T]$. 
If $\T$ contains strings of arbitrary length, then $\T$ has \emph{infinite height}. Note that there are trees of infinite height which have no path, e.g., $\T=\{ n^\smallfrown \sigma : |\sigma|\leq n\}$. 

A tree $\T$ is \emph{finitely branching} if whenever $\sigma\in \T$ there are only finitely many numbers $i$ so that $\sigma^\smallfrown i\in \T$. $\T$ is \emph{computably finitely branching} if there is a computable function $g:\omega^{<\omega}\to\omega$ so that, for every $\sigma\in \T$, we  have $|\{i : \sigma^\smallfrown i\in \T\}| = g(\sigma)$.  K\"onig's lemma states that a finitely branching tree of infinite height has a path.


\subsubsection{Enumerating the computable functions and c.e.\ sets}
We let $(\varphi_e)_{e\in\mathbb{N}}$ denote a computable enumeration of all partial computable functions. 
We fix a single computable bijection $g:\omega^{<\omega}\rightarrow \omega$ and let $\hat{\phi}_e=\phi_e\circ g$. We say $e$ is a \emph{computable index for a tree} $\T$ if $\hat{\phi}_e(\sigma)=1$ if $\sigma\in\T$ and $\hat{\phi}_e(\sigma)=0$ otherwise.  

Similarly, $(W_e)_{e\in\mathbb{N}}$ denotes the computable enumeration of all computably enumerable (c.e.) sets given by $W_e:=\range(\phi_e)$.
For a c.e.\ set $W$, we denote by $W[s]$ the set of its elements
enumerated within $s$ steps. Without loss of generality, we assume that, for all $e,s\in\omega$, $|W[s+1]\smallsetminus W[s]|\leq 1$ (i.e., at all stages at most one number enters $W_e$).  We say that $s>0$ is an \emph{expansionary stage}, if $|W[s]|>|W[s-1]|$.

\subsubsection{Complexity classes and completeness}
Hierarchies form a core concept of computability theory, as they group sets of numbers in classes of increasing complexity. The \emph{arithmetical hierarchy}  classifies those sets which are definable in the language of first-order arithmetic according to the logical complexity of their defining formulas: $A\subseteq \mathbb{N}$ is  $\Sigma^0_n$ if there is a computable relation $R\subseteq \mathbb{N}^{n+1}$ such that $x\in A$ iff $(\exists y_1\forall y_2\ldots Q y_n)(R(x,y_1,\ldots,y_n))$ holds,
where $Q$ is $\exists$ if $n$ is odd and $\forall$ if $n$ is even; $A$ is $\Pi^0_n$ if $\omega\smallsetminus A$ is $\Sigma^0_n$. The $\Sigma^0_1$ sets coincide with the c.e.\ sets.

The \emph{difference hierarchy} gives a finer classification of the arithmetical sets. For our purposes, it suffices to mention the following complexity classes: $A$ is  $d$-$\Sigma^0_n$, if $A=X\cap Y$, for some $X\in \Sigma^0_n$ and $Y\in \Pi^0_n$, (or, equivalently, $A$ is the difference of two $\Sigma^0_n$ sets);  $A$ is  $u$-$\Sigma^0_n$, if $A=X\cup Y$, for some $X\in \Sigma^0_n$ and $Y\in \Pi^0_n$ (or, equivalently, $\omega\smallsetminus {A}$ is $d$-$\Sigma^0_n$).

Finally, the \emph{analytical hierarchy} emerges by allowing second-order quantification. Here, we are interested only in the first level of the hierarchy: The $\Sigma^1_1$ sets are the subsets of $\mathbb{N}$ that
are definable in the language of second-order arithmetic
using a \emph{single} second-order existential quantifier ranging
over subsets of $\mathbb{N}$; the $\Pi^1_1$ sets are the complements
of $\Sigma^1_1$ sets.

Let $\Gamma$ be a complexity class (e.g., $\Gamma\in 
\{\Sigma_n^0, d\text{-}\Sigma^0_n, u\text{-}\Sigma^0_n, \Sigma^1_1, \Pi^1_1\}$). A set $V\subseteq \mathbb{N}$ is \emph{$\Gamma$-hard}, if for every $X\in\Gamma$
there is a computable function $f :\mathbb{N} \to \mathbb{N}$ so that $x\in X$
iff $f(x)\in V$. If $V$ is $\Gamma$-hard and belongs to $\Gamma$,
then it is \emph{$\Gamma$-complete}.


To gauge the complexity of computational problems, it is convenient to use established benchmark sets:
\begin{itemize}
\item The halting set $K:=\{n: n\in W_n\}$ is $\Sigma^0_1$-complete. 
\item The sets $\Fin:=\{n: |W_n|<\infty\}$ and $\Inf:=\{n: |W_n|=\infty\}$ are, respectively, $\Sigma^0_2$- and $\Pi^0_2$-complete.
\item $\mathbf{Path} = \{e : e $ is a computable index for a tree which has a path$\}$ is $\Sigma^1_1$-complete.
\end{itemize} 


Thus, checking if a given tree has a path is $\Sigma^1_1$-hard. On the other hand, K\"onig lemma's ensures that verifying whether a given computably finitely branching tree $\T$ has a path is only $\Pi^0_1$ hard.



\section{Computably finitary argumentation frameworks}\label{sec:comply finitary AFs}



We now fix an indexing of all computably finitary AFs. Such an indexing captures the notion of being a computable AF, but even more, that we can computably see that the AF is finitary.

\begin{definition}\label{indexing of computable AFs}
A number $e$ is a \emph{computable index for a finitary AF} $\F=(A_\F,R_\F)$ with $A_\F=\{a_n : n\in \omega\}$, if $(\forall n,m)(a_n\att a_m \Leftrightarrow n\in D_{\phi_e(m)})$.
%


If $e$ is a computable index for a finitary AF, we let $\F_e$ represent this AF. An AF $\F$ is \emph{computably finitary}, if it possesses a computable index.
\end{definition}



\begin{remark}
    We are considering computably finitary AFs, which is a more restrictive class than the class of computable AFs which happen to be finitary. Consider for example the AF $\F=(A_\F,R_\F)$ where $A_\F=\{a_n : n\in \nat\}$ and $a_n\att a_m$ if and only if the computation $\phi_m(m)$ converges at exactly step $n$. Though $\F$ is a finitary computable AF, there is an element attacking $a_m$ if and only if $m$ is in the Halting set, thus $\F$ is not computably finitary.
    
    One might ask if our results remain valid for the larger class of computable AFs which happen to be finitary. The division of complexity between the arithmetical and non-arithmetical decision problems remains unchanged; though some of the complexities may change by up to one layer within the arithmetical hierarchy, e.g., we show $\cred_\stb^\icf$ is $\Pi^0_2$, whereas on the collection of computable AFs which happen to be finitary, it may be as complicated as $\Pi^0_3$. 
\end{remark}

In analogy with the case of computable AFs, we now present the computational problems naturally associated with  computably finitary AFs as subsets of $\omega$:

\begin{definition}\label{comp problem inf frameworks}
For a semantics $\sigma$:
\begin{enumerate}
\item $\cred_\sigma^{\icf}:=\{\langle e, n\rangle :  (\exists S\in \sigma(\F_e))(a_n\in S) \}$;
\item $\skep_\sigma^\icf:=\{\langle e, n\rangle :  (\forall S\in \sigma(\F_e))(a_n\in S) \}$;
\item $\ex_\sigma^\icf:=\{e :  (\exists S\subseteq A_{\F_e}))(S\in\sigma(\F_e)) \}$;
\item $\nemp_\sigma^\icf:=\{e :  (\exists S\in  \sigma(\F_e))(S\neq \emptyset)\}$;
\item $\uni_\sigma^\icf:=\{e :  (\exists! S\subseteq A_{\F_e})(S\in\sigma(\F_e))\}$.
\end{enumerate}

\end{definition}

For items $\cred_\sigma^\icf$ and $\skep_\sigma^\icf$ above,  we also consider
their restrictions to a specific computably finitary AF $\F_e$. That is,
we define 
\begin{itemize}
\item $\cred^\icf_\sigma(\F_e)=\{n:\langle e, n\rangle \in \cred^\icf_\sigma\}$;
\item $\skep^\icf_\sigma(\F_e)=\{n:\langle e, n\rangle \in \skep^\icf_\sigma\}$
\end{itemize}

For the sake of exposition, we abuse notation and identify arguments or AFs with their indices. For example, we may write $\F\in\ex^\icf_\sigma$ to mean that $\F$ is a computably finitary AF having a $\sigma$-semantics; or, we may write $a_n\in \skep^\icf_\sigma(\F)$ instead of $n\in \skep^\icf_\sigma(\F)$.

The main goal of this paper is to investigate the complexity of the problems listed in Definition \ref{comp problem inf frameworks}. Yet, in pursuing this goal, one encounters a basic issue: namely, part of the complexity of each problem $\mathcal{P}^\icf_\sigma$ comes from the complexity of determining whether an index $e$ is in fact an index for a computably finitary AF, in particular if $\phi_e$ is a total function.
A natural way to overcome this issue and be more faithful to the complexity of a given decision problem $\mathcal{P}^\icf_\sigma$ is by understanding our complexity classes on this set:

\begin{definition}
Let $X$ denote some subset of $\nat$.
For each of our computational problems $\mathcal{P}_\sigma^\icf$:
\begin{itemize}
\item $\mathcal{P}^\icf_\sigma$ is \emph{$\Gamma$ within $X$} if $\mathcal{P}^\icf_\sigma=R\cap X$ for some $R\in \Gamma$;


\item $\mathcal{P}^\icf_\sigma$ is \emph{$\Gamma$-complete within $X$} if 
\begin{itemize}
    \item $\mathcal{P}^\icf_\sigma$ is $\Gamma$ within $X$; 
    \item Whenever $S\in\Gamma$, there exists a computable $f:\omega \to X$ so that $x\in S \Leftrightarrow f(x)\in \mathcal{P}^\icf_\sigma$.
\end{itemize}
\end{itemize}
\end{definition} 

The last definition grounds the following important remark:

\begin{remark}\label{rem:inTot}
All  complexity results are to be understood under assumption that \emph{we are reasoning within $\tot$}, the set of indices for total computable functions. Note that $e$ is a computable index for a finitary AF if and only if $\phi_e$ is a total function. That is, for $\mathcal{P}\in \{\ex,\nemp,\uni\}$, we examine complexity of $\mathcal{P}^\icf_\sigma$ in $\tot$, and for $\mathcal{P}\in \{\cred,\skep\}$, we examine complexity of  $\mathcal{P}^\icf_\sigma$ in $\tot\times \nat$. Since $\tot$ itself has complexity $\Pi^0_2$, this has no effect on our results showing $\Gamma$-completeness for any $\Gamma$ closed under conjunctions with $\Pi^0_2$ sets, e.g., $\Sigma^1_1$-completeness. For the others, e.g., the $\Sigma^0_1$-completeness in $\tot$ of $\skep^\icf_{\stb}$, this means that given an index $e$ for a computably finitary AF $\F_e$, and an $n\in \omega$ it is $\Sigma^0_1$ to see that $a_n$ is in every stable extension in $\F_e$. It would be misleading to say that this problem is $\Pi^0_2$-hard, since the complexity at that level is the complexity of determining whether or not $e$ is an index for a computably finitary AF, which does not accurately reflect the complexity of the problem of credulous acceptance for the admissible semantics.
\end{remark}

\section{Results}\label{sec:results}

The main results of this paper are gathered in the following tables. 

\begin{table}[ht]
\caption{Computational problems for computably finitary AFs. $\mathcal{C}$-c denotes completeness for the class $\mathcal{C}$. 
The NE and Exists columns are joined for the semantics which require the extensions to be infinite, thus non-empty. Highlighted cells represent problems where the complexity is far simpler than the non-finitary case.}
	\label{finitaryTable}
	\centering
	\begin{tabular}{|l|l|l|l|l|l|}
		\hline
		$\sigma$ & $\cred^\icf_\sigma$ & $\skep^\icf_\sigma$ & $\ex^\icf_\sigma$ & $\nemp_\sigma^{\icf}$ & $\uni^\icf_\sigma$  \\
        
            \hline
		$\cf$ & computable 
        & trivial 
        & trivial 
        & $\Sigma^0_1$-c 
        & $\Pi^0_1$-c 
        \\
		\hline

		$\na$ 
        & computable 
        & $\Pi^0_1$-c 
        & trivial 
        & $\Sigma^0_1$-c 
        & $\Pi^0_1$-c 
        \\
		\hline

		$\ad$ 
        & \cellcolor{myColor} $\Pi^0_1$-c 
        & trivial  & trivial 
        &\cellcolor{myColor} $\Sigma^0_2$-c 
        &\cellcolor{myColor} $\Pi^0_2$-c 
        \\
		\hline


		$\stb$ 
        & \cellcolor{myColor} $\Pi^0_1$-c 
        & \cellcolor{myColor} $\Sigma^0_1$-c 
        & \cellcolor{myColor}$\Pi^0_1$-c 
        & \cellcolor{myColor}$\Pi^0_1$-c 
        & \cellcolor{myColor}$\Pi^0_2$-c  
        \\
		\hline	
        
		$\co$ 
        & \cellcolor{myColor}$\Pi^0_1$-c 
        &\cellcolor{myColor}  $\Sigma^0_1$-c 
        & trivial 
        & \cellcolor{myColor}$\Sigma^0_2$-c 
        & \cellcolor{myColor}$\Pi^0_2$-c 
        \\
		\hline
        
        $\infcf$ 
        & $\Sigma^1_1$-c 
        & $\Pi^1_1$-c 
        & \multicolumn{2}{c|}{$\Sigma^1_1$-c 
        } 
        & trivial 
        \\
		\hline

		$\infna$ 
        & $\Sigma^1_1$-c 
        & $\Pi^1_1$-c 
        & \multicolumn{2}{c|}{$\Sigma^1_1$-c 
        } 
        & $\Pi^1_1$-c 
        \\
		\hline

            $\infad$ 
            &\cellcolor{myColor} u-$\Sigma^0_2$-c 
            &\cellcolor{myColor} d-$\Sigma^0_2$-c 
            &\multicolumn{2}{c|}{\cellcolor{myColor} u-$\Sigma^0_2$-c 
            } 
            & \cellcolor{myColor} $\Pi^0_3$-c 
            \\
		\hline
        
            $\infstb$ 
            & \cellcolor{myColor} $\Pi^0_2$-c 
            &\cellcolor{myColor}  $\Sigma^0_2$-c 
            & \multicolumn{2}{c|}{\cellcolor{myColor} $\Pi^0_2$-c 
            } 
            &\cellcolor{myColor}  $\Pi^0_3$-c 
            \\
		\hline
        
            $\infco$ 
            &\cellcolor{myColor}  u-$\Sigma^0_2$-c 
            & ? & \multicolumn{2}{c|}{\cellcolor{myColor} u-$\Sigma^0_2$-c 
            }  
            & ? \\
		\hline
	\end{tabular}
\end{table}

\begin{table}[ht]
\caption{This Table contains the results for infinite AFs without any assumption of finitarity. 
The results for the asterisked semantics are proved in this paper, and the other results can be found in \cite{andrews2024NMR} and \cite{TARK}.
}
	\label{infinitaryTable}
	\centering
	\begin{tabular}{|l|l|l|l|l|l|}
		\hline
		$\sigma$ & Cred$^\infty_\sigma$ & Skept$^\infty_\sigma$ & Exists$^\infty_\sigma$ & NE$_\sigma^{\infty}$ & Uni$^\infty_\sigma$  \\
		
		\hline
      	$\cf$ $ \ast$ & computable   & trivial  & trivial & $\Sigma^0_1$-c  & $\Pi^0_1$-c  \\
    		\hline

    		$\na$ $ \ast$ & computable & $\Pi^0_1$  & trivial & $\Sigma^0_1$-c  & $\Pi^0_1$-c  \\
		\hline
		$\ad$ & $\Sigma^1_1$-c & trivial  & trivial & $\Sigma^1_1$-c & $\Pi^1_1$-c  \\
		\hline


		$\stb$ & $\Sigma^1_1$-c  &  $\Pi^1_1$-c  & $\Sigma^1_1$-c  & $\Sigma^1_1$-c  & $\Pi^1_1$-c    \\
		\hline	
		$\co$ & $\Sigma^1_1$-c  & $\Pi^1_1$-c  & trivial  & $\Sigma^1_1$-c  &$\Pi^1_1$-c    \\
		\hline
           $\infcf$ $ \ast$& $\Sigma^1_1$-c  & $\Pi^1_1$-c  & \multicolumn{2}{c|}{$\Sigma^1_1$-c } & trivial  \\
        		\hline
        
		$\infna$ $ \ast$& $\Sigma^1_1-$c  & $\Pi^1_1$-c   & \multicolumn{2}{c|}{$\Sigma^1_1$-c } & $\Pi^1_1$-c \\
		\hline
            $\infad$ & $\Sigma^1_1$-c   & $\Pi^1_1$-c  & \multicolumn{2}{c|}{$\Sigma^1_1$-c  } &$\Pi^1_1$-c    \\
		\hline
            $\infstb$ & $\Sigma^1_1$-c   & $\Pi^1_1$-c  & \multicolumn{2}{c|}{$\Sigma^1_1$-c} &$\Pi^1_1$-c   \\
		\hline
            $\infco$ & $\Sigma^1_1$-c   & $\Pi^1_1$-c  & \multicolumn{2}{c|}{$\Sigma^1_1$-c}  &$\Pi^1_1$-c \\
		\hline
	\end{tabular}

\end{table}

In the next sections, we will present the results collected in Table \ref{finitaryTable} and the new results of Table \ref{infinitaryTable}. 
We highlight the distinctions between entries in Table \ref{finitaryTable} and Table \ref{infinitaryTable} in the rows corresponding to $\ad,\stb,\co,\infad,\infstb,\infco$, where there is a stark difference in complexity. Namely, for general infinite AFs, these problems are non-arithmetical, whereas for finitary AFs, they are at very low levels of the arithmetical hierarchy. 

\begin{remark}\label{computable approximations}
    The complexity-classes at very low levels of the arithmetical hierarchy appearing in Table \ref{finitaryTable} correspond to approximation-strategies for the corresponding decision problems.
    When a decision problem $X$ is $\Sigma^0_1$, such as $\skep^\icf_\stb$, there is a computable function $f:\omega^2\rightarrow \{0,1\}$ so that $f(x,s)\leq f(x,s+1)$ so that the characteristic function of $X$ equals $\lim_{s\to\infty} f(x,s)$. Equivalently, the decision problem is computably enumerable.
    When a decision problem $X$ is $\Sigma^0_2$, such as $\skep^\icf_\infstb$, there is a computable function $f:\omega^2\rightarrow \{0,1\}$ so that the characteristic function of $X$ equals $\liminf_{s\to\infty} f(x,s)$. The other classes have similarly defined approximation algorithms. 

    We note that solving the decision problem via a limiting procedure from finite computations is as good as one might hope for. Suppose a logical reasoner, which is being modeled by the decision procedure, were processing the infinitely much information in an infinite AF and trying to decide on which arguments to accept. Then at any finite time, only finitely much information can possibly have been processed, so the reasoner has simply not seen enough information to come to a definitive conclusion. The best we may hope for is that as time moves forward, and the reasoner acquires all pertinent information, their finite-times conclusions should approximate the correct conclusions.

    In this way, we are able at finite time to make approximations to the solutions of, say $\cred^\icf_{\infstb}$ so that the $\limsup$ of the approximations is correct. This is in contrast to the non-finitary setting where the $\Sigma^1_1$-completeness of $\cred^\infty_\infstb$ shows that no finite-time computation can in any way approximate the solution to this problem.
\end{remark}

\section{The classic semantics}\label{sec: cfna}



The complexity of the computational problems for the conflict-free and naive semantics in the infinite setting were not covered in \cite{andrews2024NMR}. In this section,  we see that their complexity remains unaffected by the assumption of the AF being finitary: for $\mathcal{P}\in\{\cred,\skep,\ex,\nemp,\uni\}$,  
both  $\mathcal{P}^\icf_\sigma$ and $\mathcal{P}^\infty_\sigma$ have the same complexity.
Since the empty set is conflict-free,  $\skep_\cf^\infty$,  $\ex_\cf^\infty$, and $\ex_\na^\infty$ are trivial. The next couple theorems, proved in Sections \ref{supplement: cf and na}, analyze the remaining complexities associated with the conflict-free and the naive semantics. 

\begin{restatable}{theorem}{allcf}
\label{all cf}
$(i)$ $\cred_\cf^\icf$ is computable; $(ii)$ $\nemp_\cf^\icf$ is $\Sigma^0_1$-complete; $(iii)$ $\uni_\cf^\icf$ is $\Pi^0_1$-complete.
\end{restatable}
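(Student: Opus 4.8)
The plan is to reduce all three parts to a single observation: since $\emptyset\in\cf(\F)$ always and conflict-freeness forbids self-attacks, everything is controlled by which arguments attack themselves. The one computability input I need is that, given a computable index $e$ for a finitary AF (so $\phi_e$ is total), the self-attack predicate $a_n\att a_n$ is decidable uniformly in $\langle e,n\rangle$: it amounts to computing $\phi_e(n)$ and checking $n\in D_{\phi_e(n)}$, a finite computation. I will carry this out working within $\tot$, so that $\phi_e$ converges everywhere.

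For $(i)$ I would prove $a_n\in\cred_\cf^\icf(\F_e)$ iff $a_n\natt a_n$. If $a_n$ lies in a conflict-free $S$, applying the conflict-free condition to the pair $(a_n,a_n)$ forces $a_n\natt a_n$; conversely, if $a_n\natt a_n$ then $\{a_n\}$ is itself conflict-free and witnesses credulous acceptance. Hence $\cred_\cf^\icf$ is decided within $\tot$ by the self-attack test, giving computability. The same equivalence immediately upgrades to $(ii)$: $\F_e$ has a nonempty conflict-free extension iff some argument fails to attack itself, so $\nemp_\cf^\icf=\{e:(\exists n)(n\notin D_{\phi_e(n)})\}$, which is $\Sigma^0_1$ within $\tot$. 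For $(iii)$, since $\emptyset$ is always conflict-free, $|\cf(\F_e)|=1$ iff $\emptyset$ is the only conflict-free set iff there is no nonempty conflict-free extension iff every argument self-attacks; thus $\uni_\cf^\icf=\{e:(\forall n)(n\in D_{\phi_e(n)})\}$ is $\Pi^0_1$ within $\tot$, and is exactly the complement (within $\tot$) of $\nemp_\cf^\icf$.

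For the hardness in $(ii)$ I would reduce a $\Sigma^0_1$-complete set $S$, writing $x\in S\iff(\exists s)R(x,s)$ with $R$ computable, and build $\F_{f(x)}$ on arguments $a_0,a_1,\dots$ with no cross-attacks in which $a_n$ self-attacks exactly when $R(x,n)$ fails: set $\phi_{f(x)}(n)=0$ (empty attacker set) if $R(x,n)$ holds and $\phi_{f(x)}(n)=2^n$ (attacker set $\{a_n\}$) otherwise. Then $D_{\phi_{f(x)}(m)}\subseteq\{m\}$ always, so the only possible attacks are self-loops, and $\F_{f(x)}$ has a nonempty conflict-free extension iff some $a_n$ is non-self-attacking iff $x\in S$. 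For $(iii)$ I would reduce a $\Pi^0_1$-complete $P$ with $x\in P\iff(\forall s)Q(x,s)$, reusing the gadget in complementary form: $\phi_{g(x)}(n)=2^n$ if $Q(x,n)$ and $0$ otherwise, so every argument self-attacks iff $x\in P$.

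The main point to check carefully — rather than a genuine obstacle — is that these gadgets really are computable indices for finitary AFs landing in $\tot$: because $R$ and $Q$ are computable, $\phi_{f(x)}$ and $\phi_{g(x)}$ are total, and each argument has at most one attacker, so the indices are computably finitary and the functions $f,g$ are computable and map $\omega\to\tot$ as the completeness-within-$\tot$ definition requires. The entire mathematical content lives in the two equivalences ``credulous/nonempty $\leftrightarrow$ existence of a non-self-attacking argument'' and ``unique $\leftrightarrow$ all arguments self-attack,'' both immediate from $\emptyset\in\cf(\F)$ and the exclusion of self-attacks; everything else is bookkeeping.
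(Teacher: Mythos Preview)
Your proposal is correct and follows essentially the same approach as the paper: both reduce all three parts to the single observation that credulous acceptance, nonemptiness, and uniqueness are governed entirely by which arguments self-attack, with $\{a_n\}\in\cf(\F)$ iff $a_n\natt a_n$ and $\uni_\cf^\icf$ being the complement of $\nemp_\cf^\icf$ in $\tot$. The only cosmetic differences are that the paper reduces the halting set $K$ directly (making argument $s$ self-defeat iff $\phi_n(n)[s]\uparrow$) rather than a generic $\Sigma^0_1$ predicate, and the paper omits your separate $\Pi^0_1$-hardness gadget for $(iii)$ since complementation already suffices.
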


\begin{restatable}{theorem}{allnaive}
\label{all naive} $(i)$ $\cred_\na^\icf$ is computable; $(ii)$ $\skep_\na^\icf$ is $\Pi^0_1$-complete; $(iii)$ $\nemp_\na^\icf$ is $\Sigma^0_1$-complete; $(iv)$ $\uni_\na^\icf$ is $\Pi^0_1$-complete.
\end{restatable}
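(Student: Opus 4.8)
The plan is to build everything on two structural facts about the naive semantics that hold in any (possibly infinite) AF. First, conflict-freeness is determined pairwise, so the union of a chain of conflict-free sets is conflict-free; by Zorn's lemma every conflict-free set extends to a maximal one, and hence $\na(\F)$ is always nonempty and every conflict-free set sits inside some naive extension. Second, writing $U_\F := \{a_n : a_n \natt a_n\}$ for the set of non-self-attacking arguments, every conflict-free set is a subset of $U_\F$, and membership $a_n \in U_\F$ is decided within $\tot$ by computing $\phi_e(n)$ and testing $n \in D_{\phi_e(n)}$. These two facts drive all four bounds.

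For $(i)$ I would show that $a_n$ lies in some naive extension iff $\{a_n\}$ is conflict-free, i.e. iff $a_n \in U_{\F_e}$: the forward direction is immediate, and the backward direction extends $\{a_n\}$ to a maximal conflict-free set. Thus $\cred_\na^\icf$ is decided by the self-attack test and is computable. For $(iii)$, a nonempty naive extension exists iff $U_{\F_e} \neq \emptyset$ (extend any single non-self-attacking argument), which is the $\Sigma^0_1$ condition $\exists n\,(n \notin D_{\phi_e(n)})$. For $\Sigma^0_1$-hardness I would, given a c.e.\ set $W$ and input $x$, build a computably finitary $\F_x$ with one argument $a_s$ per stage, decreeing $a_s \att a_s$ unless $x$ enters $W$ at stage $s$ exactly (and no other attacks); then $\F_x$ has a non-self-attacking argument iff $x \in W$, and $\phi_e$ is total so the reduction lands in $\tot$.

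For $(ii)$ the key lemma is: a non-self-attacking $a_n$ lies in every naive extension iff every argument conflicting with $a_n$ (every $a_m$ with $a_m \att a_n$ or $a_n \att a_m$) is self-attacking; a self-attacking $a_n$ is in no extension, hence, since $\na(\F)\neq\emptyset$, is never skeptically accepted. One direction uses maximality (if $a_n$ is missing from a naive $S$, some member of $S$ conflicts with it and so cannot be self-attacking), and the other extends a non-self-attacking conflicting argument to a naive extension excluding $a_n$. The finitary hypothesis makes the attackers of $a_n$ a finite, computable set, so that clause is decidable; but the arguments $a_n$ attacks may be infinitely many, so the clause $\forall m\,(a_n \att a_m \to a_m \att a_m)$ is genuinely $\Pi^0_1$. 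This asymmetry---finite in-degree but unbounded out-degree---is the crux, both for pinning the level at $\Pi^0_1$ and for the matching hardness: from a c.e.\ $W$ I take a target $a_0 \in U$ with $a_0 \att c_s$ for all $s$ and $c_s$ self-attacking unless $x$ enters $W$ at stage $s$, so routing the conflict through out-edges keeps every in-degree finite and makes $a_0$ skeptically accepted iff $x \notin W$.

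For $(iv)$ I would prove $\na(\F)$ is a singleton iff $U_\F$ is itself conflict-free: if $U_\F$ is conflict-free it is the unique maximal conflict-free set, while if two distinct non-self-attacking arguments conflict, extending each separately yields two distinct naive extensions. Hence $\uni_\na^\icf$ is the $\Pi^0_1$ statement that no two members of $U_{\F_e}$ conflict. For $\Pi^0_1$-hardness I take a non-self-attacking $p$ together with arguments $c_s$ where $c_s \in U$ iff $x$ enters $W$ at stage $s$, and set $p \att c_s$ for all $s$; then $U_{\F_x}$ is conflict-free iff no such $c_s$ exists, i.e.\ iff $x \notin W$, with all in-degrees finite. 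Since each construction yields a total $\phi_e$, all the reductions land in $\tot$ as the framework requires.
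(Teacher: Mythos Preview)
Your proposal is correct and follows essentially the same approach as the paper: the same Zorn's-lemma reduction of $\cred_\na$ and $\nemp_\na$ to the conflict-free case, the same characterization of $\skep_\na$ via ``$a$ is non-self-attacking and has no conflict with any non-self-attacking argument,'' and the same characterization of $\uni_\na$ via conflict-freeness of $U_\F$. Your hardness constructions differ only cosmetically (you build a separate AF per input rather than a single AF with an argument per input), and your remark that finitarity makes the in-neighborhood of $a_n$ decidable is accurate but not actually needed for the $\Pi^0_1$ upper bound in (ii), which holds for arbitrary computable AFs.
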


 In sharp contrast to the conflict-free and naive cases, we find  that, for the admissible, stable, and complete semantics, the finitary case is significantly  simpler than the general case. Specifically, the (non-trivial) computational problems associated with such semantics drop from being analytical in the non-finitary setting to being in low arithmetical levels for finitary AFs.

The main technique we use for providing upper bounds for the computational problems involves relating the set of accepted extensions
in $\F$ with the set of paths through a tree via a suitable encoding, summarized in the following theorem (see Section \ref{appendix: encoding extensions into trees} and \ref{supplement: ad-based} for the proofs).

\begin{restatable}{theorem}{thetreetheorem}\label{encoding extensions into trees}
    Let $\F$ be an argumentation framework, $D,E\subseteq A_\F$ and $\sigma\in \{\ad, \co, \stb\}$. Then there is a tree $\T^\F_{\sigma+D-E}$ so that the $\sigma$ extensions of $\F$ which contain $D$ and are disjoint from $E$ are in bijection with the paths through $\T^\F_{\sigma+D-E}$.

    Further, the tree $\T^\F_{\sigma+D-E}$ is uniformly computable from $\F$, $D$, and $E$. Finally, if $\F$ is a computably finitary AF, then the tree $\T^\F_{\sigma+D-E}$ is computably finitely branching.
\end{restatable}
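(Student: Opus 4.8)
The plan is to realize each extension as an infinite binary string and to let $\T=\T^\F_{\sigma+D-E}$ be the set of finite binary strings that are ``finite approximations with no detected violation.'' Throughout, I identify a subset $S\subseteq A_\F$ with its characteristic sequence $\chi_S\in 2^\omega$ (so $\chi_S(n)=1$ iff $a_n\in S$), and I identify a path $\pi\in[\T]$ with the set $S_\pi=\{a_n : \pi(n)=1\}$. These two maps are mutually inverse bijections between $2^\omega$ and the subsets of $A_\F$, so it suffices to carve $2^{<\omega}$ down to a prefix-closed subtree $\T$ whose paths are exactly the characteristic sequences of the $\sigma$-extensions containing $D$ and disjoint from $E$.

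For a string $\tau$ of length $m$ I would put $\tau\in\T$ exactly when: (a) $\tau$ \emph{respects} $D$ and $E$, i.e.\ $\tau(n)=1$ whenever $a_n\in D$ and $\tau(n)=0$ whenever $a_n\in E$, for $n<m$; (b) $\tau$ is \emph{conflict-free consistent}, i.e.\ $a_i\natt a_j$ whenever $i,j<m$ and $\tau(i)=\tau(j)=1$; and (c) a semantics-specific clause asserting that $\tau$ has not yet exhibited a definitive violation of the defining condition of $\sigma$. Clause (c) is where finitariness does the work. For $\sigma=\ad$: for every $n<m$ with $\tau(n)=1$ and every attacker $b$ of $a_n$, it is \emph{not} the case that every attacker of $b$ has index $<m$ and is assigned $0$ by $\tau$ (we have not yet ruled out $S$ counterattacking $b$). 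For $\sigma=\stb$: for every $n<m$ with $\tau(n)=0$, it is \emph{not} the case that every attacker of $a_n$ has index $<m$ and is assigned $0$ by $\tau$ (we have not yet ruled out $a_n$ being attacked by $S$). For $\sigma=\co$: the $\ad$-clause together with the clause that for every $n<m$ with $\tau(n)=0$ it is \emph{not} the case that every attacker $b$ of $a_n$ already has some attacker of index $<m$ assigned $1$ (we have not yet confirmed that $S$ defends the outside argument $a_n$, which would contradict $f_\F(S)\subseteq S$).

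The ``further'' and ``finally'' claims follow quickly. Because $\F$ is finitary, the sets $\{a_n\}^-$ and $\bigcup_{b\in\{a_n\}^-}\{b\}^-$ are finite, so clause (c) depends on only finitely many coordinates of $\tau$; given a computable index $e$ for $\F$ one computes $\{a_n\}^-$ from $\phi_e(n)$, so, together with decidability of membership in $D$ and $E$ (which we take to be given effectively), membership in $\T$ is decidable uniformly in $\F,D,E$. Each clause is prefix-closed: if some $\sigma\preceq\tau$ already witnessed a violation using coordinates below $|\sigma|$, the very same coordinates witness it below $|\tau|$, so $\tau\notin\T$; contrapositively $\T$ is closed under prefixes and is a tree. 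Finally $\T\subseteq 2^{<\omega}$, so each node has at most two children and $g(\sigma)$, the number of $i\in\{0,1\}$ with $\sigma^\smallfrown i\in\T$, is computable, giving computable finite branching.

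It remains to verify the path/extension correspondence, which I expect to be the crux. One direction is routine: if $S$ is a $\sigma$-extension containing $D$ and disjoint from $E$, then self-defense of each $a_n\in S$ (for $\ad$), $S$ attacking every outside argument (for $\stb$), or $f_\F(S)=S$ (for $\co$) guarantees that no prefix of $\chi_S$ can trigger clause (c), so $\chi_S\in[\T]$. The substantive direction is the limit argument for $\pi\in[\T]$. Finitariness lets me always push the length $m$ past the finitely many relevant coordinates: if $S_\pi$ failed to defend some $a_n\in S_\pi$ via an attacker $b$, then $\{b\}^-$ is finite and is eventually decided entirely $0$ along $\pi$, so a long enough prefix $\pi\restriction m$ would violate the $\ad$-clause, a contradiction; the $\stb$- and $\co$-clauses are handled symmetrically. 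Hence $S_\pi$ satisfies the defining condition of $\sigma$ and contains $D$ while avoiding $E$. The hard part is precisely this equivalence between ``never detecting a violation at a finite stage'' and ``satisfying the semantic condition in the limit,'' and it is exactly where finitariness is indispensable: for a non-finitary $b$ the clause ``every attacker of $b$ is assigned $0$'' could never be triggered at any finite stage, and $\T$ would admit spurious paths whose associated sets are not admissible.
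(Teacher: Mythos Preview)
Your binary-tree construction is clean and correct when $\F$ is finitary, but it does not prove the theorem as stated: the bijection and the uniform computability of $\T^\F_{\sigma+D-E}$ are asserted for an \emph{arbitrary} argumentation framework, with finitarity entering only in the final clause. You yourself observe in your last sentence that for a non-finitary attacker $b$ clause~(c) may never fire and $[\T]$ then admits spurious paths; this is a genuine failure of the bijection, not merely a side remark. Concretely, take $A_\F=\{a,b\}\cup\{c_i:i\in\omega\}$ with $b\att a$ and each $c_i\att b$: your tree contains the path $\chi_{\{a\}}$ even though $\{a\}$ is not admissible. More structurally, no subtree of $2^{<\omega}$ can realize the bijection in general, because $[\T]$ is then compact while $\ad(\F)$ (and likewise $\stb(\F)$, $\co(\F)$) need not be closed in $2^{A_\F}$ for non-finitary $\F$ --- in the example, the admissible sets $\{a,c_i\}$ converge to the non-admissible $\{a\}$. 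Your computability argument has the same scope problem: it invokes the computably-finitary indexing in which $\phi_e(n)$ is the canonical index of $\{a_n\}^-$, which is unavailable for a general computable $\F$.

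The paper handles arbitrary $\F$ by working in $\omega^{<\omega}$ and letting the branching \emph{name the Skolem witnesses}. For $\stb$, one sets $\sigma(n)=0$ if $a_n\in S$ and $\sigma(n)=k+1$ for the least $a_k\in S$ attacking $a_n$ otherwise; for $\ad$, odd levels record which element of $S$ counterattacks the $n$th listed attack; for $\co$, the path additionally encodes $S^+$. Because each coordinate records a single witness rather than asserting a universal over an attacker set, membership in $\T$ is decidable from any computable $\F$, and the path/extension correspondence holds unconditionally. Finitarity is then used exactly once, to bound the number of possible witnesses and hence the branching. Your approach is more elementary and does cover every finitary application in the paper, but it establishes a strictly weaker statement than the one you were asked to prove.
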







\begin{restatable}{theorem}{easyuppers}
\label{easy uppers}
For $\sigma\in\{\ad,\co,\stb\}$, $\cred^\icf_\sigma$ is $\Pi^0_1$-complete. $\ex^\icf_\stb$ is also $\Pi^0_1$-complete.
\end{restatable}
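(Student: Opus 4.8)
The plan is to get every upper bound from the tree encoding of \Cref{encoding extensions into trees} together with König's lemma, and every matching lower bound by a direct reduction from the $\Pi^0_1$-complete set $\overline{K}=\{x:\phi_x(x)\uparrow\}$. For the upper bound on $\cred^\icf_\sigma$, fix $e\in\tot$ and $n$ and apply \Cref{encoding extensions into trees} with $D=\{a_n\}$ and $E=\emptyset$: the $\sigma$-extensions of $\F_e$ containing $a_n$ biject with the paths through $\T:=\T^{\F_e}_{\sigma+\{a_n\}-\emptyset}$, which is uniformly computable and (uniformly) computably finitely branching. Hence $\langle e,n\rangle\in\cred^\icf_\sigma$ iff $[\T]\neq\emptyset$. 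Using the computable branching function we compute, uniformly in $(e,n,m)$, the finite set of nodes of $\T$ at level $m$, so ``$\T$ has a node at level $m$'' is a computable predicate; by König's lemma a finitely branching tree has a path iff it has infinite height, so $[\T]\neq\emptyset$ iff $(\forall m)(\T\text{ has a node at level }m)$, which is $\Pi^0_1$. The same argument with $D=E=\emptyset$ and $\sigma=\stb$ shows $\ex^\icf_\stb$ is $\Pi^0_1$ within $\tot$.

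For $\Pi^0_1$-hardness of $\cred^\icf_\sigma$ I would build, uniformly in $x$, a computably finitary $\F_x$ with a distinguished argument $a$ that is credulously accepted iff $x\in\overline{K}$. The skeleton is an infinite alternating chain $a\leftarrow e_0\leftarrow d_0\leftarrow e_1\leftarrow d_1\leftarrow\cdots$ (i.e.\ $e_0\att a$, $d_s\att e_s$, and $e_{s+1}\att d_s$), whose point is that defending $a$ forces \emph{every} $d_s$ into the extension and excludes every $e_s$; when the chain is infinite, $\{a\}\cup\{d_s:s\in\omega\}$ is admissible, and together with the extra argument $c$ below it is also complete and stable. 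I then adjoin one argument $c$ of in-degree $0$ and set $c\att d_s$ exactly when $\phi_x(x)$ halts at stage $s$; this is computably finitary, since every in-degree is at most $2$ and decidable by running $\phi_x(x)$ for finitely many steps. If $x\in\overline{K}$ then $c$ attacks nothing and $a$ is accepted; if $\phi_x(x){\downarrow}$ at stage $s_0$, then any extension witnessing $a$ is forced to contain $d_{s_0}$, which is attacked by the unattackable $c$ and so cannot be defended (and for $\co,\stb$ the argument $c$, being unattacked, lies in every such extension, conflicting with $d_{s_0}$), so $a$ is accepted by no extension. The step I expect to be the main obstacle is exactly this gadget design: a naive single chain attacking $a$ makes the forced inclusions parity-dependent, so I must arrange the alternating $e/d$ chain so that the forced set is \emph{always} $\{a\}\cup\{d_s\}$, independent of parity, and so that the one construction works verbatim for all of $\ad,\co,\stb$.

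For $\Pi^0_1$-hardness of $\ex^\icf_\stb$ I would reduce existence to credulous acceptance by prepending a forcing gadget: add a self-attacking argument $u$ (so $u\att u$) whose only other attacker is a new argument $w$ (so $w\att u$), and attach the chain-plus-$c$ gadget above to $w$ in place of $a$. Since $u\att u$, every stable extension must attack $u$, hence must contain $w$; thus a stable extension exists iff some stable extension contains $w$, which by the previous analysis happens iff $x\in\overline{K}$. It then remains to verify the two cases: in the divergent case $\{w,c\}\cup\{d_s:s\in\omega\}$ is genuinely stable (it attacks $u$ and every $e_s$ and is conflict-free), while in the halting case the forced membership of $w$ propagates down the chain to force $d_{s_0}$, conflicting with the forced $c$, so no stable extension exists.

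In all three reductions the maps $x\mapsto\F_x$ (paired with the index of the distinguished argument in the $\cred$ cases) are computable and always output computably finitary frameworks, so they land in $\tot\times\nat$ (resp.\ $\tot$). Combined with the $\Pi^0_1$ upper bounds and the $\Pi^0_1$-completeness of $\overline{K}$, composing reductions yields $\Pi^0_1$-completeness of $\cred^\icf_\sigma$ within $\tot\times\nat$ for each $\sigma\in\{\ad,\co,\stb\}$, and of $\ex^\icf_\stb$ within $\tot$.
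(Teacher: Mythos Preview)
Your upper-bound argument is the same as the paper's: both invoke \Cref{encoding extensions into trees} and K\"onig's lemma to see that ``the computably finitely branching tree has a path'' is a $\Pi^0_1$ condition.

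For the lower bounds you take a genuinely different route. The paper appeals to the off-the-shelf encoding of trees into AFs (their Theorem~\ref{encoding trees into AFs}): from any computably finitely branching tree $\T$ one gets a computably finitary $\F^\T$ whose non-empty admissible, complete, and stable extensions all coincide and biject with paths through $\T$, and which carries a distinguished argument $a_\lambda$ lying in every such extension; then $a_\lambda\in\cred^\icf_\sigma(\F^\T)$ iff $\T$ has a path, and likewise $\F^\T\in\ex^\icf_\stb$ iff $\T$ has a path, reducing from the $\Pi^0_1$-complete problem of path existence in computably finitely branching trees. Your approach instead builds an explicit, hand-crafted gadget (the alternating $e/d$ chain with a single unattacked saboteur $c$) and reduces directly from $\overline{K}$. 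Your analysis is correct: the chain forces $\{d_s:s\in\omega\}$ into any admissible extension containing the distinguished argument, and the single attack $c\att d_{s_0}$ kills all such extensions exactly when $\phi_x(x)$ halts; the same gadget, with the $u,w$ prefix, handles $\ex^\icf_\stb$. The paper's route is more modular and reuses machinery already in place, while yours is self-contained and avoids citing the tree-to-AF lemma; both yield the result with comparable effort.
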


\begin{restatable}{theorem}{OTHupper}
\label{NE upper in paper}\label{unico upper in paper}\label{unistb upper}
   $(i)$ $\nemp^\icf_\stb$ is $\Pi^0_1$-complete; $(ii)$ $\nemp^\icf_\ad$ and $\nemp^\icf_\co$ are each $\Sigma^0_2$-complete; $(iii)$ $\skep^\icf_\co$ and $\skep^\icf_\stb$ are $\Sigma^0_1$-complete; $(iv)$ For $\sigma\in \{\ad,\co,\stb\}$, $\uni^\icf_\sigma$ are each $\Pi^0_2$-complete.
\end{restatable}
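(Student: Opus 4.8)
The plan is to read every upper bound off Theorem~\ref{encoding extensions into trees} together with K\"onig's lemma, and then to match each with an explicit reduction, all relativized to $\tot$. The key observation is that for a \emph{computably finitely branching} tree $\T$, deciding whether $\T$ has a path is $\Pi^0_1$ (by K\"onig's lemma a path exists iff $\T$ has infinite height, and since we can compute the finitely many nodes at each level, ``$\T$ has a node at level $k$'' is decidable, so ``infinite height'' is $\forall k$ of a decidable predicate), while ``$\T$ has \emph{no} path'' is $\Sigma^0_1$ (no path iff finite height iff $\exists k$ the $k$-th level is empty). For $(i)$, since $A_{\F_e}$ is infinite the empty set is never stable, so $\nemp^\icf_\stb=\ex^\icf_\stb$, which is $\Pi^0_1$-complete by Theorem~\ref{easy uppers}. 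For $(ii)$, a non-empty $\sigma$-extension exists iff some $a_n$ is credulously accepted, i.e.\ $\nemp^\icf_\sigma=\{e:\exists n\,(\langle e,n\rangle\in\cred^\icf_\sigma)\}$; as $\cred^\icf_\sigma$ is $\Pi^0_1$ (Theorem~\ref{easy uppers}), this is $\Sigma^0_2$. For the first half of $(iii)$, $\skep^\icf_\co(\F)$ is exactly the grounded extension $\bigcup_k f_\F^k(\emptyset)$ (the intersection of all complete extensions is the least one, reached at stage $\omega$ because $\F$ is finitary), and membership in each $f_\F^k(\emptyset)$ is decidable uniformly in $k$ by a bounded recursion over the computable finite attacker-sets, so $a_n\in\skep^\icf_\co(\F)$ iff $\exists k\,(a_n\in f_\F^k(\emptyset))$ is $\Sigma^0_1$; for $\skep^\icf_\stb$, $a_n$ lies in every stable extension iff no stable extension avoids it, i.e.\ iff $\T^\F_{\stb-\{a_n\}}$ has no path, which is $\Sigma^0_1$. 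For $(iv)$, writing ``$\sigma$ has $\geq 2$ extensions'' as ``$\exists n$ such that both $\T^\F_{\sigma+\{a_n\}}$ and $\T^\F_{\sigma-\{a_n\}}$ have paths'' (a $\Sigma^0_2$ condition), uniqueness becomes (at least one)$\,\wedge\,\neg(\geq 2)$, which is $\Pi^0_2$ since existence is trivial for $\ad,\co$ and $\Pi^0_1$ for $\stb$.

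For the lower bounds I would reduce the benchmark sets $K$ ($\Sigma^0_1$-complete), $\Fin$ ($\Sigma^0_2$-complete), and $\Inf$ ($\Pi^0_2$-complete), each reduction producing a genuine computably finitary AF so that its index lies in $\tot$. The recurring gadget is an attack-ray $p_0,p_1,p_2,\dots$ with $p_{k+1}\att p_k$, whose relevant features are that an infinite ray carries the admissible set $\{p_0,p_2,\dots\}$ and exactly two stable extensions, whereas a ray capped by an unattacked top carries a unique stable extension. For the $\Sigma^0_1$ results I use a single ray whose attacks $p_{k+1}\att p_k$ hold iff $\phi_n(n)$ has not halted by step $\lfloor k/2\rfloor$; this is decidable and finitary, the ray is infinite iff $n\notin K$ and otherwise caps at an even height iff $n\in K$, so $p_0$ lies in the grounded extension (equivalently, in every stable extension) exactly when $n\in K$, giving $\Sigma^0_1$-hardness of $\skep^\icf_\co$ and $\skep^\icf_\stb$.

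For the $\Sigma^0_2$ and $\Pi^0_2$ results the finitary constraint is the crux: we may not attack a fixed argument infinitely often, so a growth event in $W_n$ cannot simply pile attacks onto one vertex. Instead, for each guess $N$ I build a ray $R_N=(p^N_k)_k$ (with $p^N_{k+1}\att p^N_k$ always present) that stays a clean infinite ray while $W_n$ shows no growth after stage $N$, and that is sabotaged once growth occurs. For $\nemp^\icf_\ad$ and $\nemp^\icf_\co$ (reducing $\Fin$) the sabotage is a self-attacking ``poison'' $y^N_k$ with $y^N_k\att y^N_k$ and $y^N_k\att p^N_k$, installed decidably iff $W_n$ grows at some stage in $(N,k]$; a poisoned tail makes every node of $R_N$ undefendable, so $\F_n$ has a non-empty admissible (equivalently complete) extension iff some $R_N$ stays clean iff $W_n$ is finite. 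The same framework has $\emptyset$ as its unique complete (resp.\ admissible) extension exactly when $W_n$ is infinite, which by reducing $\Inf$ yields the $\Pi^0_2$-hardness of $\uni^\icf_\ad$ and $\uni^\icf_\co$.

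The hardest case, and the one I expect to require the most care, is the $\Pi^0_2$-hardness of $\uni^\icf_\stb$, because the poison gadget destroys \emph{all} stable extensions (an unattacked self-loop can never be stably attacked) and so cannot be reused. Here I would take $\F_n$ to be the disjoint union of components $G_k$, each an independent ray whose attacks are switched off---capping the ray with an \emph{unattacked} top rather than a self-loop---once $W_n$ grows after stage $k$. Since the stable extensions of a disjoint union factor as a product over components, and a component contributes two stable choices when its ray is infinite and exactly one when it is capped, $\F_n$ has a unique stable extension iff every $G_k$ is capped iff $W_n$ is infinite. The remaining obstacle throughout is purely bookkeeping: verifying that each AF is genuinely \emph{computably} finitary (all attacker-sets finite and uniformly computable, and every ``cap/poison at stage $s$'' trigger phrased as a decidable finite search) and that no unintended extensions sneak in across the infinitely many components.
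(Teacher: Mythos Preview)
Your proposal is correct. The upper bounds match the paper's almost exactly, with one pleasant variation: for $\skep^\icf_\co$ you compute the grounded extension directly as $\bigcup_k f_\F^k(\emptyset)$ (using $\omega$-continuity of $f_\F$ in the finitary case and the decidability of each iterate), whereas the paper uniformly invokes the tree $\T^\F_{\sigma+\emptyset-\{a\}}$ for both $\co$ and $\stb$. Both arguments are short; yours is arguably the more transparent one for $\co$.

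The hardness arguments differ more substantially. The paper does not build explicit rays: it relies on the black-box encoding $\F^\T$ of Theorem~\ref{encoding trees into AFs} (imported from \cite{andrews2024FCR}), reduces an arbitrary $\Sigma^0_2$ set to a uniform sequence of computably finitely branching trees $(\T_i)_i$, and takes $\F_n=\bigsqcup_i \F^{\T_i}$; for $\uni^\icf_\stb$ it adds a guaranteed path to each tree ($\hat\T_i$) so that every component contributes at least one stable extension. Your explicit ray-with-poison and ray-with-cap gadgets accomplish the same reductions but are self-contained, avoiding the cited lemma entirely; this is a genuine advantage if one wants the paper to stand alone, at the cost of a bit more case-checking (e.g.\ verifying that $f_\F(\emptyset)=\emptyset$ in the poison construction, and that a capped ray plus its leftover isolated vertices still has a unique stable extension regardless of the cap's parity). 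Your part~(iii) hardness is essentially the paper's finite-chain construction, though the paper packs all inputs into a single AF (one chain per $n$) and then argues $\skep_\stb(\F)$ equals the grounded extension, while you build one AF per input; both are fine.
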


\section{Infinite conflict-free and infinite naive\label{sec:infcf infna}}

We now turn to the analysis of $\infcf$ and $\infna$. With the exception of $\uni_\infcf^\icf$, the computational problems associated with these semantics turn out to be  maximally hard (see Section \ref{supplement: sect 6} for the proofs).

\begin{restatable}{theorem}{credexinfcfinfna}
    \label{cred ex infcf infna}\label{cred infcf infna}
  $(i)$  $\cred^\icf_\infcf$ and $\ex^\icf_\infcf$ are $\Sigma^1_1$-complete; $(ii)$  $\cred^\icf_\infna$ and $\ex^\icf_\infna$ are $\Sigma^1_1$-complete;     $(iii)$ $\skep^\icf_\infcf$ and $\skep^\icf_\infna$ are $\Pi^1_1$-complete. 
\end{restatable}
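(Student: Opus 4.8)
The plan is to get the upper bounds by quantifier counting and the lower bounds from a single uniform coding of computable trees into computably finitary AFs, reducing from the $\Sigma^1_1$-complete benchmark $\mathbf{Path}$ (and from its $\Pi^1_1$-complete complement for the skeptical cases). For the upper bounds, fix $S\subseteq A_{\F_e}$; since $\F_e$ is computably finitary, the predicates ``$S$ is conflict-free'' (which is $\Pi^0_1$ in $S$), ``$S$ is naive'' (conflict-freeness plus an arithmetical maximality clause), and ``$S$ is infinite'' (which is $\Pi^0_2$ in $S$) are all arithmetical in $S$. Hence, for $\sigma\in\{\infcf,\infna\}$, the problems $\cred^\icf_\sigma$ and $\ex^\icf_\sigma$ have the form $\exists S\,(\text{arithmetical in }S)$ and are $\Sigma^1_1$, while $\skep^\icf_\sigma$ has the form $\forall S\,(\cdots)$ and is $\Pi^1_1$. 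All reductions below land in $\tot$, so these yield completeness within $\tot$.

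The heart of the argument is the coding, which must be built from scratch since Theorem \ref{encoding extensions into trees} covers only $\ad,\co,\stb$. Given a computable index for a tree $\T$, I would enumerate $\T$ without repetition as $\tau_0,\tau_1,\tau_2,\dots$, introduce one argument $a_{\tau_k}$ for each node, and declare $a_{\tau_j}\att a_{\tau_k}$ exactly when $j<k$ and $\tau_j,\tau_k$ are incomparable. The decisive point is that \emph{orienting} the incomparability relation along the enumeration keeps the AF finitary: the attackers of $a_{\tau_k}$ lie in $\{a_{\tau_j}:j<k\}$, a finite set computable from $k$, so the AF is computably finitary even when $\T$ branches infinitely. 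By design the conflict-free sets are precisely the $\preceq$-chains of $\T$, and an infinite chain is exactly a path; thus the infinite conflict-free sets correspond to the paths of $\T$, and the infinite naive sets correspond to the maximal chains that are paths. To keep the argument set infinite regardless of whether $\T$ is finite, I adjoin a transitive tournament $d_0\att d_1\att\cdots$ (with $d_i\att d_j$ for all $i<j$), which is finitary and contributes at most one vertex to any conflict-free set.

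The hardness claims then read off this correspondence. For $\ex^\icf_\infcf$ and $\ex^\icf_\infna$, the AF has an infinite conflict-free (equivalently, infinite naive) set iff $\T$ has a path, giving a reduction from $\mathbf{Path}$ and hence $\Sigma^1_1$-hardness. For the credulous problems I adjoin an isolated argument $b$: then $b$ lies in some infinite conflict-free set iff one exists, and $b$ is forced into every naive set (by maximality), so $b$ lies in some infinite naive set iff one exists; either way, credulous acceptance of $b$ is equivalent to $e\in\mathbf{Path}$. For the skeptical problems I reduce from $\overline{\mathbf{Path}}$: if $\T$ has no path there are no infinite conflict-free (resp.\ naive) sets, so every argument is vacuously skeptically accepted, while if $\T$ has a path I exhibit an infinite conflict-free set omitting $b$ (the path-chain itself), and an infinite naive set omitting the tournament vertex $d_0$ (the path-chain together with $d_1$). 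Hence $b\in\skep^\icf_\infcf$ and $d_0\in\skep^\icf_\infna$ precisely when $\T$ has no path.

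The main obstacle is exactly the finitariness constraint, which forbids the direct ``incomparable arguments attack each other'' coding available in the non-finitary setting (there each $a_{\tau_k}$ would have infinitely many attackers); the construction hinges on the observation that finitariness bounds only the number of \emph{attackers}, not the number of arguments a single argument \emph{attacks}, so the incomparability graph can be computably oriented into a finitary one while conflict-freeness still forces chains. The remaining work is the routine verification that conflict-free sets are exactly chains, that infinite chains are exactly paths, that naive sets are maximal chains together with a single tournament vertex (and $b$ when present), and that the padding tournament introduces no spurious infinite extensions.
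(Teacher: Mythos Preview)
Your proposal is correct and shares the paper's central idea: encode a computable tree $\T$ into a computably finitary AF by taking one argument per node and making $a_{\tau_j}\att a_{\tau_k}$ precisely when $j<k$ and $\tau_j,\tau_k$ are $\preceq$-incomparable, so that conflict-free sets are chains and the orientation along the enumeration forces finitariness. That is exactly the paper's construction.

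Where you diverge is in the auxiliary machinery. For $(ii)$ the paper does not re-analyze the construction; it simply observes (via Zorn) that every conflict-free set extends to a naive one, so $\ex^\icf_\infna=\ex^\icf_\infcf$ and $\cred^\icf_\infna=\cred^\icf_\infcf$ as \emph{sets}, immediately transferring $(i)$. For $\cred$ the paper uses the root argument $a_0$ directly (it lies in every chain up to enlargement), avoiding your isolated vertex $b$. For $(iii)$ the paper's trick is cleaner than your tournament-and-$b$ gadget: it adjoins a single self-attacking argument $a$, which lies in no conflict-free set whatsoever, so $a$ is skeptically accepted for $\infcf$ (and hence $\infna$) iff there is no infinite conflict-free extension, i.e.\ iff $\T$ has no path. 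Your route works, but it forces you to juggle two different witnesses ($b$ for $\infcf$, $d_0$ for $\infna$) precisely because an isolated vertex sits in every naive set; the self-attacker sidesteps this entirely. Your padding tournament to keep $A_\F$ infinite is a reasonable precaution, though one can equally assume without loss that $\T$ is infinite when reducing from $\mathbf{Path}$.
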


\begin{restatable}{theorem}{uniinfcf}
\label{uni infcf}
 $(i)$  $\uni^\icf_\infcf$ is trivial; $(ii)$  $\uni^\icf_\infna$ is $\Pi^1_1$-complete.
\end{restatable}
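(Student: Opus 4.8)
The plan is to show that $\uni^\icf_\infcf=\emptyset$, so that it is trivially decidable. The key observation is that being infinite and conflict-free is inherited by infinite subsets: if $S$ is an infinite conflict-free set and $a\in S$, then $S\setminus\{a\}$ is again infinite and conflict-free, and it is distinct from $S$. Hence $\infcf(\F_e)\neq\emptyset$ forces $|\infcf(\F_e)|\geq 2$ (indeed infinitely many), so $|\infcf(\F_e)|=1$ is impossible for every $e$. Thus $\uni^\icf_\infcf=\emptyset$.

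\textbf{Part (ii), the characterization.} For the naive case I would first collapse the counting of infinite naive extensions to a single canonical set. Let $U=U(\F)=\{a\in A_\F : a \text{ belongs to some infinite conflict-free set}\}$. Using that every infinite conflict-free set extends (by Zorn) to an infinite naive extension, and that a maximal conflict-free set contained in a conflict-free set must coincide with it, I would prove: \emph{$\F$ has exactly one infinite naive extension iff $U$ is conflict-free and infinite, in which case $U$ is that extension.} The forward direction is because the unique infinite naive extension absorbs every infinite conflict-free set, so it equals $U$; the backward direction is because a conflict-free infinite $U$ is itself maximal, and any infinite naive $T\subseteq U$ with $T$ maximal forces $T=U$. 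Moreover, \emph{``$\F$ has at most one infinite naive extension'' is equivalent to ``$U$ is conflict-free''}: if two infinite conflict-free sets conflict, their naive extensions are distinct. Since ``$U$ is conflict-free'' unwinds to $\forall X\,\forall Y\,(X,Y \text{ infinite conflict-free}\to X\cup Y \text{ conflict-free})$, the ``at most one'' half is $\Pi^1_1$ for free.

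\textbf{Part (ii), the obstacle and the $\Pi^1_1$ upper bound.} The genuinely delicate point is certifying the \emph{existence} half, namely that $U$ is infinite, without leaving $\Pi^1_1$. In isolation this is $\ex^\icf_\infcf$, which Theorem~\ref{cred ex infcf infna} shows is $\Sigma^1_1$-complete, so the naive conjunction only gives $\Sigma^1_1\cap\Pi^1_1$; fusing the $\Sigma^1_1$ existence clause into the $\Pi^1_1$ ``at most one'' clause is exactly the hard part and the place where I expect finitariness to be essential. The route I would attempt is to work \emph{conditioned on ``$U$ conflict-free''}: under this hypothesis all infinite conflict-free sets are mutually compatible, and I would try to read off from the finitary attack data (where each $\{a\}^-$ is finite) a tree, in the spirit of Theorem~\ref{encoding extensions into trees}, whose paths code the infinite conflict-free subsets of $U$ and which is computably finitely branching; K\"onig's lemma would then make ``$U$ infinite'' arithmetical relative to the $\Pi^1_1$ skeleton, keeping the whole statement $\Pi^1_1$. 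Equivalently, one shows the complement is $\Sigma^1_1$ by guessing either a conflicting pair of infinite conflict-free sets or a finitary well-foundedness certificate witnessing that no infinite naive extension exists. I flag this absorption step as the main obstacle.

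\textbf{Part (ii), hardness.} For $\Pi^1_1$-hardness I would reduce from well-foundedness of computable trees, i.e. from the complement of $\mathbf{Path}$, which is $\Pi^1_1$-complete. Given a tree $T$, I would build a computably finitary AF $\F$ carrying a fixed ``default'' gadget that always contributes a single infinite naive extension $S_0$, together with, for each node of $T$, arguments wired so that an infinite branch of $T$ activates a second infinite naive extension incompatible with $S_0$. The construction guarantees at least one infinite naive extension in every case, and a second one precisely when $T$ has an infinite path; hence $\F$ has a unique infinite naive extension iff $T$ is well-founded. Because existence is built into the image of the reduction, this cleanly isolates the $\Pi^1_1$ ``at most one'' content and matches the upper bound.
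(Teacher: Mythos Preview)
Part (i) is correct and identical to the paper's argument.

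For Part (ii), your upper bound has a genuine gap at exactly the place you flag. Your proposed absorption of the existence clause via a computably finitely branching tree cannot work: finitariness only bounds the set of \emph{attackers} of each argument, whereas the branching needed to enumerate candidate elements of a conflict-free (or naive) set is governed by the full conflict relation in both directions, which is unbounded even in a finitary AF. Indeed, the paper explicitly shows that $\ex^\icf_\infcf$ remains $\Sigma^1_1$-complete in the finitary setting, so there is no hope of making ``$U$ is infinite'' arithmetical by appealing to K\"onig's Lemma. Your characterization via $U$ is correct, but it only yields the naive $d$-$\Sigma^1_1$ bound, not $\Pi^1_1$.

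The paper's route is entirely different and does not use finitariness at all. It builds a (not finitely branching) computable tree $\T$ whose paths are in bijection with the infinite naive extensions of $\F$ (Lemma~\ref{lem:infnalem}), so that $\F\in\uni^\icf_\infna$ iff $\T$ has exactly one path. The $\Pi^1_1$ upper bound then comes from the non-obvious descriptive-set-theoretic fact that $\mathbf{UniPath}=\{e:\T_e\text{ has exactly one path}\}$ is $\Pi^1_1$ rather than merely $d$-$\Sigma^1_1$ (the paper cites Kechris, Theorem~18.11). This is the missing ingredient in your argument.

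For hardness, the paper reduces from $\mathbf{UniPath}$ directly, reusing the encoding of Theorem~\ref{cred ex infcf infna}(i) in which naive extensions correspond to paths; this avoids having to manufacture a default infinite naive extension. Your well-foundedness reduction is plausible if you realize it by adjoining a single extra branch to the tree before applying that encoding, but the ``separate default gadget'' phrasing is dangerous: making every tree-node argument conflict with the default component typically destroys finitariness.
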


\begin{proposition}\label{infcf, infna infinitary}
    For each $\mathcal{P}\in \{\cred,\skep,\ex,\nemp,\uni\}$ and $\sigma\in \{\infcf,\infna\}$, $\mathcal{P}^\infty_\sigma$ has the same complexity as $\mathcal{P}^\icf_\sigma$.
\end{proposition}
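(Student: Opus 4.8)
The plan is to handle the lower and the upper bounds separately and, for each problem, to show that $\mathcal{P}^\infty_\sigma$ is complete for exactly the class recorded for $\mathcal{P}^\icf_\sigma$ in Theorems~\ref{cred ex infcf infna} and~\ref{uni infcf} (equivalently, the $\infcf$ and $\infna$ rows of Table~\ref{finitaryTable}). The organizing observation is that $\infcf$ and $\infna$ are \emph{absolute} semantics: whether a set $S$ is an infinite conflict-free, respectively infinite naive, extension of $\F$ depends only on $\F$ as an abstract framework together with its (computable) attack relation, and not on whether $\F$ is presented as finitary. Hence the finitary and the general computable indexings differ only in bookkeeping, and both the hardness reductions and the analytical upper bounds port between the two settings.

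For the lower bounds I would start from the reductions of Theorems~\ref{cred ex infcf infna} and~\ref{uni infcf}. Each of those witnesses $\Gamma$-hardness of $\mathcal{P}^\icf_\sigma$ by producing \emph{computably finitary} AFs, and every computably finitary AF is in particular a computable AF. From a finitary index $e$ one uniformly computes an index $e'$ for the same framework in the general indexing (put $\phi_{e'}(\langle n,m\rangle)=1$ iff $n\in D_{\phi_e(m)}$, and $0$ otherwise; this is total exactly when $\phi_e$ is). Composing the original reduction with $e\mapsto e'$ yields a computable reduction into computable-AF indices within $\tot$, and since the $\sigma$-extensions of $\F_e$ and $\F_{e'}$ coincide, the composite witnesses $\Gamma$-hardness of $\mathcal{P}^\infty_\sigma$.

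For the upper bounds the point is that the estimates for these two semantics never exploit finitarity; they are the naive analytical bounds obtained by counting quantifiers, and these hold for arbitrary computable AFs. Indeed ``$S$ is an infinite conflict-free (resp.\ naive) extension of $\F_e$'' is an arithmetical predicate of $S$, uniformly in $e$, so $\cred^\infty_\sigma$, $\ex^\infty_\sigma$ and $\nemp^\infty_\sigma$ — each of the form $\exists S\,(\ldots)$ — are $\Sigma^1_1$, while $\skep^\infty_\sigma$, of the form $\forall S\,(\ldots)$, is $\Pi^1_1$. For $\uni^\infty_\infcf$ the triviality is structural and survives unchanged: any infinite conflict-free set properly contains another infinite conflict-free set (delete one argument), so $\infcf(\F)$ never has exactly one element. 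Since $\tot$ is $\Pi^0_2$ and the classes $\Sigma^1_1,\Pi^1_1$ are closed under intersection with $\Pi^0_2$ sets, working within $\tot$ does not affect these completeness results (cf.\ Remark~\ref{rem:inTot}).

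The main obstacle is the upper bound for $\uni^\infty_\infna$, which must land in $\Pi^1_1$ even though the immediate ``at least one and at most one'' reading only gives the conjunction of a $\Sigma^1_1$ and a $\Pi^1_1$ condition. To clear it I would reproduce the proof of the $\Pi^1_1$ bound for $\uni^\icf_\infna$ from Theorem~\ref{uni infcf} and verify that finitarity is never invoked — in particular that the existence clause is absorbed into the universal statement without appeal to finite branching or K\"onig's lemma — so that the very same argument applies to an arbitrary computable AF. Once this single case is verified, every upper bound meets its matching lower bound and agrees with the finitary entry, which is exactly the assertion of the proposition.
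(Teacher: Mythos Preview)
Your proposal is correct and follows essentially the same approach as the paper: the paper's proof is the two-line observation that all upper-bound arguments in Section~\ref{sec:infcf infna} already apply to arbitrary computable AFs (in particular, Lemma~\ref{lem:infnalem} is stated for any computable $\F$, which is exactly what you need for the $\uni^\infty_\infna$ case), while the hardness reductions, producing computably finitary AFs, a fortiori witness hardness in the general setting. You have simply spelled out these two points in more detail.
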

\begin{proof}
    In this section, all the proofs of the upper-bounds work for $\mathcal{P}^\infty_\sigma$, whereas the complexity of $\mathcal{P}^\icf_\sigma$ is always a lower-bound for the complexity of $\mathcal{P}^\infty_\sigma$.
\end{proof}

\section{Infinite admissible\label{sec:infad}}

We now turn to considering the semantics of infinite admissible extensions. 
There is no reason to expect a simple characterization of any of the computational problems for $\infad$, $\infstb$, or $\infco$ in the finitary setting. In the non-finitary setting they are maximally hard (see Table \ref{infinitaryTable}), and they could end up being $\Sigma^1_1$- or $\Pi^1_1$-complete in the finitary setting as well, as happened in the case of $\infcf$ or $\infna$ (Theorems \ref{cred ex infcf infna} and \ref{uni infcf}). 

Yet, in this section we will show a surprisingly nice characterization for $\infad$ in the finitary setting.
This characterization relies on the following theorem which is  a purely combinatorial result about finitary AFs. We believe that this result will be of interest beyond its immediate applicability as a tool to understand the interplay between our computational problems. For example, one immediate consequence of Theorem \ref{Infinitely many finie implies one infinite} is the following unexpected relation between two distinct semantics: For $\F$ finitary, $|\cred_\ad(\F)|=\infty \Leftrightarrow \F \in \ex_\infad$.

\begin{theorem}\label{Infinitely many finie implies one infinite}
    Let $\F=(A_\F,R_\F)$ be a finitary argumentation framework. Fix $D\subseteq A_\F$ and $E\subseteq A_\F$. Let $Y$ be the set of arguments $b\in A_\F$ so that there exists some admissible extension $X_b$ so that $D\cup \{b\}\subseteq X_b$ and $E\cap X_b = \emptyset$.
    
    Then $D$ is contained in an infinite admissible extension which is disjoint from $E$ iff the set $Y$ is infinite.
\end{theorem}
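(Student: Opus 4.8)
The statement is an ``iff'' whose forward direction is immediate and whose backward direction carries all the weight; notably, since this is a purely combinatorial assertion I would prove it directly, without invoking the tree of Theorem~\ref{encoding extensions into trees}. \emph{Forward direction.} Suppose $X$ is an infinite admissible extension with $D\subseteq X$ and $X\cap E=\emptyset$. For every $b\in X$ the extension $X$ itself witnesses membership in $Y$ (take $X_b=X$), so $X\subseteq Y$ and $Y$ is infinite. \emph{Backward direction.} Assuming $Y$ infinite, if some witness $X_b$ is already infinite we are done, so I would assume every admissible extension containing $D$ and disjoint from $E$ that arises is finite. The finite witnesses $\{X_b:b\in Y\}$ then cover the infinite set $Y$, so there are infinitely many \emph{distinct} finite admissible extensions containing $D$ and disjoint from $E$; fix an infinite family $\mathcal{M}=\{M_0,M_1,\dots\}$ of them.

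Next I would 2-color each pair $\{M_i,M_j\}$ according to whether $M_i\cup M_j$ is conflict-free (call the pair \emph{compatible}) or not (\emph{conflicting}), and apply Ramsey's theorem to obtain an infinite homogeneous subfamily. In the compatible case the union $M=\bigcup_i M_i$ of the subfamily is conflict-free (any conflict would lie inside a single $M_i\cup M_j$), self-defending (each element is defended by the $M_i$ it came from, and defense is monotone in the defending set), hence admissible; it contains $D$, is disjoint from $E$, and is infinite because infinitely many distinct finite sets cannot have a finite union. So this case is clean.

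The conflicting case is the heart. The first key observation uses admissibility: if $x\in M_i$ attacks $y\in M_j$ then, since $M_j$ defends $y$, some element of $M_j$ attacks $x$; thus conflicting admissible pairs attack \emph{mutually}, and in particular every $M_j$ meets the set $\mathrm{Att}(M_i):=\bigcup_{x\in M_i}\{x\}^-$, which is finite \emph{precisely because $\F$ is finitary}. I would then iterate a pigeonhole: build nested infinite index-sets $S_0\supseteq S_1\supseteq\cdots$ and \emph{distinct} arguments $w_1,w_2,\dots$ with $w_t\in M_k$ for all $k\in S_t$, by fixing at each stage a member indexed in $S_{t-1}$, noting that every other member meets its finite attacker set, and extracting by pigeonhole a single $w_t$ lying in infinitely many members (giving $S_t$). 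Distinctness of the $w_t$ follows because $w_t$ attacks the fixed member while $w_1,\dots,w_{t-1}$ lie inside it, and a conflict-free set cannot be attacked by one of its own elements. Finally I would take an ultrafilter $\mathcal{U}$ containing every $S_t$ (possible since the nested infinite $S_t$ have the finite intersection property) and set $W:=\{a:\{k:a\in M_k\}\in\mathcal{U}\}$. Then $D\subseteq W$, $W\cap E=\emptyset$, $W$ is conflict-free (two elements of $W$ lie in a common $M_k$, as their index-sets meet), and $W$ is self-defending, where finitarity enters a \emph{second} time: given $a\in W$ and an attacker $c$, $\mathcal{U}$-many $M_k$ defend $a$, each using a defender drawn from the finite pool $\{c\}^-$, so by finite additivity of $\mathcal{U}$ a single defender lies in $\mathcal{U}$-many $M_k$ and hence in $W$. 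Thus $W$ is admissible, and $W\supseteq\{w_1,w_2,\dots\}$ is infinite.

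The main obstacle is exactly the \emph{infinitude} of the extension, not its admissibility. I would stress that finite branching (compactness, König's lemma) does \emph{not} suffice on its own: a finitely branching tree can carry infinitely many paths that collectively include infinitely many arguments while each individual path includes only finitely many, and such a configuration is consistent with the finite-out-degree data alone. What excludes it is the \emph{finite in-degree}, which plays two distinct roles above — confining conflicting partners to a finite attacker set (so that pigeonhole manufactures the growing common core $w_1,w_2,\dots$), and keeping self-defense stable under the ultrafilter limit. Getting these two roles to cooperate, so that the limit object is simultaneously infinite and admissible, is the crux of the argument; the Ramsey split is what lets me treat the ``grows by merging'' and ``grows by a common core'' phenomena separately.
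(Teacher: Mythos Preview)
Your proof is correct and tracks the paper's argument closely at the top level: both directions, the reduction to an infinite family of distinct finite witnesses, the Ramsey $2$-coloring into ``compatible'' and ``conflicting'' pairs, and the union argument in the compatible (anti-clique) case are all exactly as in the paper.

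The difference is in the conflicting (clique) case. The paper maintains a growing finite set $S_i$ under the invariant that infinitely many of the $X_b$ contain $S_i$, and \emph{interleaves} two kinds of extension steps: one (Lemma~\ref{Add An Element}) that adds a genuinely new argument via the same pigeonhole-on-attackers you use for your $w_t$, and one (Lemma~\ref{Make It Admissible}) that adds a defender of a specified attacked element, again by pigeonhole over the finite set of attackers. The union $\bigcup_i S_i$ is then directly seen to be infinite and admissible. Your route instead first runs only the ``new element'' pigeonhole to produce the distinct $w_t$ and nested index sets $S_t$, and then takes an ultrafilter limit $W$, letting finite additivity of the ultrafilter dispatch self-defense in one stroke. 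Both arguments exploit finitarity at exactly the same two points (bounding attacker sets for growth, and bounding attacker sets for defense), so the underlying combinatorics coincide; what you gain is a cleaner separation of ``make it infinite'' from ``make it admissible,'' at the cost of invoking a nonprincipal ultrafilter where the paper stays elementary. Since the theorem is used only as a black-box equivalence in the complexity bounds, the extra choice does no harm, but the paper's version would be preferred if one cared about effectivizing the construction.
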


\begin{proof}
    Observe that if $S$ is an infinite admissible extension of $\F$ containing $D$ and disjoint from $E$, then $S\subseteq Y$, so $Y$ is infinite.

    Conversely, suppose that $Y$ is infinite. For each $b\in Y$, fix an admissible extension $X_b$ with $D\cup \{b\}\subseteq X_b$ and $E\cap X_b = \emptyset$. If any $X_b$ is infinite, we are done, so we may assume each $X_b$ is finite. Thus, we can choose an infinite subset $Y'$ of $Y$ so that $X_b$ is distinct for each $b\in Y'$. We define an edge relation $E$ on the collection of arguments $Y'$. We say $b$ and $b'$ are $E$-related if there is some element of $X_b$ which attacks some element of $X_{b'}$. Observe by the admissibility of $X_b$ and $X_{b'}$ that this is a symmetric relation. By the Infinite Ramsey theorem (see e.g., \cite[Theorem 5]{graham1980ramsey}), there is either an infinite clique or an infinite anti-clique in this graph. If there is an infinite anti-clique $Z$, then $\bigcup_{b\in Z} X_b$ is an infinite admissible set. So, we may suppose that there is an infinite clique $C$.
    
    We proceed by recursion to define a sequence of sets $S_i$. We begin with $S_0=D$. We will maintain the inductive hypothesis ($\IH$) that the set $C_i:=\{b\in C : S_i\subseteq X_b\}$ is infinite.

    \begin{lemma}\label{Add An Element}
        For any $S_i$, there is always some argument $x\notin S_i$ so that letting $S_{i+1}=S_i\cup \{x\}$ preserves  \emph{$\IH$}.
    \end{lemma}
    \begin{proof}
        Fix any $b$ so that $S_i\subseteq X_b$. By the pigeonhole principle, there is one $z\in X_b$ so that there are infinitely many $b'\in C_i$ so that some element of $X_{b'}$ attacks $z$. Since $\F$ is finitary, there must be a single element $x$ which attacks $z$ and is contained in infinitely many $X_{b'}$ for $b'\in C_i$. Then $S_i\cup \{x\}$ is contained in infinitely many $X_{b'}$ with $b'\in C$, so  $\IH$ is preserved. Note that $x\notin S_i$ since $S_i\cup \{z\}\subseteq X_b$ and $X_b$ is conflict-free.
    \end{proof}

    \begin{lemma}\label{Make It Admissible}
        For any $S_i$, $y\in S_i$ and $z$ any argument attacking $y$, there is some $x$ so that $x$ attacks $z$ and letting $S_{i+1}=S_i\cup \{x\}$ preserves  \emph{$\IH$}.
    \end{lemma}
    \begin{proof}
        For each $b\in C_i$, there must be some $x_b\in X_b$ so that $x_b$ attacks $z$, since $X_b$ is admissible and contains $y$. Since there are only finitely many elements attacking $z$, the pigeonhole principle implies there is a single $x$ so that there are infinitely many $b$ with $x_b=x$. Then $S_{i+1}=S_i\cup \{x\}$ maintains  $\IH$.
    \end{proof}

    We now proceed to build the sequence of sets $S_i$ starting with $S_0=D$. At odd steps, we let $S_{i+1}$ be as guaranteed by Lemma \ref{Add An Element}. By doing this at each odd stage, we guarantee that $S:=\bigcup_i S_i$ is infinite. At even steps, we take the least $\langle y,z\rangle$ so that $y\in S_i$ and $z$ attacks $y$ and apply Lemma \ref{Make It Admissible} to define $S_{i+1}$. If there is no such pair $(y,z)$, then we simply let $S_{i+1}=S_i$. Since we do this infinitely often, for any element of $y\in S$ which is attacked by an element $z$, we place an element $x\in S$ which attacks $z$. Finally, note that $S$ is conflict-free since each $S_i$ is conflict-free by the $\IH$ and the fact that each $X_b$ is conflict-free. Thus $S$ is an infinite admissible set containing $D$. Finally, since each $S_i$ is contained in infinitely many $X_b$ and each $X_b$ is disjoint from $E$, $S$ is disjoint from $E$.
\end{proof}

We use Theorem \ref{Infinitely many finie implies one infinite} to give an upper bound for the complexities of the computational problems regarding the semantics $\infad$. Using Theorem \ref{Infinitely many finie implies one infinite}, to check if there is an infinite admissible extension, we need only check if, for every $n$, there is an admissible extension of size $\geq n$. This is a $\Pi^0_3$ description of $\ex^\icf_\infad$, though we are able to push this upper bound down to the level of $u$-$\Sigma^0_2$, which we show is sharp in Theorem \ref{infad lower}.

\begin{theorem}\label{infad upper}
$\ex^\icf_\infad$ and $\cred^\icf_\infad$ are each $u$-$\Sigma^0_2$.
$\skep^\icf_\infad$ is $d$-$\Sigma^0_2$.
\end{theorem}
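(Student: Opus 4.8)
The plan is to reduce all three statements to a single uniform lemma: for finite sets $D,E\subseteq A_\F$, the predicate ``$Y_{D,E}$ is infinite'' is $u$-$\Sigma^0_2$ (uniformly in $D$, $E$, and the index of $\F$), where $Y_{D,E}$ is the set appearing in Theorem~\ref{Infinitely many finie implies one infinite}. By that theorem, ``$Y_{D,E}$ is infinite'' is equivalent to the existence of an infinite admissible extension containing $D$ and disjoint from $E$. Taking $D=E=\emptyset$ yields $\ex^\icf_\infad$; taking $D=\{a_n\}$, $E=\emptyset$ yields $\cred^\icf_\infad$; and since $a_n$ lies in every infinite admissible extension exactly when there is no infinite admissible extension disjoint from $\{a_n\}$, the problem $\skep^\icf_\infad$ is precisely the negation of ``$Y_{\emptyset,\{a_n\}}$ is infinite.'' Thus once the lemma is established, $\ex$ and $\cred$ are $u$-$\Sigma^0_2$, and $\skep$, being a complement, is $d$-$\Sigma^0_2$.

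For the lemma I would write ``$Y_{D,E}$ is infinite'' as a union $C\cup G$ of a $\Pi^0_2$ set $C$ and a $\Sigma^0_2$ set $G$. I let $C$ assert that arbitrarily large \emph{finite} admissible sets containing $D$ and disjoint from $E$ exist, i.e. $\forall k\,\exists F\,(|F|\ge k \wedge D\subseteq F \wedge F\cap E=\emptyset \wedge F\text{ admissible})$. The decisive point is that in a computably finitary AF the admissibility of a \emph{finite} set $F$ is decidable: its attacker set $F^-$ is finite and computable from the index, so one checks directly whether every attacker of $F$ is counterattacked by $F$. Hence the matrix is computable and $C$ is $\Pi^0_2$. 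Moreover $C$ forces infinitely many distinct arguments into $Y_{D,E}$, so by Theorem~\ref{Infinitely many finie implies one infinite} it implies an infinite admissible extension; that is, $C$ implies ``$Y_{D,E}$ infinite.''

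For the $\Sigma^0_2$ part, I let $G$ assert the existence of a finite $F$ with $D\subseteq F$, $F\cap E=\emptyset$, such that (i) the tree $\T^\F_{\ad+F-E}$ of Theorem~\ref{encoding extensions into trees} has a path, and (ii) no finite admissible set containing $F$ and disjoint from $E$ exists. Conjunct (i) is $\Pi^0_1$: since $\F$ is computably finitary the tree is computably finitely branching, so by König's lemma having a path is equivalent to having infinite height, which is $\Pi^0_1$ (the same mechanism used for $\cred^\icf_\ad$ in Theorem~\ref{easy uppers}). Conjunct (ii) is $\Pi^0_1$ by the decidability of finite-set admissibility noted above. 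Hence $G$ is $\Sigma^0_2$. If $G$ holds, the path witnessing (i) is an admissible extension $X\supseteq F\supseteq D$ disjoint from $E$, and by (ii) $X$ cannot be finite, so $X$ is the desired infinite extension; thus $G$ also implies ``$Y_{D,E}$ infinite.''

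It then remains to prove the reverse inclusion: if some infinite admissible extension $S\supseteq D$ disjoint from $E$ exists, then $C\cup G$ holds. If $C$ fails, finite admissible sets of this kind have bounded size $\le M$; choosing any finite $F\subseteq S$ with $D\subseteq F$ and $|F|>M$ witnesses $G$, since $S$ gives (i) while every admissible superset of $F$ has size $>M$ and so is not finite, giving (ii). This case analysis is the crux, and an infinite ``defence chain'' (with $b_i\att a_i$ and $a_{i+1}\att b_i$, whose only finite admissible set is $\emptyset$, yet which has an infinite admissible extension) shows that neither $C$ nor $G$ alone suffices, so the $u$-$\Sigma^0_2$ bound is the honest one, matching Theorem~\ref{infad lower}. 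I expect the main obstacle to be isolating $G$: the realization that the ``infinite but not finitely approximable'' extensions are exactly those certified by a finite set with no finite admissible superset is what keeps the existential witness finite while the extension it detects is infinite, thereby avoiding the naive $\Pi^0_3$ bound.
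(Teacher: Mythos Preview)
Your proposal is correct and follows essentially the same approach as the paper: the same $\Pi^0_2$ clause (arbitrarily large finite admissible sets with the right constraints), the same $\Sigma^0_2$ clause (a finite witness $F$ for which $\T^\F_{\ad+F-E}$ has a path but no finite admissible set of the required kind contains $F$), and the same use of Theorem~\ref{Infinitely many finie implies one infinite} and Theorem~\ref{encoding extensions into trees}, with the specializations $D=\emptyset$, $D=\{a_n\}$, $E=\{a_n\}$ handling $\ex$, $\cred$, and $\skep$ respectively. The only cosmetic difference is that the paper's $\Sigma^0_2$ clause $\mathcal{E}$ uses a size threshold $n$ rather than a specific set $F$ as the existential witness, but the verification and case split (``$\mathcal{A}$ holds'' versus ``finite admissible extensions are bounded, so pick $n$/$F$ past the bound'') are identical.
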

\begin{proof}
Recall that $u$-$\Sigma^0_2$ sets are the union of a $\Sigma^0_2$ and $\Pi^0_2$ set, while $d$-$\Sigma^0_2$ sets---being the complements of $u$-$\Sigma^0_2$ sets---are the intersection of a $\Sigma^0_2$ and $\Pi^0_2$ set.

 We first focus on $\ex^\icf_\infad$.
  Let $\F$ be a computably finitary AF with $A_\F=\{a_n: n\in\omega\}$. Recall that $D_k$ denotes the finite set with canonical index $k$. Let $\hat{D}_k = \{a_i : i\in D_k\}$. Consider the following condition: 
\begin{multline}
\tag{$\mathcal{E}$}
(\exists n\in\omega)\left( (\exists k\in \omega)\left(|D_k|=n  \, \&  \, [\T^\F_{\ad+\hat{D}_k-\emptyset}]\neq \emptyset\right)\, \& \right.\\ \left. \, (\forall  k)\left(|D_k|\geq n \Rightarrow \hat{D}_k \notin \ad(\F)\right) \right)
\end{multline}
    The first conjunct simply says there exists an admissible extension of size $\geq n$; the second conjunct says that no finite admissible extension in $\F$ has size $\geq n$. By K\"onig's Lemma, the existence of a path through $\T^\F_{\ad+\hat{D}_k-\emptyset}$ is a $\Pi^0_1$ condition (as the tree is computably finitely branching). Checking whether a finite set is admissible is  computable, thus the overall complexity of   $\mathcal E$ is $\Sigma^0_2$.

    Next, let $\mathcal A$ be the following condition, which says that $\F$ has arbitrarily large finite admissible extensions:
\[
\tag{$\mathcal{A}$}(\forall n\in \omega)(\exists k\in \omega)\left(|D_k|>n \; \& \; \hat{D}_k\in \ad(\F)\right)
\]  
Observe that $\mathcal{A}$  is a  $\Pi^0_2$ condition, since verifying the admissibility of a finite set is computable.

    If  $\mathcal E$ holds, then $\F$ has an infinite admissible extension, since some admissible extension of size $\geq n$ exists and this cannot be finite. Similarly, if condition $\mathcal{A}$ holds, then there are infinitely many elements which are contained in an admissible extension, so there is an infinite admissible extension by Theorem \ref{Infinitely many finie implies one infinite} applied with $D=E=\emptyset$.

    Next, suppose that there is an infinite admissible extension in $\F$. If there are arbitrarily large finite admissible extensions, then condition $\mathcal A$ holds. If not, then let $n$ be an upper bound to the size of all finite admissible extensions. Observe that $n$ witnesses that condition $\mathcal E$ holds. Thus, the collection of computably finitary AFs with an infinite admissible extension is exactly the union of those $\F$  satisfying $\mathcal{E}$ and those satisfying $\mathcal{A}$, which shows that $\ex^\icf_\infad$ is $u$-$\Sigma^0_2$ 

  The above argument still holds if we replace the notion of admissible extension by that of ``admissible extension containing $a$'': indeed, it suffices to use  $D=\{a\}$ and $E=\emptyset$  in the application of Theorem \ref{Infinitely many finie implies one infinite}. This shows that $\cred^\icf_\infad$ is a union of a $\Sigma^0_2$ and a $\Pi^0_2$ set.

 Finally, by using $D=\emptyset$ and $E=\{a\}$ in the application of Theorem \ref{Infinitely many finie implies one infinite}, we obtain that the above argument also holds if we replace the notion of admissible extension by ``admissible extension omitting $a$''.  Thus, the existence of an admissible extension which does not include $a$ is  $u$-$\Sigma^0_2$. Since  $\skep^\icf_\infad$ is defined by the negation of this condition, it is $d$-$\Sigma^0_2$.  
\end{proof}

\begin{theorem}\label{infad lower}
$\ex^\icf_\infad$ and $\cred^\icf_\infad$ are each $u$-$\Sigma^0_2$-hard. $\skep^\icf_\infad$ is $d$-$\Sigma^0_2$-hard.
\end{theorem}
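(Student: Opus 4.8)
The plan is to establish the three hardness results by exhibiting computable reductions from canonical complete problems, exploiting the benchmark sets introduced in the background section. For $\ex^\icf_\infad$ and $\cred^\icf_\infad$ I would prove $u$-$\Sigma^0_2$-hardness, and for $\skep^\icf_\infad$ I would prove $d$-$\Sigma^0_2$-hardness. Recall that a $u$-$\Sigma^0_2$ set is $X \cup Y$ with $X \in \Sigma^0_2$ and $Y \in \Pi^0_2$, so a natural complete problem is the disjoint union (join) $\Inf \sqcup \Fin$ of a $\Pi^0_2$-complete set with a $\Sigma^0_2$-complete set; dually, the $d$-$\Sigma^0_2$ case can be handled by the complementary combination. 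The heart of the argument is a single uniform gadget construction: given an index $m$, I would build a computably finitary AF $\F_m$ whose infinite-admissibility behavior encodes whether $|W_m|$ is infinite (the $\Pi^0_2$ half) and, in a separate copy, whether $|W_m| < \infty$ (the $\Sigma^0_2$ half), wiring these together to realize the union.

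The key construction I would carry out is a gadget that converts the enumeration of a c.e.\ set $W_m$ into an AF whose finite admissible extensions grow without bound exactly when $|W_m| = \infty$. For each expansionary stage $s$ of $W_m$, introduce a fresh argument $c_s$ together with a small self-defending block (e.g.\ a defender $d_s$ attacking the unique attacker of $c_s$, with $d_s$ attacking $c_s$'s attacker and the attacker attacking $c_s$) so that $c_s$ can be admissibly accepted precisely once stage $s$ is reached. Arranging that these $c_s$ are pairwise conflict-free yields arbitrarily large finite admissible extensions iff there are infinitely many expansionary stages, i.e.\ iff $|W_m| = \infty$; by Theorem \ref{Infinitely many finie implies one infinite} this is exactly the existence of an infinite admissible extension, giving the $\Pi^0_2$ (i.e.\ $\Inf$) reduction. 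For the $\Sigma^0_2$ half I would instead encode $\Fin$ by a dual device in which an infinite admissible extension is forced to exist precisely when the enumeration \emph{stops}, for instance by having a single persistent infinite ray of defenders that becomes admissible only after no further expansionary stage occurs. Each argument must be attacked by only finitely many others, and the attacker sets must be computable uniformly in $m$, so throughout I would check that the gadget is computably finitary, and that the reduction map lands inside $\tot$ as required by Remark \ref{rem:inTot}.

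To assemble the $u$-$\Sigma^0_2$-hardness, given an instance of the complete problem $\Inf \sqcup \Fin$ — encoded by a pair $(m_0, m_1)$ where the instance is a ``yes'' iff $|W_{m_0}| = \infty$ or $|W_{m_1}| < \infty$ — I would take the disjoint union of the two gadgets $\F_{m_0}$ (the $\Inf$-gadget) and $\F_{m_1}$ (the $\Fin$-gadget). Since admissible extensions of a disjoint union of AFs are exactly unions of admissible extensions of the components, and an infinite one exists iff some component has an infinite admissible extension, the combined AF has an infinite admissible extension iff either disjunct holds. This yields $\ex^\icf_\infad$-hardness; for $\cred^\icf_\infad$ I would designate a distinguished argument $a$ that is forced into every infinite admissible extension (e.g.\ an unattacked argument joined to both gadgets) so that credulous acceptance of $a$ coincides with existence. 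For $\skep^\icf_\infad$, hardness follows by taking complements: the negation of ``there is an infinite admissible extension omitting $a$'' is $d$-$\Sigma^0_2$, matching the upper bound from Theorem \ref{infad upper}, so I would build a version in which omitting $a$ is possible iff the complementary combination holds.

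The main obstacle I anticipate is the $\Fin$-gadget: encoding $\Sigma^0_2$ behavior into infinite admissibility is delicate because Theorem \ref{Infinitely many finie implies one infinite} ties infinite admissible extensions to the existence of arbitrarily large \emph{finite} admissible extensions, which naturally detects $\Pi^0_2$ (infinitely-often) phenomena rather than $\Sigma^0_2$ (eventually) phenomena. The trick will be to design the framework so that each new expansionary stage \emph{destroys} the admissibility of a previously available infinite structure — for example, by making a newly appearing attacker undercut an entire tail of the intended extension — so that an infinite admissible extension survives only if the enumeration halts. Verifying that this destruction mechanism keeps the AF computably finitary (each argument attacked finitely often) while genuinely collapsing every candidate infinite admissible extension when $|W_m| = \infty$ is the crux of the argument, and I would isolate it as a self-contained lemma before combining the gadgets.
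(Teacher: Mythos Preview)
Your strategy matches the paper's exactly: separate gadgets reducing $\Fin$ and $\Inf$ to $\ex^\icf_\infad$, their disjoint union for $u$-$\Sigma^0_2$-hardness, and an unattacked (resp.\ never-admissible) distinguished argument to handle $\cred$ (resp.\ $\skep$).

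The one place your sketch would fail as written is the $\Fin$ gadget. If each new spoiler literally ``undercuts an entire tail'' of the intended ray and is itself undefendable, then already a \emph{single} such spoiler (when $|W_m|\geq 1$) leaves only a finite initial segment of admissibly acceptable arguments, so no infinite admissible extension exists even when $W_m$ is finite but nonempty. The paper's realization of your destruction mechanism is instead local: a backward chain $a_{k+1}\att a_k$, and at each expansionary stage $k$ a self-attacking $b_k$ that attacks only the two elements $a_{2k}$ and $a_{2k+1}$. Any admissible set containing some $a_j$ must (climbing the chain for defense) contain every $a_{j+2m}$, so an expansionary stage $k>j$ yields an undefendable attack on whichever of $a_{2k},a_{2k+1}$ matches the parity of $j$; conversely, if $t$ is the last expansionary stage then the tail $\{a_{2m}:m>t\}$ is untouched and infinite admissible. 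Hitting \emph{both} parities is what makes the argument go through, and this is the detail your sketch is missing. For the $\Inf$ half the paper simply takes a disjoint union of $|W_n|+1$ stars, whose centers are the only admissibly acceptable arguments --- rather simpler than the defender blocks you describe.
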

\begin{proof}
     We begin by first giving constructions proving the $\Sigma^0_2$-hardness and $\Pi^0_2$-hardness  of $\ex^\icf_\infad$; then we combine these constructions to show $u$-$\Sigma^0_2$-hardness. 

Recall that the set $\Fin$ (the collection of indices of finite c.e.\ sets) is $\Sigma^0_2$-complete.  We will construct a uniformly computable sequence of computably finitary AFs $(\G_n)_{n\in\omega}$ so that $\G_n$ has an infinite admissible extension iff $n\in \Fin$.  
    We define $\G_n=(A_{\G_n},R_{\G_n})$ as follows (see Figure \ref{figure: exist sigma2-hard}). Let $A_{\G_n}$ be $\{a_k : k\in \omega\} \cup  \{b_k : \text{$k$ is an expansionary stage for $W_n$}\}$. Let $R_\F$ contain exactly the following attack relations: For all $k$,
\begin{itemize}
\item $a_{k+1}\att a_k$;
\item $b_k\att a_{2k}$ and $b_k\att a_{2k+1}$;
\item $b_k\att b_k$.
\end{itemize}

\begin{figure}\centering
\scalebox{0.5}{
\begin{tikzpicture}[->=Stealth]

    \node[circle, draw, 
    ] (a0) {$a_{2s_0}$};
    \node[draw=none] (ellipsis1)[left=5mm of a0] {$\cdots$};
    \node[circle, draw, 
    ] (a1) [right=of a0] {$a_{2s_0+1}$};
    \node[circle, draw, 
    ] (a2) [right=of a1] {$a_{2s_1}$};
    \node[circle, draw, 
    ] (a3) [right=of a2] {$a_{2s_1+1}$};
    \node[circle, draw, 
    ] (a4) [right=of a3] {$a_{2s_2}$};
    \node[circle, draw, 
    ] (a5) [right=of a4] {$a_{2s_2+1}$};
    \node[circle, draw, 
    ] (a6) [right=of a5] {$a_{2s_3}$};
    \node[circle, draw, 
    ] (a7) [right=of a6] {$a_{2s_3+1}$};
    \node[draw=none] (ellipsis2)[right=1mm of a7] {$\cdots$};

    \node[circle, draw, 
    ] (b0) 
    [below=of a0] {$b_{s_0}$};
    \node[circle, draw, 
    ] (b1) [below=of a2] {$b_{s_1}$};
    \node[circle, draw, 
    ] (b2) [below=of a4] {$b_{s_2}$};
    \node[circle, draw, 
    ] (b3) [below=of a6] {$b_{s_3}$};

\draw[->] (a1) edge (a0);
\draw[->] (a2) edge (a1) [dashed];
\draw[->] (a3) edge (a2);
\draw[->] (a4) edge (a3) [dashed];
\draw[->] (a5) edge (a4);
\draw[->] (a6) edge (a5) [dashed];
\draw[->] (a7) edge (a6);

\draw[->] (b0) edge[loop below] (b0);
\draw[->] (b1) edge[loop below] (b1);
\draw[->] (b2) edge[loop below] (b2);
\draw[->] (b3) edge[loop below] (b3);

\path[every node/.style={font=\sffamily\small}]

    (b0) edge  (a0)
    (b0) edge[out=80, in=-90] (a1)
    (b1) edge (a2)
    (b1) edge[out=80, in=-90] (a3)
    (b2) edge  (a4)
    (b2) edge[out=80, in=-90] (a5)
    (b3) edge  (a6)
    (b3) edge[out=80, in=-90] (a7)
    ;
\end{tikzpicture}
}

\caption{A fragment of $\G_n$ where $W_n$ has expansionary stages $s_0<s_1<s_2<s_3$.}
\label{figure: exist sigma2-hard}

\end{figure}
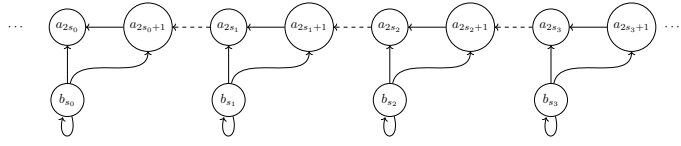
If $W_n$ is finite, let $t$ be its last expansionary stage and let $S=\{a_{2m} : m> t\}$. Observe that $S$ is an infinite admissible set.   

On the other hand, if there are infinitely many expansionary stages, then we argue that $\G_n$ has no non-empty admissible extension. Towards a contradiction, suppose that $S\neq \emptyset$ is in $\ad(\G_n)$. Note that, for all $k$, $b_k\notin S$ as $b_k\att b_k$. Thus, if $S$ is a non-empty admissible extension, then there is some $a_j\in S$. Due to the attacks $a_{k+1}\att a_k$, we must have $a_{j+2m}\in S$ for every natural number $m$. Fix an expansionary stage $t>k$. By construction, $b_t$ attacks both $a_{2t}$ and $a_{2t+1}$, one of which (depending on the parity of $j$) is in $S$. This attack is undefended, contradicting the  admissibility of $S$. Thus $n\in \Fin$ iff $\G_n\in \ex^\icf_\infad$, showing that $\ex^\icf_\infad$ is $\Sigma^0_2$-hard.

%
%

    Next, we will prove that $\ex^\icf_\infad$ is $\Pi^0_2$-hard by reducing the set $\Inf$ to it. Specifically, we construct a computable sequence of computably finitary AFs $(\H_n)_{n\in\omega}$ so that  $\H_n$ has an infinite admissible extension iff $n\in \Inf$. 
    
    We call the following AF a \emph{star}: $\C=(A_\C,R_\C)$ where $A_\C=\{a_n : n\in \omega\}$ and $a_n\att a_m$ iff $n=0$ and $m\neq 0$. We call $a_0$ the center of the star. For each $n$, we let $\H_n$ be the disjoint union of $|W_n|+1$ stars.
    

Note that the admissible subsets of $\H_n$ are exactly the subsets of the set of centers of the stars in $\H_n$. Thus $\H_n$ has an infinite admissible extension iff $W_n$ is infinite.



    Finally, let $X\in\Sigma^0_2$, $Y\in\Pi^0_2$, and  $n\in \omega$. We can construct sequences $\G_n$ so that $\G_n\in \ex^\icf_\infad$ iff $n\in X$ and $\H_n\in \ex^\icf_\infad$ iff $n\in Y$. We then construct the AF $\I_n$ which is the disjoint union of $\G_n$ and $\H_n$. If $n\in X$, then there is an infinite admissible subset of $\G_n$, which is admissible in $\I_n$. If $n\in Y$, then there is an infinite admissible subset of $\H_n$, which is admissible in $\I_n$. If neither, then neither $\G_n$ nor $\H_n$ have infinite admissible extensions, thus the disjoint union $\I_n$ does not have an infinite admissible extension. 

    To see that $\cred^\icf_\infad$ is hard for $u$-$\Sigma^0_2$ sets, consider $\I_n$ and an argument $a$ which is the center of a star in $\H_n$. This element can be added to any admissible extension, if one exists, since $a$ is unattacked. Thus, $a$ is credulously accepted for the  $\infad$-semantics iff there exists an infinite admissible extension iff $n\in X\cup Y$. Note that we ensured that $\H_n$ always has at least one star, so we can find the element $a$ that we need here. 

Similarly, if we take an argument $a$ which is not the center of a star, but rather outside the center of a star, then $a$ won't be in any admissible extension. Thus, $a$ is skeptically accepted iff there is no infinite admissible extension. Since we know that $\ex^\icf_\infad$ is $u$-$\Sigma^0_2$-hard, this shows that $\skep^\icf_\infad$ is hard for the complementary class, i.e.,  $d$-$\Sigma^0_2$.
\end{proof}


The proof of the next theorem is given in Section \ref{supplement: sec 7}.

\begin{restatable}{theorem}{uniinfad}
    \label{uni infad}
    $\uni^\icf_\infad$ is $\Pi^0_3$-complete.
\end{restatable}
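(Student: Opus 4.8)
The plan is to prove matching upper and lower bounds, working throughout within $\tot$ as in Remark \ref{rem:inTot}.

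\textbf{Upper bound.} First I would reduce uniqueness to a statement about credulous and skeptical acceptance. For a computably finitary $\F$, it has exactly one infinite admissible extension if and only if (a) it has at least one, and (b) every argument lying in some infinite admissible extension lies in all of them. Indeed, given (a) and (b), any two infinite admissible extensions $S_1,S_2$ satisfy $S_1\subseteq S_2$ and $S_2\subseteq S_1$ (each argument of $S_1$ is credulously accepted, hence by (b) skeptically accepted, hence in $S_2$), so they coincide; the converse is immediate, and (a) rules out the empty case. Now (a) is $\ex^\icf_\infad$, which is $u$-$\Sigma^0_2$ by Theorem \ref{infad upper}, hence $\Pi^0_3$. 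Condition (b) is $(\forall a)\big[a\notin \cred^\icf_\infad(\F)\ \text{or}\ a\in \skep^\icf_\infad(\F)\big]$; by Theorem \ref{infad upper} the credulous set is $u$-$\Sigma^0_2$ and the skeptical set is $d$-$\Sigma^0_2$, so each disjunct is $\Delta^0_3$, the matrix is $\Pi^0_3$, and prefixing a universal number quantifier keeps it $\Pi^0_3$. The conjunction of (a) and (b) is therefore $\Pi^0_3$.

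\textbf{Lower bound.} For $\Pi^0_3$-hardness I would reduce the $\Pi^0_3$-complete set $P=\{e : (\forall n)\, W_{f(e,n)}\text{ is infinite}\}$, where $f$ is obtained, for a given $\Pi^0_3$ set written as $\forall n\,[\Pi^0_2]$, from the $\Pi^0_2$-completeness of $\Inf$. The idea is to assemble, as a disjoint union with no cross-attacks, a rigid base $B$ together with one copy $\G_{f(e,n)}$ of the $\Sigma^0_2$-gadget from the proof of Theorem \ref{infad lower} for each $n$. The crucial property of that gadget is that its only admissible extension is $\emptyset$ when $W_{f(e,n)}$ is infinite, while it has a nonempty (indeed infinite) admissible extension when $W_{f(e,n)}$ is finite. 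I require the base $B$ to have a \emph{unique} infinite admissible extension; then, since admissibility decomposes over disjoint unions, $\F_e$ always carries the infinite admissible extension coming from $B$ (so existence holds), and $\F_e$ has a second, distinct infinite admissible extension iff some $\G_{f(e,n)}$ contributes a nonempty admissible set, i.e. iff some $W_{f(e,n)}$ is finite. Hence $\F_e$ has a unique infinite admissible extension iff $e\in P$, and $e\mapsto \F_e$ is computable into $\tot$.

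\textbf{The rigid base.} To build $B$ I would take arguments $\{a_n : n\in\omega\}\cup\{b_n : n\geq 1\}$ with attacks $b_n\att a_n$ and $a_{n-1}\att b_n$ for $n\geq 1$, leaving $a_0$ unattacked. Then $S=\{a_n : n\in\omega\}$ is admissible, since each $a_n$ ($n\geq1$) is attacked only by $b_n$, which is attacked by $a_{n-1}\in S$. Conversely, in any admissible set no $b_n$ can occur (a descending induction shows $b_n\in S$ would force $b_1\in S$, impossible as the only attacker of $b_1$ is the unattacked $a_0$), and for $a_n\in S$ with $n\geq 1$ the only defense against $b_n$ forces $a_{n-1}\in S$; thus every admissible set is a downward-closed set of $a$'s, and the only infinite such set is $S$. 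So $B$ has a unique infinite admissible extension and is computably finitary.

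\textbf{Main obstacle.} The delicate point is controlling the disjoint-union interaction so that the count of infinite admissible extensions is exactly right, and this is precisely why the ``off'' gadget must admit \emph{only} the empty extension: if each inert $\G_{f(e,n)}$ had even a single nonempty finite admissible extension, selecting such extensions in infinitely many coordinates would manufacture spurious infinite admissible extensions of $\F_e$ and destroy uniqueness in the $e\in P$ case. Verifying the two gadget facts (only $\emptyset$ when $W$ is infinite; a nonempty extension when $W$ is finite) and the rigidity of $B$ is where the real work lies; the remainder is bookkeeping about disjoint unions and the uniformity and computable finitariness of the construction.
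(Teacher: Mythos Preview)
Your argument is correct. The upper bound is exactly the paper's: uniqueness is expressed as $\ex^\icf_\infad$ together with $(\forall a)(a\in\cred^\icf_\infad(\F)\Rightarrow a\in\skep^\icf_\infad(\F))$, and the complexities from Theorem~\ref{infad upper} make this $\Pi^0_3$.

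For the lower bound you take a genuinely different route. The paper builds a single integrated AF $\G_n$ per index: a ``spine'' $\{a_{2i}:i\in\omega\}$ that is always infinite admissible, with side chains $d^i_j$ attached so that an alternative infinite admissible extension exists precisely when some $V^n_i$ is infinite. You instead assemble a disjoint union of a rigid base $B$ (carrying the guaranteed unique infinite admissible extension) with copies of the $\Sigma^0_2$-gadget $\G_m$ from Theorem~\ref{infad lower}. Your approach is more modular: it directly recycles the already-verified gadget, and the analysis reduces to the elementary fact that admissible extensions decompose over disjoint unions. The key insight you correctly isolate---that the ``off'' state of the gadget must admit \emph{only} the empty extension, lest infinitely many coordinates conspire to produce a spurious infinite admissible set---is exactly what makes the $\G_m$ from Theorem~\ref{infad lower} the right component, and the paper's proof of that theorem already establishes this property. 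The paper's integrated construction avoids the disjoint-union bookkeeping but requires a fresh admissibility analysis; your construction trades that for a clean reuse of prior work. Both are equally valid, and your base $B$ is a nice minimal example of a computably finitary AF with a unique infinite admissible extension.
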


We are not prepared to establish the complexity of $\skep^\icf_\infco$ or $\uni^\icf_\infco$. Specifically, for the complete semantics, the method of Theorem \ref{Infinitely many finie implies one infinite} in the case where $Z$ is an infinite anti-clique does not work. That is, if $X_b$ are each complete extensions which do not attack each other and do not contain a fixed element $a$, then $\bigcup_b X_b$ is admissible, but it may not be contained in a complete extension which omits $a$. Yet, it suffices to recall that  any admissible extension is contained in a complete extension to determine the complexity of $\ex^\icf_\infco$ and $\cred^\icf_\infco$:
\begin{theorem}\label{trivial infco}
    $\ex^\icf_\infco$ is exactly the same as $\ex^\icf_\infad$.
    $\cred^\icf_\infco$ is exactly the same as $\cred^\icf_\infad$.
\end{theorem}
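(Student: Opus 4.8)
The plan is to prove both statements as literal equalities of subsets of $\omega$, so that the $u$-$\Sigma^0_2$-completeness established for $\ex^\icf_\infad$ and $\cred^\icf_\infad$ in Theorems~\ref{infad upper} and~\ref{infad lower} transfers verbatim to the $\infco$ versions. The whole argument rests on two classical facts about Dung semantics that hold for arbitrary (not necessarily finitary) AFs: every complete extension is admissible, and every admissible extension is contained in some complete extension. The excerpt explicitly signals the second fact as the one to invoke, so I would take both as given.

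For $\ex^\icf_\infco = \ex^\icf_\infad$, I would argue by double containment. The inclusion $\ex^\icf_\infco \subseteq \ex^\icf_\infad$ is immediate, since an infinite complete extension is in particular an infinite admissible extension. For the reverse, suppose $\F$ has an infinite admissible extension $S$; by the classical result $S \subseteq T$ for some complete extension $T$, and since $S$ is infinite so is $T$, witnessing $\F \in \ex^\icf_\infco$. The statement for credulous acceptance is the same argument relativized to a fixed argument $a_n$: if $a_n$ belongs to an infinite complete extension then it belongs to an infinite admissible one; conversely, if $a_n \in S$ with $S$ an infinite admissible extension, then the complete extension $T \supseteq S$ is still infinite and still contains $a_n$, so $a_n \in \cred^\icf_\infco(\F)$.

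The only point requiring care (rather than a genuine obstacle) is confirming that the ``admissible $\subseteq$ complete'' fact survives in the infinite setting. I would justify it from the monotonicity of the characteristic function $f_\F$ together with Dung's Fundamental Lemma: starting from the admissible set $S$ and iterating $f_\F$ transfinitely produces an increasing chain of admissible sets---admissibility is preserved both by applying $f_\F$ and by taking unions of chains, since conflict-freeness and self-defense can be verified on pairs and hence at finite stages---whose union is a fixed point of $f_\F$, hence a complete extension containing $S$. Crucially, this construction only ever enlarges $S$, so the resulting complete extension is automatically infinite whenever $S$ is, which is exactly what both directions of each containment require. Since the two sets then coincide exactly, no separate complexity computation is needed; the completeness results for $\infad$ carry over directly to $\infco$.
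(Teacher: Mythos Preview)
Your proposal is correct and follows exactly the approach the paper intends: the paper's entire argument is the single sentence ``it suffices to recall that any admissible extension is contained in a complete extension,'' and you spell out precisely the double-containment argument this entails, together with a standard justification (transfinite iteration of $f_\F$) of that classical fact in the infinite setting. Your write-up is simply a more detailed version of what the paper leaves implicit.
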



\section{Infinite stable\label{sec:infstb}}

As in the case of the infinite admissible semantics, our analysis of the infinite stable semantics relies upon the following theorem which is a purely combinatorial result about finitary AFs, which we think is of independent interest. This theorem, proved in Section \ref{supplement: sec 8} together with Theorems \ref{infstb lower} and \ref{uniinfstb}, is the analogue for the stable semantics of Theorem \ref{Infinitely many finie implies one infinite}.

\begin{restatable}{theorem}{combinatoricsStable}
    \label{Infinitely many finite implies one infinite - stable}
    Let $\F=(A_\F,R_\F)$ be a finitary argumentation framework. Fix $D\subset A_\F$ and $E\subseteq A_\F$. Let $Y$ be the set of arguments $b\in A_\F$ so that there exists some stable extension $X_b$ so that $D\cup \{b\}\subseteq X_b$ and $E\cap X_b = \emptyset$.
    
    Then $D$ is contained in an infinite stable extension which is disjoint from $E$ iff the set $Y$ is infinite.
\end{restatable}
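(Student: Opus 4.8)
The plan is to mirror the proof of Theorem~\ref{Infinitely many finie implies one infinite}, adapting each step from the admissible to the stable setting; crucially, since every stable extension is admissible, all the structural facts used there (symmetry of the edge relation, conflict-freeness, defense) remain available. The forward direction is immediate: if $S$ is an infinite stable extension with $D\subseteq S$ and $S\cap E=\emptyset$, then for every $b\in S$ the extension $S$ itself witnesses $b\in Y$ (it is stable, contains $D\cup\{b\}$, and is disjoint from $E$), so $S\subseteq Y$ and $Y$ is infinite. For the converse I would assume $Y$ infinite, fix for each $b\in Y$ a stable $X_b$ with $D\cup\{b\}\subseteq X_b$ and $E\cap X_b=\emptyset$, dispose of the case that some $X_b$ is infinite, and otherwise thin $Y$ to an infinite $Y'$ on which the $X_b$ are pairwise distinct. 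I then put the same symmetric edge relation $b\sim b'$ iff some element of $X_b$ attacks some element of $X_{b'}$ (symmetric by admissibility of the $X_b$) and invoke the Infinite Ramsey Theorem to obtain an infinite clique $C$ or an infinite anti-clique $Z$.

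The anti-clique case is in fact cleaner than in the admissible setting. If $Z$ is an infinite anti-clique, I set $U=\bigcup_{b\in Z}X_b$. There are no cross-attacks between distinct $X_b,X_{b'}$, so $U$ is conflict-free; and for any $w\notin U$, fixing any $b\in Z$ gives $w\notin X_b$, whence the stability of $X_b$ forces $X_b$, and so $U$, to attack $w$. Thus $U$ is an infinite stable extension containing $D$ and disjoint from $E$, and we are done in this case.

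The clique case is the substantive part. Here I would reuse the recursion building $S_0=D\subseteq S_1\subseteq\cdots$ under the invariant $\IH$ that $C_i=\{b\in C:S_i\subseteq X_b\}$ is infinite. Lemma~\ref{Add An Element} (``Add An Element'') carries over verbatim, since its proof used only conflict-freeness, the finitary hypothesis, and the clique property; applied at odd stages it guarantees that $S=\bigcup_i S_i$ is infinite. The one genuine change is to replace Lemma~\ref{Make It Admissible} by a ``Make It Stable'' step: given $S_i$ and an argument $w$ that is neither in $S_i$ nor attacked by $S_i$, either infinitely many $b\in C_i$ have $w\in X_b$ — in which case $S_i\cup\{w\}$ preserves $\IH$ — or else infinitely many $b\in C_i$ contain an element of $X_b$ attacking $w$, and since $w$ has only finitely many attackers the pigeonhole principle isolates a single attacker $x$ lying in infinitely many of these $X_b$, so $S_i\cup\{x\}$ preserves $\IH$ and makes $S$ attack $w$. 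Interleaving ``Add An Element'' at odd stages with ``Make It Stable'' applied to the least so-far-unresolved argument at even stages, the resulting $S$ is infinite, conflict-free (each $S_i$ sits inside some conflict-free $X_b$), and attacks every argument outside itself — hence stable — while $D\subseteq S$ and $S\cap E=\emptyset$ are inherited from $S_0=D$ and from $S_i\subseteq X_b$ with $E\cap X_b=\emptyset$.

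The main obstacle to watch is internal consistency of the clique construction: I must ensure that the one-at-a-time processing never creates a conflict (for instance, adding $w$ and later selecting an attacker of some $w'$ that happens to attack $w$). This is handled automatically by $\IH$, since every element ever added lies in a common $X_b$ with all previously chosen elements and each $X_b$ is conflict-free; I would state this explicitly. A secondary bookkeeping point is that the ``least unresolved argument'' scheme resolves every argument, which holds because resolution is monotone (elements are only added) and the least unresolved index strictly increases, so $S$ genuinely attacks every external argument.
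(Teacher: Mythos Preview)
Your proof is correct, but it takes a longer route than the paper's, which exploits two stable-specific shortcuts you did not use.

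First, the Ramsey step is unnecessary. Since the $X_b$ for $b\in Y'$ are \emph{distinct stable} extensions, for any $b\neq b'$ there is some $w\in X_{b'}\smallsetminus X_b$, and stability of $X_b$ forces $X_b$ to attack $w$; hence every pair is automatically ``edge-related'' and $Y'$ is already a clique. Your anti-clique case is thus vacuous (your argument for it is fine, just never invoked).

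Second, the paper dispenses with the ``Add An Element'' interleaving entirely. It runs only the ``Make It Stable'' step over all arguments, and then argues infiniteness as follows: if $S=\bigcup_i S_i$ were finite, then $S=S_i$ for some $i$, and by construction $S$ is a stable extension contained in infinitely many distinct $X_b$; but a stable set has no proper conflict-free superset, so $S$ cannot be a proper subset of any $X_b$, a contradiction. This gives infiniteness for free, so there is no need to alternate in a growth step.

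What your approach buys is uniformity with the admissible proof: the same template works for both semantics. What the paper's approach buys is brevity and a clearer explanation of \emph{why} the stable case is simpler---stability both trivializes the Ramsey dichotomy and yields infiniteness automatically.
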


Despite the similarity of Theorem \ref{Infinitely many finie implies one infinite} and Theorem \ref{Infinitely many finite implies one infinite - stable}, we will find different complexities for the problems with $\infad$ and $\infstb$. This is due to two differences: Firstly, determining if a finite set is stable is $\Pi^0_1$ whereas it is computable to determine if it is admissible. Secondly, if $X$ is a finite stable extension, then there can be no other stable extension containing $X$. This means that in the tree $\T^\F_{\stb+\emptyset-\emptyset}$, once a node determines that $X\subseteq S$, there can be no infinite stable extension represented by a path containing this node.

\begin{restatable}{theorem}{infstbUpper}
    \label{infstb upper}
    $\ex^\icf_\infstb$ and $\cred^\icf_\infstb$ are each $\Pi^0_2$. $\skep^\icf_\infstb$ is $\Sigma^0_2$.
\end{restatable}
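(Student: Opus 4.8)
The plan is to handle all three problems through a single tree-pruning argument applied to the computably finitely branching tree $\T^\F_{\stb+D-E}$ from Theorem~\ref{encoding extensions into trees}, taking $D=E=\emptyset$ for $\ex^\icf_\infstb$, $D=\{a\}$ and $E=\emptyset$ for $\cred^\icf_\infstb$, and $D=\emptyset$ and $E=\{a\}$ for the complement of $\skep^\icf_\infstb$. Recall that the paths through $\T^\F_{\stb+D-E}$ biject with the stable extensions containing $D$ and disjoint from $E$; let $X_\tau$ denote the finite set of arguments that a node $\tau$ has committed to lie in the extension. Call $\tau$ \emph{open} if $X_\tau$ is \emph{not} a stable extension, and \emph{closed} otherwise. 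Since $X_\tau$ is finite, checking conflict-freeness is computable, while ``$X_\tau$ attacks every argument outside $X_\tau$'' is a $\Pi^0_1$ condition (a universal quantifier over all arguments, each instance computable because $X_\tau$ is finite). Hence being closed is $\Pi^0_1$ and being open is $\Sigma^0_1$; this is the first of the two features distinguishing $\infstb$ from $\infad$.

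The second feature---that a finite stable extension is maximal, so no stable set properly contains it---gives the key structural equivalence: a path $\pi$ encodes an \emph{infinite} stable extension iff every node $\pi\restriction k$ is open. Indeed, if some $\pi\restriction k$ is closed then $X_{\pi\restriction k}$ is stable and, being maximal, equals the whole extension, which is therefore finite; conversely, if the extension $S=S_\pi$ is finite then once all of its in-arguments have been committed the corresponding node is closed, while if $S$ is infinite then no finite initial committed set $X_{\pi\restriction k}\subsetneq S$ can be stable, since a stable $X_{\pi\restriction k}$ would attack the nonempty set $S\setminus X_{\pi\restriction k}$, contradicting the conflict-freeness of $S$. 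Consequently, letting $\T^{\mathrm{op}}$ be the prefix-closed, computably finitely branching subtree consisting of those nodes all of whose prefixes are open, an infinite stable extension (with the prescribed $D,E$) exists iff $\T^{\mathrm{op}}$ has a path. By K\"onig's lemma this holds iff $\T^{\mathrm{op}}$ has nodes of every height.

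It remains to read off the complexity. For fixed $m$, the length-$m$ nodes of $\T^\F_{\stb+D-E}$ form a finite set that is computable from $\F$ (the tree is computably finitely branching), and for each such $\tau$ the statement ``every prefix of $\tau$ is open'' is a finite conjunction of $\Sigma^0_1$ conditions, hence $\Sigma^0_1$; taking the finite disjunction over the candidate $\tau$ shows that ``$\T^{\mathrm{op}}$ has a node of height $m$'' is $\Sigma^0_1$. Quantifying $\forall m$ then gives a $\Pi^0_2$ description, establishing that $\ex^\icf_\infstb$ and $\cred^\icf_\infstb$ are $\Pi^0_2$. The same argument with $E=\{a\}$ shows that ``there is an infinite stable extension omitting $a$'' is $\Pi^0_2$; since $a$ is skeptically accepted exactly when no such extension exists (this correctly includes the vacuous case in which there are no infinite stable extensions at all), $\skep^\icf_\infstb$ is the complement of a $\Pi^0_2$ set and hence $\Sigma^0_2$. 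The main point requiring care is the structural equivalence of the second paragraph, together with the verification that ``open'' lands in $\Sigma^0_1$---both of which hinge on finitariness, exactly as isolated in the two features noted above; I would not expect to need the combinatorial Theorem~\ref{Infinitely many finite implies one infinite - stable} for this upper bound, as the pruned-tree argument is self-contained.
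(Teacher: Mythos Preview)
Your proof is correct and takes a genuinely different route from the paper's. The paper establishes the $\Pi^0_2$ upper bound via Theorem~\ref{Infinitely many finite implies one infinite - stable}: it argues that the complement of $\ex^\icf_\infstb$ is the $\Sigma^0_2$ condition ``there is a finite list of finite sets which exhausts $\stb(\F)$'', and uses the combinatorial theorem to rule out the case of infinitely many distinct finite stable extensions with no infinite one. Your tree-pruning argument bypasses this entirely: since along any path the committed set $\ins_\tau$ eventually equals the encoded extension whenever that extension is finite (and can never be stable when it is a proper subset of a conflict-free set), the infinite stable extensions are exactly the paths whose nodes are all ``open'', and the $\Pi^0_2$ bound follows from K\"onig's lemma applied to the pruned tree. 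One small imprecision: $\T^{\mathrm{op}}$ is finitely branching but not \emph{computably} so, since membership is only $\Sigma^0_1$; however, your complexity calculation correctly relies only on the computable finite branching of the ambient tree $\T^\F_{\stb+D-E}$ to enumerate the candidate nodes at each level, so this does not affect the argument. Your approach is more elementary and self-contained for this particular upper bound; the paper's route has the virtue that Theorem~\ref{Infinitely many finite implies one infinite - stable} is of independent combinatorial interest and keeps the treatment parallel to that of $\infad$.
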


\begin{proof}
Let $\F$ be a computably finitary AF with $A_\F=\{a_n: n\in\omega\}$.  Consider the following condition, which says that $\F$ has finitely many stable extensions and no other stable extension:
   \begin{equation}  \tag{$\mathcal{E}$}
   (\exists n)(\exists e_0,\ldots,e_n)  
    (\{\hat{D}_{e_j} : j\leq n\}=\stb(\F)).
   \end{equation}

   
We argue that $\mathcal{E}$ is a $\Sigma^0_2$ condition. First, note that checking whether a given finite set $X$ is stable is a $\Pi^0_1$ condition. Next, given a list of finite stable extensions $X_1,\ldots X_n$, then we can define the tree of all stable extensions which do not contain any of the $X_i$: 
$
\bigcap_i \bigcup_{a\in X_i} \T^\F_{\stb+\emptyset-\{a\}}
$.
This is a computably finitely branching tree, and there are no other stable extensions iff the tree has no path, which is a $\Sigma^0_1$ condition by K\"onig's Lemma. Thus, the overall complexity of $\mathcal{E}$ is a $\Sigma^0_2$. 

Finally, we show that $\mathcal{E}$ is exactly the complement of $\ex^\icf_\infstb$. Obviously, if $\F$ has an infinite stable extension $S$ then $\mathcal{E}$  does not hold, since any finite list of finite stable extensions cannot include $S$. Conversely, if $\mathcal{E}$ does not hold, then there is either an infinite stable extension or there are infinitely many distinct finite stable extensions, but the latter implies that there is an infinite stable extension by Theorem \ref{Infinitely many finite implies one infinite - stable} with $D=E=\emptyset$. Hence, $\F$ has an infinite stable extension iff $\mathcal{E}$ fails; since $E$ is $\Sigma^0_2$, this proves that $\ex^\icf_\infstb$ is $\Pi^0_2$. 

The complexity of $\cred^\icf_\infstb$ and $\skep^\icf_\infstb$ can now be readily determined by reasoning as for the cases of $\cred^\icf_\infad$ and $\skep^\icf_\infad$ in  the proof of Theorem \ref{infad upper}. Indeed, the above argument still holds if we replace the notion of stable extension by ``stable extension containing $a$'': it is enough to use Theorem \ref{Infinitely many finite implies one infinite - stable} with $D=\{a\}$, $E=\emptyset$. This proves that  $\cred^\icf_\infstb$ is $\Pi^0_2$. Finally, by applying Theorem \ref{Infinitely many finite implies one infinite - stable} with $D=\emptyset$ $E=\{a\}$, we obtain that the argument also holds for the notion  of ``stable extension omitting $a$''; this proves that $\skep^\icf_\infstb$ is $\Sigma^0_2$.
\end{proof}

\begin{restatable}{theorem}{infstbLower}
    \label{infstb lower}
    $\ex^\icf_\infstb$ and $\cred^\icf_\infstb$ are $\Pi^0_2$-hard. $\skep^\icf_\infstb$ is $\Sigma^0_2$-hard.
\end{restatable}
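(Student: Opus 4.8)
The plan is to reuse the star construction from the proof of Theorem \ref{infad lower} and to exploit the fact that for a disjoint union of stars the stable extension is \emph{unique}. Recall that $\H_n$ denotes the disjoint union of $|W_n|+1$ stars, obtained by introducing a fresh star at stage $0$ and one further star at each expansionary stage of $W_n$; this is a uniformly computable sequence of computably finitary AFs, so each index $\H_n$ lies in $\tot$. The key structural observation is that in any disjoint union of stars the only stable extension is the set of all centers: each center is unattacked and hence must belong to every stable extension, and once all centers are present, every leaf is simultaneously attacked and (being adjacent to its own center) excluded by conflict-freeness. Consequently $\H_n$ has exactly one stable extension, namely its set of centers, which is infinite iff there are infinitely many stars, i.e.\ iff $W_n$ is infinite.

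For $\ex^\icf_\infstb$, this observation immediately yields $n\in\Inf \Leftrightarrow \H_n\in\ex^\icf_\infstb$, so $n\mapsto \H_n$ is a computable reduction of the $\Pi^0_2$-complete set $\Inf$ into $\ex^\icf_\infstb$ whose range lies in $\tot$, establishing $\Pi^0_2$-hardness. For $\cred^\icf_\infstb$ I would fix $c_0$, the center of the star present at stage $0$. Since the unique stable extension is the set of centers, $c_0$ belongs to some \emph{infinite} stable extension iff that extension is infinite iff $W_n$ is infinite; hence $n\mapsto\langle \H_n, c_0\rangle$ reduces $\Inf$ to $\cred^\icf_\infstb$, giving $\Pi^0_2$-hardness again.

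For $\skep^\icf_\infstb$ I would instead reduce the $\Sigma^0_2$-complete set $\Fin$, exploiting vacuous skeptical acceptance. Fix a leaf $\ell$ of the stage-$0$ star. Since the unique stable extension consists only of centers, $\ell$ lies in no stable extension at all. Therefore $\ell$ is skeptically accepted for $\infstb$ (i.e.\ lies in every infinite stable extension) precisely when there is \emph{no} infinite stable extension, which happens iff $W_n$ is finite. Thus $n\mapsto\langle \H_n,\ell\rangle$ reduces $\Fin$ to $\skep^\icf_\infstb$, and with range in $\tot\times\nat$ this gives $\Sigma^0_2$-hardness.

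The only points requiring care are (i) verifying that the single-component analysis---every center forced in, every leaf forced out---indeed yields a unique stable extension of the disjoint union, and (ii) selecting the correct complete set for each problem ($\Inf$ for existence and credulous acceptance, $\Fin$ for skeptical acceptance) and noting that skeptical acceptance for $\infstb$ is vacuously true exactly when no infinite stable extension exists. I do not expect a serious obstacle here: the uniqueness of the stable extension renders the credulous and skeptical behaviour completely transparent, in sharp contrast to the admissible case, where the abundance of distinct finite admissible extensions forced the more delicate $u$-$\Sigma^0_2$ analysis of Theorem \ref{infad lower}.
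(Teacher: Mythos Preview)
Your proposal is correct and essentially identical to the paper's own proof: both reuse the disjoint-union-of-stars framework $\H_n$ from Theorem~\ref{infad lower}, observe that the unique stable extension is the set of centers (infinite iff $W_n$ is infinite), and then pick a center for $\cred^\icf_\infstb$ and a non-center leaf for $\skep^\icf_\infstb$ to reduce $\Inf$ and $\Fin$ respectively.
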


\begin{restatable}{theorem}{uniinfstb}
    \label{uniinfstb}
    $\uni^\icf_\infstb$ is $\Pi^0_3$-complete.
\end{restatable}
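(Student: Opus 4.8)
The plan is to prove the two bounds separately, working throughout within $\tot$ (so that each $\F_e$ is a genuine computably finitary AF and Theorem~\ref{infstb upper} applies).

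For the upper bound I would write $e\in\uni^\icf_\infstb$ as the conjunction of ``$\F_e$ has at least one infinite stable extension'' and ``$\F_e$ does not have two distinct ones.'' The first conjunct is $\ex^\icf_\infstb$, which is $\Pi^0_2$ by Theorem~\ref{infstb upper}. For the second, note that two distinct stable extensions must disagree on some argument $a_n$; hence $\F_e$ has $\geq 2$ infinite stable extensions iff $(\exists n)$ there is an infinite stable extension containing $a_n$ and an infinite stable extension omitting $a_n$. By Theorem~\ref{infstb upper} the first inner condition is $\cred^\icf_\infstb$ (so $\Pi^0_2$) and the second is the complement of $\skep^\icf_\infstb$ (so $\Pi^0_2$, since $\skep^\icf_\infstb$ is $\Sigma^0_2$). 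Thus ``$\geq 2$'' is $(\exists n)(\Pi^0_2)=\Sigma^0_3$, its negation is $\Pi^0_3$, and the full conjunction is $\Pi^0_2\wedge\Pi^0_3=\Pi^0_3$.

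For hardness I would reduce a $\Pi^0_3$-complete set. A convenient one is $\overline{Q}$, where $Q=\{e:(\exists n)(\forall j)(\langle n,j\rangle\in W_e)\}$ is $\Sigma^0_3$-complete (it asserts that some column of $W_e$ is all of $\omega$; writing $R_n(e)$ for the $\Pi^0_2$ predicate $(\forall j)(\langle n,j\rangle\in W_e)$, we have $Q=\{e:(\exists n)R_n(e)\}$). Given $e$, I would build a single computably finitary $\F_e$ carrying a fixed default infinite stable extension $S_0$ for every $e$, together with, for each $n$, a marker argument $w_n$ (with $w_n\notin S_0$) and a gadget arranged so that there is an infinite stable extension containing $w_n$ exactly when $R_n(e)$ holds. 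Concretely, the gadget for $n$ would produce infinitely many finite stable extensions containing $w_n$ precisely when every event $\langle n,j\rangle\in W_e$ fires, so that Theorem~\ref{Infinitely many finite implies one infinite - stable} (with $D=\{w_n\}$, $E=\emptyset$) yields an infinite stable extension through $w_n$ iff $R_n(e)$. Since any such extension differs from $S_0$ at $w_n$, $\F_e$ would then have $\geq 2$ infinite stable extensions iff $(\exists n)R_n(e)$, i.e. iff $e\in Q$; hence $|\infstb(\F_e)|=1$ iff $e\in\overline{Q}$. The marker events would be placed by computable stage predicates (in the style of the $\Fin$/$\Inf$ reductions of Theorem~\ref{infstb lower}), so that $e\mapsto\F_e$ is computable and lands in $\tot$.

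The main obstacle is the design of these gadgets under the finitary constraint, with every argument attacked by only finitely many others. The delicate point is that the \emph{infiniteness} of each candidate second extension must be ``earned'' by $R_n(e)$: one cannot simply force all extensions infinite by an always-present, unattacked backbone, since then every finite deviation from $S_0$ would already give a distinct infinite stable extension and ``$\geq 2$'' would collapse to a $\Sigma^0_2$ condition, destroying uniqueness in the $\overline{Q}$ case. I would therefore build each gadget from a one-way infinite path (whose two stable patterns are genuinely infinite, mutually exclusive, and have finite in-degrees) guarded by odd cycles that are externally defeatable: unlike the self-loops usable in the admissible case, stable semantics requires every omitted argument to be attacked, so the obstruction to the $w_n$-branch must be a breakable odd cycle cleared exactly at the stage an event fires. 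This is the stable analogue of the construction behind Theorem~\ref{uni infad}, and verifying that it produces no spurious infinite stable extension and keeps $S_0$ unique when all $R_n(e)$ fail is where the care is concentrated.
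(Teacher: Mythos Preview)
Your upper bound is correct and is exactly the paper's argument, phrased as $\F\in\ex^\icf_\infstb$ together with $(\forall x)(x\in\cred^\icf_\infstb\Rightarrow x\in\skep^\icf_\infstb)$.

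For the lower bound, your overall shape---a fixed default infinite stable extension $S_0$ plus, for each $n$, a marker that opens an alternative infinite stable extension exactly when a $\Pi^0_2$ condition holds---is right and is also the paper's strategy. But what you have is a plan with a placeholder where the construction should go; you explicitly defer the gadget design as ``where the care is concentrated,'' and the specific mechanisms you sketch are more intricate than needed and not obviously sound. In particular, routing through ``infinitely many \emph{finite} stable extensions containing $w_n$'' and then invoking Theorem~\ref{Infinitely many finite implies one infinite - stable} is awkward: a finite stable extension in an infinite AF must attack cofinitely many arguments, so it forces elements of infinite out-degree that interact with every other gadget and with $S_0$; arranging this while keeping $S_0$ stable and ruling out spurious infinite stable extensions across gadgets is a nontrivial global constraint you have not addressed. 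The later pivot to ``one-way infinite paths'' and ``externally defeatable odd cycles'' describes a different mechanism and leaves the proposal internally inconsistent about whether the alternative extensions are finite or infinite.

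The paper's construction avoids all of this and is much simpler. Take $A=\{a_i,b_i:i\in\omega\}$ with $a_i\leftrightarrow b_i$, and let $a_i$ attack both $a_{i+j}$ and $b_{i+j}$ whenever $j$ is a \emph{non}-expansionary stage for $V^n_i$ (where $n\in X$ iff every $V^n_i$ is finite). Then $S_0=\{b_i:i\in\omega\}$ is always stable. Any other infinite stable extension must contain some $a_i$; but $a_i$ kills $a_{i+j}$ and $b_{i+j}$ at every non-expansionary $j$, so infiniteness forces $V^n_i$ to be infinite. Conversely, if $V^n_i$ is infinite then $\{a_i\}\cup\{b_j:a_i\not\att b_j\}$ is a second infinite stable extension. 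No odd cycles, no finite stable extensions, no appeal to Theorem~\ref{Infinitely many finite implies one infinite - stable}. Your proposal would benefit from replacing the sketched gadgetry with this direct construction.
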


\section{Conclusions}\label{sec:conclusions}

In this paper, we tackled the question of how the assumption of being finitary leads to a decrease in complexity of computational problems on AFs. We observed that this is not always the case, as in the semantics of $\cf,\na,\infcf,$ and $\infna$, where there was no decrease in complexity. On the other hand, for the admissibility-based semantics we considered, we found a significant drop of complexity to low levels of the arithmetical hierarchy. In fact, these complexities are so reduced that there are algorithms that approximate, via a limiting procedure, which arguments are credulously or skeptically accepted for $\ad,\co,\stb,\infad,\infstb$ (see Remark \ref{computable approximations}). The problems of credulous acceptance of arguments for admissible, stable, or complete semantics are co-c.e., meaning that all arguments can be accepted unless we witness a contradiction to that argument at some finite time. Skeptical acceptance of arguments for admissible, stable, or complete are c.e.\ problems, meaning that if an argument should be skeptically accepted, then we know this at finite time. 

For reasoning with each of these semantics, we conclude that the infinite but finitary argumentation frameworks provide a natural setting for reasoning which balances well the competing goals of being expressive enough to be applied to a myriad of reasoning settings, yet being computationally tractable enough for the analysis within the framework to be useful.

We find the drop to arithmetical levels of the decision problems for infinite semantics to be quite surprising. These rely on the combinatorial arguments in Theorems \ref{Infinitely many finie implies one infinite} and \ref{Infinitely many finite implies one infinite - stable}, which provide a logical compactness-style result for reasoning with admissible and stable semantics. 
We leave open the intriguing question of the complexity of $\skep^\icf_\infco$ and $\uni^\icf_\infco$.


\pagebreak

\section{Supplementary Material}\label{supplement}

\subsection{Details for Section \ref{sec: cfna}}

\subsubsection{Encoding extensions into trees}\label{appendix: encoding extensions into trees}

Theorem \ref{encoding extensions into trees} is a major technical tool for understanding the complexity of the classical semantics in both the non-finitary and finitary settings. The theorem as stated here is a generalization of several theorems from \cite{andrews2024FCR}. Essentially the same proofs work with some alterations: We do not need a branching at level $0$ to ensure that the extension is non-empty, since we accept the empty extension. We put $D$ into the set $\ins_\sigma$ for any $\sigma$ and $E$ into the set $\out_\sigma$ for any $\sigma$. Since $D$ and $E$ may be infinite, though in our application in this paper one will be empty and the other a single argument, the definition of $\sigma$ being on $\T$ only checks the properties given in \cite{andrews2024FCR} for numbers $< |\sigma|$. 

\thetreetheorem*

We present the proofs for this theorem separately in each of the cases of admissible, stable, and complete semantics. Theorem  \ref{encoding extensions into trees} is the combination  of Theorems \ref{admissible computable bijection ne}, \ref{stable computable bijection ne}, and \ref{encoding complete} proved below.

\subsubsection*{The admissible case}
Given an argumentation framework $\F=(A_\F,R_\F)$ with $A_\F=\{a_i : i\in \omega\}$ and sets $D,E\subseteq A_\F$, we will describe a tree $\T^\F$ so that the paths of $\T^\F$ encode the admissible extensions in $\F$ containing $D$ and disjoint from $E$.
We begin with an intuitive description of how a path $\pi$ through the tree $\T^\F$ will encode an admissible extension $S$, and we give the formal definition of $\T^\F$ below.

Branching in $\T^\F$ will come in two flavors. For any $j>0$, the branching on level $2j$ serves to code whether or not $j\in S$. Branching on the odd levels serve to explain how $S$ satisfies the hypothesis of being an admissible extension. If $a_i\att a_j$ is the $n$th element of some (computable, if $\F$ is computable) enumeration of all attacking pairs of arguments, then $\sigma(2n+1)$ will be $0$ if $a_j\notin S$ and otherwise will be $k+1$ where $k$ is least so that $a_k\in S$ and $a_k\att a_i$.


Let $(g_n)_{n\in\omega}$ be a (computable, if $\F$ is computable) sequence of all elements of $R_\F$. If $g_n=(a_i, a_j)$, we denote $a_i$ by $g_n^-$ and $a_j$ by $g_n^+$. We now formally define the tree $\T^\F$. 

\begin{definition}
Any given  string $\sigma\in\omega^{<\omega}$ defines two subsets of arguments in $A_\F$:
\begin{itemize}
\item $\ins_\sigma=D\cup \{a_j : \sigma(2j)=1\} \cup \{a_k :(\exists j)(\sigma(2j+1)=k+1)\}\cup \{a_i : (\exists j)(\sigma(2j+1)>0 \wedge g_j^+ = a_i)\}$;

\smallskip

\item $\out_\sigma=E\cup \{a_j:\sigma(2j)=0\} \cup  \{a_i : (\exists j)(\sigma(2j+1)>i+1 \wedge a_i\att g^-_j)\}\cup \{a_i : (\exists j)(\sigma(2j+1)=0\wedge g_j^+=a_i)\} $.
\end{itemize}

We define $\T^\F$ as the set of $\sigma$ so that 
\begin{itemize}
    \item $\ins_\sigma$ does not contain elements $a_i\att a_j$ with $i,j<|\sigma|$;
    \item $\ins_\sigma\cap \out_\sigma\cap \{a_i : i<|\sigma|\} = \emptyset$
    \item If $2j<|\sigma|$, then $\sigma(2j)\in \{0,1\}$;
    \item If $2j+1<|\sigma|$ and $\sigma(2j+1)=k+1$, then $a_{k}\att g_j^-$.
\end{itemize}

\end{definition}

Note that if $\F$ is computable, then $\T^\F$ is computable, and if $\F$ is computably finitary, then $\T^\F$ is computably finitely branching.

\begin{theorem}\label{admissible computable bijection ne}
    Let $\F$ be a (computable) argumentation framework. Then the admissible extensions of $\F$ which contain $D$ and are disjoint from $E$ are in (computable) bijection with the paths in $\T^\F$.
\end{theorem}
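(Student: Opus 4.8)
The plan is to exhibit the bijection explicitly via two mutually inverse maps and verify the requisite properties. In one direction, given a path $\pi\in[\T^\F]$, I set $S_\pi:=\{a_j : \pi(2j)=1\}$ and write $\ins_\pi:=\bigcup_{\sigma\prec\pi}\ins_\sigma$, $\out_\pi:=\bigcup_{\sigma\prec\pi}\out_\sigma$. The first thing I would establish is that $\ins_\pi=S_\pi$ and $\out_\pi=A_\F\setminus S_\pi$. The inclusion $S_\pi\subseteq\ins_\pi$ is immediate from the definition of $\ins_\sigma$; conversely, any $a_j\in\ins_\pi$ cannot have $\pi(2j)=0$, for then $a_j\in\ins_\pi\cap\out_\pi$, violating the consistency clause in the definition of $\T^\F$, and since even coordinates are binary we conclude $a_j\in S_\pi$. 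From this it follows at once that $D\subseteq S_\pi$ and $S_\pi\cap E=\emptyset$, and the conflict-freeness clause forces $S_\pi$ to be conflict-free.

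The key step is admissibility of $S_\pi$. Given $a=g_j^+\in S_\pi$ attacked by $b=g_j^-$, I would argue that $\pi(2j+1)\neq 0$: otherwise the last clause of $\out_\sigma$ would throw $g_j^+=a$ into $\out_\pi$, contradicting $a\in\ins_\pi$. Hence $\pi(2j+1)=k+1$ for some $k$, the fourth defining clause of $\T^\F$ guarantees $a_k\att g_j^-$ (i.e.\ $a_k\att b$), and $a_k$ is forced into $\ins_\pi=S_\pi$ as a recorded defender. Thus every element of $S_\pi$ is defended, so $S_\pi\in\ad(\F)$.

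In the reverse direction, for an admissible $S$ containing $D$ and disjoint from $E$ I define $\pi_S$ by $\pi_S(2j)=1$ iff $a_j\in S$, and $\pi_S(2j+1)=0$ if $g_j^+\notin S$, else $\pi_S(2j+1)=k+1$ where $a_k$ is the \emph{least} element of $S$ with $a_k\att g_j^-$ (which exists because $S$, being admissible and containing $g_j^+$, defends it). I would then check that every prefix lies in $\T^\F$: the binary-coordinate and valid-defender clauses hold by construction, while the conflict-freeness and consistency clauses reduce to verifying $\ins_{\pi_S}\subseteq S$ and $\out_{\pi_S}\cap S=\emptyset$, which I would confirm clause by clause.

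The main obstacle, and the reason the two maps are genuinely inverse rather than merely giving a surjection of paths onto extensions, is the \emph{leastness} encoded at the odd coordinates; without it a single admissible $S$ could be represented by many paths differing in their choice of defender. This is exactly what the third clause of $\out_\sigma$ prevents: choosing $\pi(2j+1)=k+1$ forces every $a_i$ with $i<k$ and $a_i\att g_j^-$ into $\out_\pi$, hence out of $S_\pi$, so the recorded defender index must be the least defender actually present in $S$. The delicate bookkeeping is to show that declaring these smaller potential defenders ``out'' never conflicts with an element that is genuinely required to be ``in,'' which is what makes $\Psi(S)=\pi_S$ well defined and yields $\Phi\circ\Psi=\mathrm{id}$ and $\Psi\circ\Phi=\mathrm{id}$. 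Finally, computability of the correspondence is routine, since each defining clause is decidable uniformly from $\F$ and the enumeration $(g_n)_{n\in\omega}$, as already observed preceding the statement.
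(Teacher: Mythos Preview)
Your proposal is correct and follows essentially the same approach as the paper: both define $S_\pi=\{a_j:\pi(2j)=1\}$, identify it with $\bigcup_{\sigma\prec\pi}\ins_\sigma$ via the consistency clause, verify admissibility through the odd coordinates, and construct the inverse map $\pi_S$ using the \emph{least} defender. Your discussion of why the leastness requirement (enforced by the third clause of $\out_\sigma$) is what makes the correspondence a bijection rather than a mere surjection is more explicit than the paper's, which simply asserts that the two maps are inverses; this is a welcome clarification but not a different argument.
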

\begin{proof}
    Given a non-empty admissible extension $S$ of $\F$ which contains $D$ and is disjoint from $E$, we define the corresponding path $\pi$ in $\T^\F$ as follows. Let $\pi(2j)=1$ if $a_j\in S$ and $\pi(2j)=0$ otherwise. Let $\pi(2n+1)$ be $0$ if $g_n^+\notin S$ and be $k+1$ where $k$ is least so that $a_k\in S$ and $a_k\att g_n^-$ otherwise. It is straightforward to check that $\pi\in [\T^\F]$.

    Given a path $\pi$ through $\T^\F$, first note that whenever there is some $\sigma \prec \pi$ so that $a_n\in \ins_\sigma$, then $\pi(2n)=1$. This is because whenever $\sigma\preceq \tau$, then $\ins_\sigma\subseteq \ins_\tau$. Then since $\tau:=\pi\restriction_{\max(|\sigma|,2n+1)}$ is in $\T^\F$, we cannot have $\tau(2n)=0$ since $a_n\in \ins_\tau$. Thus $\pi(2n)=\tau(2n)=1$. It follows that $\bigcup_{\sigma\prec \pi} \ins_\sigma = \{a_n: \pi(2n)=1\}$. The same argument shows that $\bigcup_{\sigma\prec \pi} \out_\sigma = \{a_n: \pi(2n)=0\}$.
    Let $S=\{a_n: \pi(2n)=1\}$. This also shows that $D\subseteq S$ and $E\cap S = \emptyset$.
    
    Note that $S$ is conflict-free, since if $a_i,a_j\in S$ then there is some long enough $\sigma\prec \pi$ so that $a_i,a_j\in \ins_\sigma$ and $i,j<|\sigma|$. Since $\sigma\in \T^\F$, it follows that $a_i\natt a_j$. Next, observe that $S$ defends itself. If $a_i\att a_j$ and $a_j\in S$, then there is some $n$ so that $g_n= (a_i, a_j)$. Then consider $\sigma = \pi\restriction_{2n+2}$. We must have $\sigma(2n+1)=k+1$ for some $k$ with $a_k\in S$ and $a_k\att a_i$.

    Finally, note that both the map from $\pi$ to $S$ and from $S$ to $\pi$ are computable if $\F$ is computable and are inverses of each other.
\end{proof}

\subsubsection*{The stable case} 


Similarly, we can construct a tree encoding the stable extensions including $D$ and disjoint from $E$ by making $\sigma(n) = 0$ if $a_n\in S$ and otherwise making $\sigma(n)$ be $k+1$ where $k$ is least so that $a_k\in S$ and $a_k\att a_n$.

\begin{definition}
Any given  string $\sigma\in\omega^{<\omega}$ defines two subsets of arguments in $A_\F$:
\begin{itemize}
\item $\ins_\sigma=D\cup \{a_i: \sigma(i)=0\}\cup \{a_{\sigma(i)-1}: i<|\sigma|\wedge \sigma(i)>0\}$;

\smallskip

\item $\out_\sigma=E\cup \{a_i : i<|\sigma|\wedge \sigma(i)>0\}\cup 
\{a_i : (\exists j)\sigma(j)>i+1 \wedge a_i\att a_j)\}$.
\end{itemize}

We define $\T^\F$ as the set of $\sigma$ so that 
\begin{itemize}
    \item $\ins_\sigma$ does not contain elements $a_i\att a_j$ with $i,j<|\sigma|$;
    \item $\ins_\sigma\cap \out_\sigma\cap \{a_i : i<|\sigma|\} = \emptyset$
    \item If $j<|\sigma|$ and $\sigma(j)=k+1$, then $a_{k}\att a_j$.
\end{itemize}

\end{definition}

Note that if $\F$ is computable, then $\T^\F$ is computable, and if $\F$ is computably finitary, then $\T^\F$ is computably finitely branching.

\begin{theorem}\label{stable computable bijection ne}
    Let $\F$ be a (computable) argumentation framework. Then the stable extensions of $\F$ which contain $D$ and are disjoint from $E$ are in (computable) bijection with the paths in $\T^\F$.
\end{theorem}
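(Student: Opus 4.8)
The plan is to establish a computable bijection between the stable extensions of $\F$ containing $D$ and disjoint from $E$ and the paths through $\T^\F$, exactly paralleling the structure of the proof of Theorem \ref{admissible computable bijection ne}. The encoding here is simpler because stable extensions have a cleaner characterization: for each argument $a_n$, either $a_n\in S$ (coded by $\sigma(n)=0$) or $a_n$ is attacked by some element of $S$ (coded by $\sigma(n)=k+1$, recording a witness $a_k\in S$ with $a_k\att a_n$). There is no separate need to branch on the odd levels to encode defense, since in a stable extension every outside argument must be directly attacked, and this single level of branching records exactly that attack.

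First I would define the map from a stable extension $S$ to a path $\pi$. For each $n$, set $\pi(n)=0$ if $a_n\in S$; otherwise, since $S$ is stable, $a_n\in S^+$, so there is a least $k$ with $a_k\in S$ and $a_k\att a_n$, and I set $\pi(n)=k+1$. I would then verify that every initial segment $\sigma\prec\pi$ lies in $\T^\F$ by checking the three defining conditions: conflict-freeness of $\ins_\sigma$ (from conflict-freeness of $S$), disjointness of $\ins_\sigma$ and $\out_\sigma$ below $|\sigma|$, and the requirement that $\sigma(j)=k+1$ entails $a_k\att a_j$ (immediate from the definition of $\pi$). I must also confirm $D\subseteq S$ and $E\cap S=\emptyset$ are respected, which hold by hypothesis and the fact that $D\subseteq\ins_\sigma$ and $E\subseteq\out_\sigma$ always.

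Conversely, given a path $\pi\in[\T^\F]$, I would define $S=\{a_n:\pi(n)=0\}$ and argue it is a stable extension containing $D$ and disjoint from $E$. The key preliminary observation, as in the admissible case, is that $\ins_\sigma$ is monotone in $\sigma$, so $a_n\in\ins_\sigma$ for some $\sigma\prec\pi$ forces $\pi(n)=0$ (otherwise a sufficiently long initial segment would violate the disjointness condition in $\T^\F$). This yields $\bigcup_{\sigma\prec\pi}\ins_\sigma=S$ and $\bigcup_{\sigma\prec\pi}\out_\sigma=A_\F\smallsetminus S$. Conflict-freeness of $S$ follows exactly as before. For stability I need $S^+=A_\F\smallsetminus S$: if $a_n\notin S$ then $\pi(n)=k+1>0$, and the tree condition guarantees $a_k\att a_n$ with $a_k\in\ins_\sigma\subseteq S$, so $a_n$ is attacked by $S$; conversely every attacked-by-$S$ argument lies outside $S$ by conflict-freeness.

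The main obstacle—though a minor one—is checking that the two maps are genuine inverses and that no argument ends up simultaneously forced in and out, i.e.\ that the branching value $k+1$ records a \emph{genuine} member of $S$ rather than merely a syntactic attacker. This is handled by the monotonicity observation: the witness $a_k$ placed into $\ins_\sigma$ via a branching choice $\sigma(j)=k+1$ is itself forced to satisfy $\pi(2\cdot?) $—here $\pi(k)=0$—so it really does belong to $S$, keeping the correspondence consistent. Finally, I would note that both maps are computable when $\F$ is computable and are mutually inverse by construction, completing the bijection.
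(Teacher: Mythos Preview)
Your proposal is correct and follows essentially the same approach as the paper: define $\pi$ from $S$ by $\pi(n)=0$ if $a_n\in S$ and $\pi(n)=k+1$ for the least attacker $a_k\in S$ otherwise, recover $S$ from $\pi$ as $\{a_n:\pi(n)=0\}$, use monotonicity of $\ins_\sigma$ to show $\bigcup_{\sigma\prec\pi}\ins_\sigma=S$, and verify conflict-freeness and stability. The one cosmetic slip is the stray ``$\pi(2\cdot?)$'' (a remnant of the admissible-case indexing); in the stable case the correct statement is simply $\pi(k)=0$, which you immediately supply.
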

\begin{proof}
    Given a stable extension $S$ of $\F$ which contains $D$ and is disjoint from $E$, we define the corresponding path $\pi$ in $\T^\F$ as follows. For each $n$, let $\pi(n)$ be 0 if $a_n\in S$ and let $\pi(n)$ be $k+1$ where $k$ is least so that $a_k\in S$ and $a_k\att a_n$ otherwise. It is straightforward to check that $\pi\in [\T^\F]$.

    Given a path $\pi$ through $\T^\F$, first note that whenever there is some $\sigma\prec \pi$ so that $a_n\in \ins_\sigma$, then $\pi(n)=0$. This is because whenever $\sigma\preceq \tau$, then $\ins_\sigma\subseteq \ins_\tau$. Then since $\tau=\pi\restriction_{\max(|\sigma|,n+1)}$ is on $\T^\F$, we cannot have $\tau(n)\neq 0$ since $a_n\in \ins_\tau$ and therefore cannot be in $\out_\tau$. It follows that $\bigcup_{\sigma\prec \pi} \ins_\sigma = \{a_n: \pi(n)=0\}$. Let $S=\{a_n: \pi(n)=0\}$. This also shows $D\subseteq S$ and $E\cap S = \emptyset$.

    Note that $S$ is conflict-free, since if $a_i,a_j\in S$, then $a_i\not\att a_j$ since $\sigma = \pi\restriction_{\max(i,j)+1}$ is in $\T^\F$, and thus $\ins_\sigma$ is conflict-free. Next observe that for any $n$, either $a_n\in S$ or there is some $m$ so that $a_m\in S$ and $a_m\att a_n$. In particular, if $\pi(n)=0$, then $a_n\in S$ and otherwise $a_{\pi(n)-1}\in S$ and $a_{\pi(n)-1}\att a_n$. 

    Finally, note that both the map from $\pi$ to $S$ and from $S$ to $\pi$ are computable if $\F$ is computable and are inverses of each other.
\end{proof}

\subsubsection*{The complete case}

Given an argumentation framework $\F$ and $D,E\subseteq A_\F$, we can similarly construct a tree $\T^\F$ so that paths through $\T^\F$ code complete extensions containing $D$ and disjoint from $E$. In order to ensure that $f_\F(S)\subseteq S$, we will need the paths in $\T^\F$ to not only code sets $S$ but also their attacked sets $S^+$. 

Given an extension $S$, we will let $\pi\in \T^\F$ encode $S$ and $S^+$ as follows:
\begin{itemize}
    \item $\pi(2n) = 0$ if $a_n\in S$ and otherwise $\pi(2n)=k+1$ where $k$ is least so $a_k\notin S^+$ and $a_k\att a_n$.
    \item $\pi(2n+1) = 0$ if $a_n\notin S^+$ and otherwise $\pi(2n+1)=k+1$ where $k$ is least so $a_k\in S$ and $a_k\att a_n$.
\end{itemize}

Note that $\pi(2n)$ explains why $a_n$ is either in $S$ or it is not in $f_\F(S)$, i.e., $f_\F(S)\subseteq S$, while $\pi(2n+1)$ simply verifies that the elements which $\pi$ says are in $S^+$ are in fact in $S^+$.

Formally, we define $\T^\F$ as follows:

\begin{definition}\label{Tree of complete extensions}
Any given  string $\sigma\in\omega^{<\omega}$ defines four subsets of arguments in $A_\F$:
\begin{itemize}
\item $\ins_\sigma= D\cup 
\{a_i : \sigma(2i)=0\} 
\cup 
\{a_k :(\exists j)(\sigma(2j+1)=k+1\}$ 

\smallskip

\item $\out_\sigma= E\cup 
\{a_i: \sigma(2i)\neq 0\} 
\cup 
\{a_i : (\exists j)\sigma(2j+1)>i+1 \wedge a_i\att a_j)\} 
$
\smallskip

\item $\text{InSplus}_\sigma = 
\{a_i : (\exists j)(\sigma(2j)>i+1\wedge a_i\att a_j )\} 
\cup
\{a_i: \sigma(2i+1)\neq 0\} 
$
\item $\text{OutSplus}_\sigma = 
\{a_i : (\exists j) \sigma(2j)=i+1\} 
\cup
\{a_i : \sigma(2i+1)=0\} 
$
\end{itemize}

We define $\T^\F$ as the set of $\sigma$ so that 
\begin{enumerate}
    \item $\ins_\sigma$ does not contain elements $a_i\att a_j$ with $i,j<|\sigma|$;
    \item $\ins_\sigma\cap \out_\sigma\cap \{a_i : i<|\sigma|\} = \emptyset$;
    \item $\text{InSplus}_\sigma\cap \text{OutSplus}_\sigma = \emptyset$;
    \item If $\sigma(2j)=k+1$, then $a_{k}\att a_j$;
    \item If $\sigma(2j+1)=k+1$, then $a_k \att a_j$;
    \item For $j,k<|\sigma|$, if $a_k\in \text{OutSplus}_\sigma$ and $a_j\in \ins_\sigma$ then $a_j\not\att a_k$;
    \item For $n,m<|\sigma|$, if $a_n\in \text{OutSplus}_\sigma$ and $a_m\in \ins_\sigma$ then $a_n\not\att a_m$ (i.e., $\sigma$ does not contradict $S\subseteq f_\F(S)$).
\end{enumerate}

\end{definition}

Note that if $\F$ is computable, then $\T^\F$ is computable, and if $\F$ is computably finitary, then $\T^\F$ is computably finitely branching.

\begin{theorem}\label{encoding complete}
    The complete extensions of $\F$ which contain $D$ and are disjoint from $E$ are in bijection with the set of paths $[\T^\F]$. 
\end{theorem}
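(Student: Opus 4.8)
The plan is to follow the two-map strategy of Theorems \ref{admissible computable bijection ne} and \ref{stable computable bijection ne}: define a map sending each complete extension $S$ (containing $D$, disjoint from $E$) to the path $\pi$ prescribed by the displayed encoding, define an inverse map sending each $\pi\in[\T^\F]$ to a set $S=\{a_n:\pi(2n)=0\}$, and check that the two maps are mutually inverse and computable when $\F$ is. The one genuinely new feature over the admissible case is that a path must \emph{simultaneously} encode two sets, $S$ and its attacked set $S^+$: the even coordinates $\pi(2n)$ either witness $a_n\in S$ or exhibit an attacker of $a_n$ lying outside $S^+$ (certifying $a_n\notin f_\F(S)$), while the odd coordinates $\pi(2n+1)$ certify membership of $a_n$ in $S^+$. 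Keeping these two codings mutually consistent, via the bookkeeping sets $\text{InSplus}_\sigma$ and $\text{OutSplus}_\sigma$, is the crux.

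For the forward map I set $\pi(2n)$ and $\pi(2n+1)$ using the least witness in each case exactly as in Definition \ref{Tree of complete extensions}, then verify that every initial segment $\sigma=\pi\restriction_m$ satisfies the seven clauses. Clauses (1)--(2) are conflict-freeness and consistency of the $\ins/\out$ labelling (handled as in the admissible case, noting that $\ins_\sigma\subseteq S$ because the odd-coordinate witnesses are placed into $\ins_\sigma$ and $S$ contains them), and clauses (4)--(5) hold since the chosen witnesses are genuine attackers. The substantive clauses are (6) and (7): clause (6) records that $S$ attacks everything it is tagged as attacking, so that no member of $S$ contradicts an argument placed in $\text{OutSplus}$ (i.e. tagged outside $S^+$); this follows from $\text{OutSplus}_\sigma$ consisting only of arguments genuinely outside $S^+$. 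Clause (7) is precisely self-defence: if $a_n\in\text{OutSplus}_\sigma$ (so $a_n\notin S^+$) attacked $a_m\in\ins_\sigma\subseteq S$, then $S$ would fail to defend $a_m$, contradicting admissibility of the complete extension $S$.

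For the inverse map, given $\pi\in[\T^\F]$ I first establish the accumulation lemma as before: monotonicity $\ins_\sigma\subseteq\ins_\tau$ (and $\out_\sigma\subseteq\out_\tau$) for $\sigma\preceq\tau$, together with the clause-(2) ban on $\ins\cap\out$ collisions, yields $\bigcup_{\sigma\prec\pi}\ins_\sigma=\{a_n:\pi(2n)=0\}=:S$ and the complementary statement for $\out$, whence $D\subseteq S$ and $E\cap S=\emptyset$; conflict-freeness of $S$ is clause (1). The heart is then $S=f_\F(S)$. For $S\subseteq f_\F(S)$ I take $a_m\in S$ and an attacker $a_n$; clause (7) forbids $a_n\in\text{OutSplus}$, so in particular $\pi(2n+1)\neq 0$, which by the odd-coordinate coding (clause (5), together with the clause forcing the witness $a_k$ into $\ins$, hence into $S$) gives a member of $S$ attacking $a_n$, i.e. $a_n\in S^+$; thus $S$ defends $a_m$. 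For $f_\F(S)\subseteq S$ I take $a_n\notin S$, so $\pi(2n)=k+1$ with $a_k\att a_n$ (clause (4)) and $a_k\in\text{OutSplus}$; clause (6) rules out any member of $S$ attacking $a_k$, so $a_k\notin S^+$, exhibiting an undefended attacker of $a_n$ and proving $a_n\notin f_\F(S)$. Hence $S\in\co(\F)$.

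The main obstacle I expect is the interdependence of the two codings: I must confirm that along a path the sets $\text{InSplus}_\sigma$ and $\text{OutSplus}_\sigma$ accumulate to correct approximations of $S^+$ and its complement, that the least-witness definitions in the even and odd coordinates never disagree (this is exactly what clause (3) guards, and it must be shown compatible with clauses (6) and (7)), and that $\text{InSplus}/\text{OutSplus}$ need not tag every argument---only the tagged arguments drive the verification. Once this bookkeeping is pinned down, the two inclusions of $S=f_\F(S)$ above go through, and the mutual-inverse and computability checks are routine adaptations of the admissible and stable cases.
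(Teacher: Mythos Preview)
Your proposal is correct and follows essentially the same route as the paper's proof; the only organisational difference is that the paper explicitly names the auxiliary set $U=\{i:\pi(2i+1)\neq 0\}$ and first establishes $U=S^+$ (via conditions (5) and (6)) before verifying completeness, whereas you fold that step directly into the two inclusions using clauses (6) and (7). One small caveat: the paper remarks after the proof that the map $S\mapsto\pi$ is \emph{not} computable (it requires access to $S^+$, which is not uniformly computable from $S$), so your closing claim that the ``computability checks are routine adaptations'' should be dropped for that direction---though the theorem as stated claims only a bijection, not a computable one, so this does not affect correctness.
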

\begin{proof}
    Let $S$ be any complete extension containing $D$ and disjoint from $E$. We can define $\pi$ from $S$ as follows: 
    \begin{itemize}
        \item $\pi(2n)$ be 0 if $a_n\in S$ and otherwise $\pi(2n)=k+1$ where $k$ is least so $a_k\notin S^+$ and $a_k\att a_n$.
        \item $\pi(2n+1)=0$ if $a_n\notin S^+$ and otherwise $\pi(2n+1)=k+1$ where $k$ is least so $a_k\in S$ and $a_k\att a_n$.
    \end{itemize} 
    It is straightforward to verify that each condition (1-7) of Definition \ref{Tree of complete extensions} is satisfied by $\pi\restriction_n$ for each $n$.

    Given a path $\pi\in [\T^\F]$, we can define sets $S=\{i : \pi(2i)=0\}$ and $U=\{i : \pi(2i+1)\neq 0\}$. We note that when $\sigma \preceq \tau$, $\ins_\sigma\subseteq \ins_\tau$. It follows from this fact, as in the previous theorems, that $S=\bigcup_{\sigma \prec \pi} \ins_\sigma$. Thus $S$ contains $D$ and is disjoint from $E$. Similarly, $U=\bigcup_{\sigma\prec \pi} \text{InSplus}_\sigma$. 
    
    Next we see that $U=S^+$. If $n\in U$, then $\pi(2n+1)\neq 0$ and by condition 5, we have $a_{\pi(2n+1)-1}$ attacks $a_n$. But then $a_{\pi(2n+1)-1}\in \ins_{\pi\restriction_{2n+2}}$, so $a_{\pi(2n+1)-1}\in S$. Thus $U\subseteq S^+$. On the other hand if $n\notin U$, then condition 6 for all $\sigma$ of length $>2n+1$ ensures that there is no $a_j\in S$ so $a_j\att a_n$. Thus $S^+\subseteq U$.

    Next we verify that $S$ is complete. $S$ is clearly conflict free by Condition 1. Condition 7 ensures that any $a_m\in S$ is also in $f_\F(S)$, since if $n\notin U$, then $a_n\not\att a_m$. To see that $f_\F(S)\subseteq S$, note that any $a_n\notin S$ has $\pi(n)\neq 0$ and $a_{\pi(n)-1}\notin S^+$ and $a_{\pi(n)-1}\att a_n$ by condition 4. Thus $a_n\notin f_{\F}(S)$.

    Finally, note that the map from $S$ to $\pi$ and $\pi$ to $S$ are inverses of each other.
\end{proof}

Note that in the complete case, we do not have a computable bijection between the paths in $\T^\F$ and the complete extensions in $\F$. Rather, we get a computable bijection between the paths in $\T^\F$ and the pairs $(S,S^+)$ where $S$ is a complete extension in $\F$. As such, we have a computable map from the paths in $\T^\F$ to the complete extensions, but since $S^+$ is not uniformly computed from $S$, we have no computable way to take a complete extension and find the corresponding path in $\T^\F$.

%
%
%




\subsubsection{Computational problems for conflict-free and naive semantics}\label{supplement: cf and na}




We write $\varphi(n)[s]$ to denote the
outcome of computing $\varphi$, on input $n$, for $s$ many steps: if such
computation converges, namely $\varphi(n)$ halts within $s$ many steps, we write $\varphi(n)[s]\downarrow$, otherwise we write $\varphi(n)[s]\uparrow$.

\allcf*


\begin{proof}
$(i)$ An argument $x$ belongs to some conflict-free extension of a given AF $\F$ iff 
$x$ is not self-defeating, i.e, $x\natt x$. Indeed, if $x\natt x$ holds, then $\{x\}\in \cf({\F})$; on the other hand, $x\att x$ implies that no conflict-free extension can contain $x$. Since the attack relation in $\F_e$ is uniformly computable from the index $e$ (recall from Remark \ref{rem:inTot} that we discuss complexity inside \tot), we deduce that $\cred_\cf^\icf$ is computable.

$(ii)$ In the last item, we observed that a given AF $\F$ has a non-empty conflict-free extension iff $\F$ contains some argument $x$ so that $x\not\att x$. As the attack relation is computable and the existential quantifier ranges over arguments, this immediately proves that the set $\nemp^\icf_\cf$ is $\Sigma^0_1$. To see that $\nemp^\icf_\cf$ is $\Sigma^0_1$-hard (and therefore $\Sigma^0_1$-complete), we dynamically define a  sequence of computable AFs $(\F^K_n)_{n\in\omega}$ as follows: At each stage $s$, we impose that the argument $s$ is self-defeating in $\F_n$ iff we witness that $\phi_n(n)[s]\uparrow$. The effect of this action is that of reducing the halting set $K$ into $\nemp_\cf^\icf$: indeed, if $x\in K$, then $\F^K_x$ has non-empty extensions as it contains  non-self-defeating arguments; otherwise, $x\att x$ for all arguments in $\F$, thus the only conflict-free extension in $\F_x$ would be the empty one. Since $K$ is $\Sigma^0_1$-hard, we conclude that $\nemp^\icf_\cf$ is $\Sigma^0_1$-complete.

$(iii)$ Since the empty set is a conflict-free extension, we have that a given  $\F$ has unique conflict-free extension iff $\F$ has no non-empty conflict-free extension. Hence, $\uni^\icf_\cf$ is the complement of $\nemp^\icf_\cf$ (in $\tot$), which shows that  $\uni^\icf_\cf$ is $\Pi^0_1$-complete.
\end{proof}

\allnaive*


\begin{proof}
    $(i)$ and $(iii)$: It follows from a standard Zorn's Lemma argument that every conflict-free extension is contained in a naive extension. Hence, $\cred^\icf_\na$ equals $\cred^\icf_\cf$, and therefore is computable. For the same reason,  $\nemp^\icf_\na$ equals $\nemp^\icf_\cf$, and therefore is $\Sigma^0_1$-complete. 

    $(ii)$: To prove that $\skep_\na^\icf$ is $\Pi^0_1$, we will prove that, for all AF $\F$, $a\in\skep_\na^\icf(\F)$ iff $a$ is non-self-defeating and the following condition $(\dagger)$ holds: $(\forall y \in A_\F)(y\not\att y \Rightarrow \{y,a\}\in \cf(\F))$.  Indeed, suppose that $a$ is in every naive extension and let $y\in A_\F$ be non-self-defeating. By Zorn's lemma, there must be a naive extension $U$ containing $y$ (since $\{y\}$ is conflict-free), but then $\{y,a\}\subseteq U$, since $a$ is  in every naive extension. It follows that $\{y,a\}$ is conflict-free, as desired. Conversely, if $a$ has no conflict with any element which is non-self-defeating, then whenever $U$ is conflict-free, then $U\cup \{a\}$ is also conflict-free, so $a$ is contained in every naive extension. Thus, $\skep_\na^\icf$ is $\Pi^0_1$, since this is the complexity of condition $(\dagger)$. The completeness is obtained by defining a computable $\F$ with $A_\F:=\{a_n : n\in\omega\} \cup \{b_k : k\in\omega\}$ and the attack relations: $a_n\att b_{\langle n, s\rangle}$ if $s$ is the least stage so that $\phi_n(n)[s]\downarrow$. 
In light of condition $(\dagger)$, the map $n\mapsto a_n$ gives a reduction from $\overline{K}$ to $\skep^\icf_\na(\F)$, thus proving that $\skep^\icf_\na$ is $\Pi^0_1$-complete.

$(iv)$: We observe that $\F$ has a unique naive extension iff the set $S=\{x: x\not\att x \}$ of the non-self-defeating arguments in $\F$ is conflict-free. Indeed, if $\F$ contains arguments $x\not\att x$ and $y\not\att y$ so that $\{x,y\}$ is not conflict-free, then $\{x\}$ and $\{y\}$ are contained in distinct naive extensions. Thus, $\F$ having a unique naive extension implies that $S$ is conflict-free. On the other hand, if $S$ is conflict-free, then every conflict-free extension is contained in $S$, so $S$ is the unique naive extension in $\F$.
This shows that $\uni_\na^\icf$ is $\Pi^0_1$, since $\F$ has unique naive extension iff $(\forall x,y)(x\not\att x \, \& \, y\not\att y \Rightarrow \{x,y\}\in\cf(\F))$. For proving that $\uni_\na^\icf$ is $\Pi^0_1$-complete, we define a sequence of computable AFs $(\F_n)_{n\in\omega}$ so that  $\F_n=\{a_k : k\in \omega\}\cup \{b_k : k\in \omega\}$ and $a_k\att b_k$ iff $k$ is least so that $\phi_n(n)[k]\downarrow$. 
This encoding ensures that $\F_n$ has a unique naive extension iff $n\notin K$, which proves that $\uni_\na^\icf$ is $\Pi^0_1$-complete.
\end{proof}

\subsubsection{Computational problems for admissible, stable, and complete semantics}\label{supplement: ad-based}

A few of the following proofs use the following theorem along with Theorem \ref{encoding extensions into trees}.

\begin{theorem}[\cite{andrews2024FCR}, Lemma 4.1]\label{encoding trees into AFs}
    Let $\T\subseteq \omega^{<\omega}$ be a tree. Then, there is an AF $\F^\T$	so that each non-empty extension $S$ of $\F^\T$ is stable iff $S$ is complete iff $S$ is admissible. Further, the set of stable extensions in $\F^\T$ are in computable bijection with the set of paths in $\T$. 
    
    Further, if $\T$ is computably finitely branching,  then $\F^\T$ is computably finitary.

    Further, there is one element $a_\lambda$ which is contained in any stable extension of $\F^\T$.
\end{theorem}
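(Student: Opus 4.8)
The plan is to realize $\T$ as an AF whose arguments are indexed by the nodes of $\T$, with one argument $a_\sigma$ per node $\sigma\in\T$ (the root argument being $a_\lambda$), so that the intended bijection sends a path $\pi\in[\T]$ to the extension $S_\pi:=\{a_\sigma:\sigma\prec\pi\}$ consisting of the arguments along that branch. The attack relation among the $a_\sigma$ would be the \emph{branch-point} relation: writing $\sigma^-$ for the parent of $\sigma$, declare $a_\rho\att a_\tau$ exactly when $\rho^-\prec\tau$ but $\rho\not\preceq\tau$ (so $\rho$ is a child of some node on the path to $\tau$ that veers off it; in particular this includes all sibling attacks). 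The point of this relation is twofold: along a branch the nodes are pairwise comparable, so $S_\pi$ is conflict-free, while every off-branch node $\tau$ is attacked by the unique on-branch node sitting at its branch-off point, so $S_\pi$ attacks everything outside it. Crucially, the attackers of a fixed $a_\tau$ are the nodes of the form $(\tau\restriction_m)^\smallfrown i$ with $m<|\tau|$ and $i\neq\tau(m)$, of which there are only finitely many when $\T$ is finitely branching.

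To force admissible extensions to collapse onto \emph{full} branches, I would attach to each $\sigma$ two self-attacking spoilers. A \emph{descent} spoiler $d_\sigma$ with $d_\sigma\att a_\sigma$ and $a_{\sigma^\smallfrown i}\att d_\sigma$ for every child $\sigma^\smallfrown i\in\T$: since $d_\sigma$ self-attacks it never lies in a conflict-free set, and the only way an admissible set containing $a_\sigma$ can defend it against $d_\sigma$ is to contain a child of $\sigma$. An \emph{ascent} spoiler $e_\sigma$ (for $\sigma\neq\lambda$) with $e_\sigma\att a_\sigma$ and $a_{\sigma^-}\att e_\sigma$: defending $a_\sigma$ against $e_\sigma$ forces the parent $a_{\sigma^-}$ into the extension. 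To keep these gadgets from breaking stability at off-branch nodes, I would additionally let every branch-point attacker of $a_\sigma$ also attack $d_\sigma$ and $e_\sigma$; this preserves finitariness and guarantees that $S_\pi$ attacks every spoiler as well.

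The verification then splits into two directions. First, one checks directly that each $S_\pi$ is conflict-free and attacks every other argument (off-branch $a_\tau$ via its branch point, and each $d_\sigma$, $e_\sigma$ via a child, a parent, or a branch point as appropriate), hence is stable, hence complete and admissible. Conversely, let $S$ be any non-empty admissible extension. The self-attacks force $S\subseteq\{a_\sigma:\sigma\in\T\}$; the ascent spoilers force $P:=\{\sigma:a_\sigma\in S\}$ to be closed under parents and therefore to contain $\lambda$; the sibling attacks force $P$ to contain at most one child of each node; and the descent spoilers force each node of $P$ to have a child in $P$. A prefix-closed set containing the root, with exactly one child at every node, is precisely the set of proper prefixes of a single infinite path $\pi$, so $S=S_\pi$. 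Thus every non-empty admissible extension is already stable, which (together with the fact that stable $\Rightarrow$ complete $\Rightarrow$ admissible always holds) yields the equivalence of the three semantics on non-empty extensions, and $\pi\mapsto S_\pi$ is the desired bijection, computable in both directions and placing $a_\lambda$ in every stable extension. Finally, computable finite branching of $\T$ makes all of the finitely many attack relations uniformly decidable, so $\F^\T$ is computably finitary.

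The main obstacle is the collapse \emph{admissible $\Rightarrow$ stable} for non-empty sets, which is delicate precisely because of a tension with finitariness. Stability demands that the chosen branch attack \emph{every} off-branch argument, which naively asks a single branch node to attack an entire infinite subtree; the branch-point relation is exactly the device that distributes this obligation so that each argument retains only finitely many attackers. Getting the two spoiler gadgets to force upward closure and infinite descent \emph{without} reintroducing unattacked arguments (which would spoil stability) or infinitely many attackers (which would spoil finitariness) is where the care lies, and I expect the bulk of the proof to be the bookkeeping verifying that every $d_\sigma$ and $e_\sigma$ is attacked by $S_\pi$ in each of the on-branch and off-branch cases.
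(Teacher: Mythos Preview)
The paper does not actually prove this theorem: it is quoted from \cite{andrews2024FCR} (as Lemma~4.1 there) and used as a black box, so there is no in-paper proof to compare your proposal against.

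That said, your construction is sound and is the natural one. The branch-point attack relation $a_\rho\att a_\tau$ iff $\rho^-\prec\tau$ and $\rho\not\preceq\tau$ is exactly the device that makes each $S_\pi$ stable while keeping every $\{a_\tau\}^-$ finite in a finitely branching tree; your two self-defeating spoilers $d_\sigma,e_\sigma$ correctly force downward extension and upward closure of any non-empty admissible set, and your observation that branch-point attackers of $a_\sigma$ are in conflict with $a_\sigma$ (hence cannot lie in $S$) is what makes the defence against $d_\sigma,e_\sigma$ single out the child and the parent respectively. The argument that $P=\{\sigma:a_\sigma\in S\}$ is then a single infinite branch is clean: parent-closure plus at-most-one-child forces $P$ to be a chain, and at-least-one-child forces it to be infinite. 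The only bookkeeping you flag as delicate---that the spoilers attached to off-branch nodes are still attacked by $S_\pi$---is handled exactly by your added clause that branch-point attackers of $a_\sigma$ also attack $d_\sigma$ and $e_\sigma$; this adds only finitely many attackers to each spoiler, preserving finitariness. Your plan would yield a complete proof.
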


\easyuppers*

 
 \begin{proof}
Let $\F$ be a computably finitary AF. By Theorem  \ref{encoding extensions into trees}, $\F$  has a $\sigma$ extension iff ${\T}^{\F}_{\sigma + \emptyset - \emptyset}$
has a path. Since ${\T}^{\F}_{\sigma + \emptyset - \emptyset}$ is computably finitely branching (as $\F$ is assumed computably finitary), K\"onig's Lemma
shows that determining that the tree $\T^\F_{\sigma+\emptyset-\emptyset}$ has a path is a $\Pi^0_1$ problem. This shows that $\ex^\icf_\stb$ is $\Pi^0_1$. Similarly, since a given argument $x$ belongs to a $\sigma$-extension of $\F$ iff ${\T}^{\F}_{\sigma + \{x\} - \emptyset}$ has a path, $\cred^\icf_\sigma$ is $\Pi^0_1$.


The hardness results follow from Theorem \ref{encoding trees into AFs}: Indeed, given a computably finitely branching tree $\T$, consider the AF $\F^{\T}$. Then the argument $a_\lambda \in A_{\F^\T}$ is in $\cred^\icf_\sigma(\F)$ iff the tree $\T$ has a path. It is a standard easy fact of computability theory that the set of indices for computably finitely branching trees which have a path is a $\Pi^0_1$-hard problem. Similarly, $\F^\T\in \ex^\icf_\stb$ iff the tree $\T$ has a path, thus is also $\Pi^0_1$-complete.
\end{proof}

\OTHupper*


\begin{proof}
$(i)$ Since the empty set is never a stable extension,  we immediately see that  $\nemp^\icf_\stb = \ex^\icf_\stb$, which is $\Pi^0_1$-complete by Theorem \ref{easy uppers}. 

$(ii)$ Let $\sigma\in \{\ad,\co\}$. We have that an AF $\F$ has a non-empty $\sigma$-extension iff $(\exists a \in A_\F)(a \in \cred^\icf_\sigma(\F))$, which is $\Sigma^0_2$ since $\cred^\icf_\sigma$ is $\Pi^0_1$ by Theorem \ref{easy uppers}. To prove hardness, we reduce
 any given set $X$ in $\Sigma^0_2$ to $\nemp^\icf_\sigma$. 
 Since the set of indices of computably finitely branching trees which have a path is a $\Pi^0_1$-hard problem, for each $n$, there exists a sequence of computably finitely branching trees $(\T_i)_{i\in\omega}$ such that $n\in X$ iff there is some $i$ so that $\T_i$ has a path. We let $\F_n=\bigcup_i \F^{\T_i}$. Then the admissible or complete extensions of $\F_n$ are the unions of the admissible or complete extensions of each $\F^{\T_i}$. But Theorem \ref{encoding trees into AFs} shows that these extensions are each either empty or represent a path through $T_i$. Thus $\F_n$ has a non-empty admissible or complete extension iff there is some $i$ so that $\T_i$ has a path iff $n\in X$. 

$(iii)$ 
Note that for $\sigma\in \{\co,\stb\}$, $a\in \skep^\icf_\sigma(\F)$ iff $\T^\F_{\sigma+\emptyset-\{a\}}$ has no path, which is $\Sigma^0_1$ by K\"onig's Lemma.

 Let $W$ be a $\Sigma^0_1$ set. We prove that there is a computably finitary AF $\F$ with the grounded extension $G$ (recall the grounded extension is the unique smallest complete extension) equaling $\skep^\icf_\stb(\F)$ and encoding $W$.

    We fix a computable enumeration of $W$ which enumerates nothing at stage $0$ and enumerates at most one element at every later stage.
    Let $A_\F = \{a_{n,0} : n\in \omega\} \cup \{a_{n,2j-1},a_{n,2j} : j>0, n\notin W[j]\}$, and let $a_{n,m}$ attack $a_{n',m'}$ iff $n=n'$ and $m=m'+1$.

    Note that $a_{n,0}$ is the end of an infinite chain of attacks if $n\notin W$. If $n\in W$, then the greatest $m$ so that $a_{n,m}$ exists is even. The grounded extension contains exactly $G=\{a_{n,2k}\in A_\F : n\in W\}$. Since $n\in W$ iff $a_{n,0}\in G$, we see that $\skep^\icf_\co(\F)$ encodes $W$, showing that $\skep^\icf_\co$ is $\Sigma^0_1$-complete.

    We note that since every stable extension is complete, $G\subseteq S$ for any stable extension $S$. Note that both $G\cup \{a_{n,2k}\in A_\F : n\notin W\}$ and $G\cup \{a_{n,2k+1}\in A_\F : n\notin W\}$ are stable, so $\skep_\stb(\F)=G$.

$(iv)$
Note that $\F\in \uni_\sigma^\icf$ iff $\F\in \ex^\icf_\sigma$ and $\forall x (x\in \cred_\sigma^\icf \Rightarrow x\in \skep^\icf_\sigma)$, which is $\Pi^0_2$ due to Theorems \ref{easy uppers} and $(iii)$. 
Since the empty set is always admissible, we have that $\uni^\icf_\ad$ is the complement of $\nemp^\icf_\ad$, and thus is $\Pi^0_2$-hard.
In the construction of $\F_n$ in Theorem \ref{unistb upper}, we note that the empty set is a complete extension, so $\F_n$ has a unique complete extension iff it does not have a non-empty complete extension, thus $\uni^\icf_\co$ is $\Pi^0_2$-hard.

For each computable tree $\T$, we can define a tree $\hat{\T}$ by $\sigma\in \hat{\T}$ if $\sigma = 0^n$ for some $n$ or $\sigma = 1^\smallfrown \tau$ and $\tau\in \T$. Note that $\hat{\T}$ has exactly one more path than $\T$, namely the infinite string with all 0's.
To see that $\uni^\icf_\stb$ is $\Pi^0_2$-hard, we apply the same construction where we use the trees $\hat{T}_i$ in place of $T_i$. Then $\F_n$ has a stable extension, corresponding to choosing the 0-path in each $\hat{T}_i$. It has a unique stable extension unless one of the trees $T_i$ has a path, showing the $\Pi^0_2$-hardness of $\uni_\stb$.
\end{proof}

\subsection{Details for Section \ref{sec:infcf infna}}\label{supplement: sect 6}

\credexinfcfinfna*
\begin{proof}
 $(i)$:  Let $\T\subseteq \omega^{<\omega}$ be a given computable tree. We produce an AF so that the conflict-free extensions correspond exactly to the subsets of paths through $\T$. Fix a computable bijection $f:\omega\rightarrow \T$. Let $\F$ be so that $A_\F=\{a_i : i\in \omega\}$ and $a_i\att a_j$ iff $i<j$ and $f(i)$ is incomparable with $f(j)$. Note that  $\F$ is finitary, since we require $i<j$ for $a_i$ to attack $a_j$. Further, for each $a_j$, we can uniformly compute the set of attackers of $a_j$, thus $\F$ is computably finitary. It is straightforward to check that a conflict-free extension is (via $f$) in computable bijection with an infinite subset of a path in $\T$. Thus, $\F$ has an infinite conflict-free extension iff $\F$ has an infinite conflict-free extension containing $a_0$ iff $\T$ has a path. Thus, we have reduced a $\Sigma^1_1$-complete set, namely $\mathbf{Path}$, to $\ex^\icf_\infcf$ and $\cred^\icf_\infcf$, showing that they are $\Sigma^1_1$-hard. They are also, by definition, $\Sigma^1_1$.

$(ii)$ By the fact that every conflict-free extension is contained in a naive extension, it is immediate to observe that $\ex^\icf_\infna=\ex^\icf_\infna$ and  $\cred^\icf_\infna= \cred^\icf_\infcf$. By item $(i)$ of this proof, we deduce that  $\cred^\icf_\infna$ and $\ex^\icf_\infna$ are both $\Sigma^1_1$-complete. 

$(iii)$:   Given an AF $\F$, we form $\hat{\F}$ by adding a single argument $a$ so $a\att a$. By item $(i)$ of this proof, we know that $\ex^\icf_\infcf$ is $\Sigma^1_1$-complete. Since $a\in \skep^\icf_\infcf(\hat{F})$ iff $\F\notin\ex^\icf_\infcf$, $\skep^\icf_\infcf$ is $\Pi^1_1$-hard. It is also $\Pi^1_1$ by definition.
By the same reasoning, $\skep^\icf_\infna$ is $\Pi^1_1$-complete.
\end{proof}

For the remaining proofs in this section, we use the following benchmark complexity set:

$\mathbf{UniPath} = \{e : e$ is a computable index for a tree with exactly one path$\}$ is $\Pi^1_1$-complete.

One would expect $\mathbf{UniPath}$ to be d-$\Sigma^1_1$. It is a non-obvious Theorem from descriptive set theory (see \cite[Theorem 18.11]{kechris2012classical}) that $\mathbf{UniPath}$ is in fact $\Pi^1_1$.

\uniinfcf*
\begin{proof}
$(i)$    If $S$ is an infinite conflict-free extension and $a\in S$, then $S\smallsetminus \{a\}$ is also an infinite conflict-free extension. Hence, there is no AF with a unique infinite conflict-free extension.


$(ii)$   To see that $\uni^\icf_\infna$ is $\Pi^1_1$-hard, consider the encoding of a given computable tree $\T\subseteq \omega^{<\omega}$ into the conflict-free extensions of an AF $\F_{\T}$ defined in Theorem \ref{cred infcf infna}, item $(i)$. Observe that the naive extensions of $\F_{\T}$ are in computable bijection with the paths through $\T$. Hence, $\F_\T$ is in $\uni^\icf_\infna$ iff $\T$ has a unique path, which is the $\Pi^1_1$-complete problem $\mathbf{UniPath}$. 

It remains to be shown that $\uni^\icf_\infna$ is $\Pi^1_1$. On first sight, one would expect it to be d-$\Sigma^1_1$. To show that it is $\Pi^1_1$, we want to reduce it to $\mathbf{UniPath}$, thus we want to find a tree whose paths are in bijection with the naive extensions in $\F$.

\begin{restatable}{lemma}{infnalem}\label{lem:infnalem}
For any (computable) AF $\F$, there is a (computable) tree $\T$ so that the paths through $\T$ are in (computable) bijection with the infinite naive extensions in $\F$.  
\end{restatable}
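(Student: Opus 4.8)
The plan is to realize each infinite naive extension as a path which explicitly carries \emph{both} an enumeration of the extension \emph{and} witnesses to its maximality, so that the two features of the predicate ``$S$ is an infinite naive extension'' that are not closed — infiniteness and maximality — become $\Pi^0_1$ conditions along the path.

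First I would record the combinatorial characterization: a set $S$ is naive iff it is conflict-free and every $a_k\notin S$ is \emph{blocked}, i.e.\ $a_k\att a_k$, or there is some $a_\ell\in S$ with $a_k\att a_\ell$ or $a_\ell\att a_k$. For infinite $S$ there is a unique increasing enumeration $s_0<s_1<\cdots$ of $\{k:a_k\in S\}$, and for each out-element $a_k$ a canonical blocker: the flag ``self'' if $a_k\att a_k$, and otherwise the least position $j$ such that $a_{s_j}\in S$ conflicts $a_k$ (which exists by maximality).

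Then I would define $\T$ as a tree of finite sequences of \emph{stage moves}. The $i$-th entry codes a single number $s_i$ together with the finite list of canonical blockers for the out-elements $k$ lying in the gap $(s_{i-1},s_i)$ (with $s_{-1}=-1$). A finite sequence is placed on $\T$ exactly when, on all indices revealed so far, the following computable conditions hold: the $s_i$ are strictly increasing; $\{a_{s_i}\}$ is conflict-free; and each declared blocker is correct and canonical — a ``self'' flag requires $a_k\att a_k$, while a blocker pointing to position $j$ (claiming $a_{s_j}$ blocks $a_k$) requires, once $s_j$ is revealed, that $a_{s_j}$ conflicts $a_k$, that no earlier revealed $s_{j'}$ conflicts $a_k$, and that $a_k$ does not self-attack. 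Since each move is a single number coding finitely much data and every condition is a computable check on the information revealed so far, $\T$ is computable uniformly from $\F$.

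I would then exhibit the bijection. The map $S\mapsto\pi$ uses the unique enumeration and the canonical blockers, and a routine check confirms $\pi\in[\T]$. Conversely, from $\pi\in[\T]$ I read off $S=\{a_{s_i}\}$: the sequence is infinite and strictly increasing, so $S$ is infinite; conflict-freeness is enforced directly; and because the $s_i$ are unbounded, every out-element eventually lands in some gap and so receives a verified blocker, whence $S$ is maximal. The canonicality constraints guarantee that these two maps are mutually inverse. The main obstacle is precisely the tension I am designing around: maximality may demand a blocker of arbitrarily large index, so it cannot be enforced by inspecting a bounded initial segment, while infiniteness of $S$ is only $\Pi^0_2$. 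The stage-move encoding resolves both at once — forcing the enumeration to strictly increase makes every infinite path produce an infinite $S$ (finite naive extensions correspond to dead branches, not paths), and carrying blockers as deferred pointers that are checked when the relevant position is revealed turns maximality into a conjunction of computable conditions, hence into membership in the computable tree $\T$.
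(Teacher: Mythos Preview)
Your approach is correct and essentially matches the paper's: both build a computable tree whose paths carry an increasing enumeration of the extension (securing infiniteness) together with canonical blockers for each out-element (securing maximality), with canonicality constraints forcing the correspondence to be a bijection. The paper interleaves these two kinds of data on alternating even and odd levels and records each blocker by argument index rather than by position in the enumeration, whereas you package enumeration-plus-blockers per gap; your explicit ``self'' flag for self-attacking out-elements is in fact a small refinement, since the paper's encoding does not obviously accommodate an $a_j\notin S$ with $a_j\att a_j$ but no conflict with any member of $S$.
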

\begin{proof}
    We use two types of branching in $\T$. On the $2j$th layer, we branch to identify the least $k>j$ so that $a_k$ is in the naive extension $S$. This ensures that $S$ is infinite. On the $2j+1$th layer, we branch to explain that either $a_j\in S$ (by extending by 0) or to give the least $n$ so that $a_n\in S$ and $\{a_j,a_n\}$ is not conflict-free (by extending by $n+1$). This ensures naiveness of $S$.

        \begin{definition}
        Any given  string $\sigma\in\omega^{<\omega}$ defines two subsets of arguments in $A_\F$:
        \begin{itemize}
        \item $\ins_\sigma=\{a_{\sigma(2j)}:2j<|\sigma|\}\cup \{a_{j}: \sigma(2j+1)=0\}\cup \{a_{\sigma(2j+1)-1} : \sigma(2j+1)>0\}$;
        
        \smallskip
        
        \item $\out_\sigma=\{a_i : (\exists j) j<i<\sigma(2j)\}\cup 
        \{a_j : \sigma(2j+1)>0\}\cup \{a_k : k<\sigma(2j+1)-1\wedge \{a_j,a_k\}\text{ not conflict-free}\}$.
        \end{itemize}
        
        We define $\T^\F$ as the set of $\sigma$ so that 
        \begin{itemize}
            \item $\ins_\sigma$ is conflict-free;
            \item $\ins_\sigma\cap \out_\sigma = \emptyset$
            \item For no $j$ is $\sigma(2j)\leq j$.
            \item If $\sigma(2j+1)=k+1$, then $\{a_{k},a_j\}$ is not conflict-free.
        \end{itemize}
        \end{definition}

    We now argue that the paths through $\T$ are in computable bijection with the infinite naive extensions in $\F$.

    If $S$ is an infinite naive extension in $\F$, then we can define $\pi$ by $\pi(2j)$ is the least $k>j$ so that $a_k\in S$ and $\pi(2j+1)$ is $0$ if $j\in S$ and otherwise is $k+1$ where $k$ is least so that $a_k\in S$ and $\{a_k,a_j\}$ is not conflict-free. This is well-defined since $S$ is naive, and thus $S\cup \{a_j\}$ cannot be a proper superset of $S$ and also conflict-free. It is straightforward to check that $\pi\in [\T]$.

    If $\pi\in [\T]$, we define $S=\{j : \sigma(2j+1)=0\}$. As in the previous cases, $S=\bigcup_{\sigma\preceq \pi}\ins_\sigma$. $S$ is conflict-free since if $a_i,a_j\in S$, then there is some $\sigma$ with $a_i,a_j\in \ins_\sigma$ and $\ins_\sigma$ is conflict-free. For any $a_k\notin S$, $S\cup \{a_k\}$ is not conflict-free because $S$ contains $a_{\pi(2k+1)}$ which is in conflict with $a_k$. Finally, the map from $S$ to $\pi$ and vice-versa are computable if $\F$ is computable, and are inverses of each other.
\end{proof}


It follows that $\F$ is in $\uni^\icf_\infna$ iff the tree $\T$ guaranteed by Lemma \ref{lem:infnalem} is in $\mathbf{UniPath}$, which is a $\Pi^1_1$-condition.
\end{proof}


\subsection{Details for Section \ref{sec:infad}}\label{supplement: sec 7}

\uniinfad*
\begin{proof}
Observe that a computably finitary AF $\F$ has unique infinite admissible semantics iff  $\F\in \ex^\icf_\infad$ and $(\forall a\in A_\F)(a\in \cred^\icf_\infad (\F)\Rightarrow a\in \skep^\icf_\infad(\F))$, which, in the light of Theorem \ref{infad upper}, is a $\Pi^0_3$ condition.

    Next, we show $\Pi^0_3$-hardness. Let $X$ be a $\Pi^0_3$ set. Since $\Fin$ is $\Sigma^0_2$-complete, we can find a uniformly computable sequence of c.e.\ sets $(V^n_i)_{i,n\in\omega}$ so that $n\in X$ iff $V^n_i$ is finite for each $i$. We may also assume that the sets $V^n_i$ enumerate numbers in order (since we only care about cardinality, we simply re-order the set if necessary). Moreover, we may also assume that $0$ enters $V^n_i$ at stage $0$ for each $n,i$. 


    Let $\G_n=(A_{\G_n},R_{\G_n})$ be defined as follows (see Figure \ref{figure: uni pi3-hard}). Let 
\begin{multline*}
     A_{\G_n} =\{a_i : i\in \omega\} \cup \{c^i_{\langle j,m \rangle} : m \text{ is the $j$th} \text{ expansionary stage for $V^n_i$}\}.
\end{multline*}  
   
 We let $R_\G$ contain exactly the following attack relations:
    \begin{itemize}
        \item $a_i\att a_i$ if $i$ is odd;
        \item $c^i_{\langle j,m\rangle} \att c^i_{\langle j,m\rangle}$ if $j$ is odd;
        \item $a_i \att a_{i+1}$ for every $i$;
        \item $a_{2i} \att c^i_{\langle 0,0\rangle}$ and $c^i_{\langle 0, 0 \rangle}\att a_{2i}$;
        \item $a_{2i-1}\att c^i_{\langle 0,0\rangle}$;
        \item $c^i_{\langle j,m\rangle} \att c^i_{\langle j',m'\rangle}$ iff $j'=j+1$.
    \end{itemize}

    We refer to $c^i_{\langle j,m \rangle}$ as $d^i_j$ (the $m$ records the $j$th expansionary stage, but we only need to know that one exists).
    Observe that the set $\{a_{2i} : i\in \omega\}$ is always an admissible extension. Furthermore, it is straightforward  to check that if $V^n_i$ is infinite (and thus $n\notin X$), then $\{a_{2j} : j<i\}\cup \{d^i_{j} : j \text{ is even}\}$ is another admissible extension.

\begin{figure}\centering
\scalebox{0.75}{
\begin{tikzpicture}[->=Stealth]

    \node[circle, draw, 
    ] (a0) {$a_0$};
    \node[circle, draw, 
    ] (a1) [right=of a0] {$a_1$};
    \node[circle, draw, 
    ] (a2) [right=of a1] {$a_2$};
    \node[circle, draw, 
    ] (a3) [right=of a2] {$a_3$};
    \node[circle, draw, 
    ] (a4) [right=of a3] {$a_4$};
    \node[circle, draw, 
    ] (a5) [right=of a4] {$a_5$};
    \node[circle, draw, 
    ] (a6) [right=of a5] {$a_6$};
   \node[draw=none] (ellipsis2)[right=1mm of a6] {$\cdots$};

    \node[circle, draw, 
    ] (b1) [below=of a2] {$d^1_0$};
    \node[circle, draw, 
    ] (b2) [below=of a4] {$d^2_0$};
    \node[circle, draw, 
    ] (b3) [below=of a6] {$d^3_0$};

     \node[circle, draw, 
     ] (d11) 
    [below=of b1] {$d^1_1$};
    \node[circle, draw, 
    ] (d12) [below=of d11] {$d^1_2$};

         \node[circle, draw, 
         ] (d21) 
    [below=of b2] {$d^2_1$};

             \node[circle, draw, 
             ] (d31) 
    [below=of b3] {$d^3_1$};
    \node[circle, draw, 
    ] (d32) [below=of d31] {$d^3_2$};
   \node[draw=none] (ellipsis2)[below=1mm of d32] {$\vdots$};

\draw[->] (a0) edge (a1);
\draw[->] (a1) edge (a2);
\draw[->] (a2) edge (a3);
\draw[->] (a3) edge (a4);
\draw[->] (a4) edge (a5);
\draw[->] (a5) edge (a6);

\draw[->] (a1) edge (b1);
\draw[->] (a3) edge (b2);
\draw[->] (a5) edge (b3);

\draw[->] (b1) edge (d11);
\draw[->] (d11) edge (d12);

\draw[->] (b2) edge (d21);

\draw[->] (b3) edge (d31);
\draw[->] (d31) edge (d32);


\draw[<->] (a2) edge (b1);
\draw[<->] (a4) edge (b2);
\draw[<->] (a6) edge (b3);

\draw[->] (a1) edge[loop above, line width=0.65pt] (a1);
\draw[->] (a3) edge[loop above, line width=0.65pt] (a3);
\draw[->] (a5) edge[loop above, line width=0.65pt] (a5);

\draw[->] (d11) edge[loop left, line width=0.65pt] (d11);
\draw[->] (d21) edge[loop left, line width=0.65pt] (d21);
\draw[->] (d31) edge[loop left, line width=0.65pt] (d31);

\end{tikzpicture}
}

\caption{A fragment of $\G_n$ where $|V^n_1|=3$, $|V^n_2|=2$, and $|V^n_3|\geq 3$.}
\label{figure: uni pi3-hard}

\end{figure}
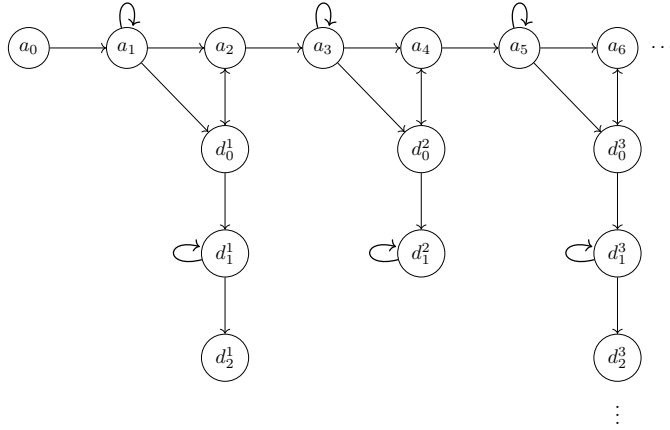

    \begin{lemma}
        If every $V^n_i$ is finite, then $\{a_{2i} : i\in \omega\}$ is the only infinite admissible extension.
    \end{lemma}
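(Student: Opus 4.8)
The plan is to leverage the self-attacks to cut $S$ down to a small set of candidate arguments, and then use the finiteness of each chain $d^i_0 \att d^i_1 \att d^i_2 \att \cdots$ (guaranteed by $V^n_i$ being finite) together with the rigid way defense propagates to force $S = \{a_{2i} : i\in\omega\}$. First I would record the local attack structure. Since $a_i \att a_i$ for odd $i$ and $d^i_j = c^i_{\langle j,m\rangle}$ self-attacks for odd $j$, no odd-indexed $a$ and no odd-indexed $d$ can lie in any conflict-free set; hence every $S\in\ad(\G_n)$ is contained in $\{a_{2i}:i\in\omega\}\cup\{d^i_j : j \text{ even}\}$. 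I would then pin down the attackers of these candidates: the only non-self attacker of $a_{2i-1}$ is $a_{2i-2}$; for $j\geq 1$ the only attacker of $d^i_j$ is $d^i_{j-1}$; and $d^i_0$ is attacked exactly by $a_{2i}$ and (when $i\geq 1$) by the self-attacker $a_{2i-1}$, where $a_{2i}$ and $d^i_0$ form a mutual attack.

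Next I would establish two propagation facts. (1) If $a_{2i}\in S$ with $i\geq 1$, admissibility requires $S$ to attack the self-attacker $a_{2i-1}$, whose only non-self attacker is $a_{2i-2}$, so $a_{2i-2}\in S$; by induction $I:=\{i : a_{2i}\in S\}$ is downward closed, hence an initial segment of $\omega$ (possibly empty or all of $\omega$). (2) If $S$ contains an even-indexed $d^i_j$, then defending it against the self-attacker $d^i_{j-1}$ (whose only other attacker is $d^i_{j-2}$) forces $d^i_{j-2}\in S$, and climbing down the chain yields $d^i_0\in S$.

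The crux is to show $S$ contains no $d$ at all. Suppose it does; by (2) some $d^i_0\in S$. The mutual attack gives $a_{2i}\notin S$, so $i\notin I$; and if $i\geq 1$, defending $d^i_0$ against the self-attacker $a_{2i-1}$ forces $a_{2i-2}\in S$, i.e.\ $i-1\in I$ (for $i=0$ one instead gets $I=\emptyset$). In every case the downward-closed $I$ is finite, say $I=\{0,\dots,m\}$ (or $I=\emptyset$). Re-running this defense requirement shows that $d^{i'}_0\in S$ forces $i'-1\in I$ whenever $i'\geq 1$, hence $i'\leq m+1$, while the degenerate case $I=\emptyset$ permits $d$'s only from chain $0$. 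Either way, $d$'s in $S$ occur in only finitely many chains, each of which is finite because every $V^n_{i'}$ is finite; so $S$ has finitely many $d$'s and finitely many even $a$'s, contradicting that $S$ is infinite. Therefore $S$ contains no $d$, so $S\subseteq\{a_{2i}:i\in\omega\}$; as $S$ is infinite, $I=\omega$ and $S=\{a_{2i}:i\in\omega\}$, which is admissible as already observed.

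I expect the main obstacle to be the bookkeeping of attackers and the boundary cases, namely the chain index $i=0$ (where $a_{2i-1}=a_{-1}$ and $d^i_{-1}$ do not exist, so $d^0_0$ can self-defend against $a_0$) and the degenerate $I=\emptyset$, together with verifying at each step that defense can only be supplied by the single forced element. This uniqueness of the defender is exactly what converts ``$S$ infinite'' into ``$S$ draws $d$'s from unboundedly large chains,'' which then collides with the finiteness of every $V^n_i$.
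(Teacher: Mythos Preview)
Your proposal is correct and follows essentially the same argument as the paper: eliminate self-attacking arguments, establish downward propagation of even $a$-indices and of $d^i_j$ down to $d^i_0$, then use the conflict $d^i_0\leftrightarrow a_{2i}$ together with finiteness of each $V^n_i$-chain to conclude that any admissible $S$ containing a $d$ is finite. You handle the boundary cases $i=0$ and $I=\emptyset$ a bit more explicitly than the paper does, but the structure and key steps are the same.
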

    \begin{proof}
        Suppose that $S$ is an infinite admissible extension.
        Observe that no element $a_i$ with $i$ odd or $d^i_{j}$ with $j$ odd can be contained in $S$, as all these  arguments are self-defeating. 

        Next, note that if $a_{2k}\in S$ with $k>0$, then $a_{2k-2}\in S$ since the attack from $a_{2k-1}$ must be defended. We now distinguish two cases:
\begin{itemize}
\item \emph{Case 1.} There is some argument $d^i_{j}$ contained in $S$. If so, then  $d^i_{j-2}\in S$ since the attack from $d^i_{j-1}$ must be defended. Iterating this reasoning, we see that $d^i_{0}$ is contained in $S$. This in turn implies that $a_{2i-2}\in S$ since the attack from $a_{2i-1}$ must be defended. Thus each $a_{2m}$ for $m<i$ is in $S$. Also, $a_{2i}\notin S$ since it is in conflict with $d^i_{0}$.  It follows that there cannot be any argument $d^k_{j}$ or $a_{2k}$ with $k>i$, since this would imply $a_{2i}\in S$. Thus, $S$ is contained in $\{a_{2m} : m<i\}\cup \{d^m_j : m< i \}$, so $S$ is finite, which is a contradiction.
\item \emph{Case 2.} There is no argument $d^i_{j}\in S$. Then $S$ is $\{a_{2k}: k\in J\}$ for $J$ an initial segment of $\omega$, but since $S$ is infinite, we see that $S$ is exactly $\{a_{2i} : i\in \omega\}$.
\end{itemize}        
Hence, if each $V^n_i$ is finite, then $|\infad(\F_n)|=1$.
    \end{proof}

It follows that $n\in X$ iff there is a unique infinite admissible extension in $\F_n$. Since $X$ was chosen as an arbitrary $\Pi^0_3$ set, we conclude that $\uni^\icf_\infad$ is $\Pi^0_3$-complete.
\end{proof}

\subsection{Details for Section \ref{sec:infstb}}\label{supplement: sec 8}

\combinatoricsStable*

\begin{proof}
    Observe that if $S$ is an infinite stable extension of $\F$ containing $D$ and disjoint from $E$, then $S\subseteq Y$, so $Y$ is infinite.

    Now suppose that $Y$ is infinite. For each $b\in Y$, fix a stable extension $X_b$ with $D\cup \{b\}\subseteq X_b$ and $E\cap X_b=\emptyset$. If any $X_b$ is infinite, we are done. Thus, we may assume that each $X_b$ is finite, and we can choose an infinite subset $Y'$ of $Y$ so that $X_b$ is distinct for each $b\in Y'$. We note that, by the stability of each extension $X_b$, for each $b,b'$ there is some element of $X_b$ which attacks some element of $X_{b'}$.

    We proceed by recursion to define a sequence of sets $S_i$. We begin with $S_0=D$. We will maintain the inductive hypothesis (\IH) that the set $C_i:=\{b\in Y' : S_i\subseteq X_b\}$ is infinite.

    \begin{lemma}\label{Make It Stable}
        For any $S_i$ and $z$ any argument there is some $x$ so that $x$ either attacks $z$ or $x=z$ so that letting $S_{i+1}=S_i\cup \{x\}$ preserves the \emph{\IH}.
    \end{lemma}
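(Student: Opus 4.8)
The plan is to mimic the pigeonhole strategy of Lemmas \ref{Add An Element} and \ref{Make It Admissible}, except that now a single case split suffices: stability already records, for every argument $z$ and every extension $X_b$, whether $z\in X_b$ or $z$ is attacked by $X_b$, so there is no need to separate a ``defend'' step from a ``grow'' step. Throughout I work with the infinite set $C_i=\{b\in Y' : S_i\subseteq X_b\}$ furnished by the \IH, and the goal is to exhibit $x$ with ($x=z$ or $x\att z$) so that $C_{i+1}=\{b\in Y' : S_i\cup\{x\}\subseteq X_b\}$ is again infinite.

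The key observation is that for each $b\in C_i$ the extension $X_b$ is stable, so $X_b^+=A_\F\smallsetminus X_b$; in particular, either $z\in X_b$ or some element of $X_b$ attacks $z$. I then split according to which alternative holds for infinitely many $b\in C_i$. If $z\in X_b$ for infinitely many $b\in C_i$, I set $x=z$: for each such $b$ we have $S_i\cup\{z\}\subseteq X_b$, so $C_{i+1}$ contains this infinite set and the \IH is preserved. Otherwise all but finitely many $b\in C_i$ satisfy $z\notin X_b$, and for each of these stability yields some $x_b\in X_b$ with $x_b\att z$, i.e.\ $x_b\in\{z\}^-$.

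Here finitariness enters decisively. Since $\F$ is finitary, $\{z\}^-$ is finite, so by the pigeonhole principle a single $x\in\{z\}^-$ equals $x_b$ for infinitely many $b\in C_i$. For each such $b$, $x\in X_b$ gives $S_i\cup\{x\}\subseteq X_b$, whence $C_{i+1}$ is infinite. In both cases $x$ is either $z$ or an attacker of $z$, as required. I expect this attack case to be the only genuine obstacle, and it is dissolved exactly by the finiteness of $\{z\}^-$: without it one could not collapse the infinitely many witnesses $x_b$ to a single common attacker.

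Finally, I would feed this lemma into the recursion by applying it to each argument of $A_\F$ in turn, so that the limit set $S=\bigcup_i S_i$ either contains or attacks every argument; together with conflict-freeness (inherited from each $X_b$ via the \IH) this makes $S$ stable, and it contains $D$ and avoids $E$ since every $S_i$ does and each relevant $X_b$ does. Infiniteness of $S$ then comes for free, using the remark preceding the theorem: were $S$ finite it would equal $S_N$ for some $N$, hence be a finite stable extension contained in the infinitely many \emph{distinct} $X_b$ with $b\in C_N$; but no stable extension can be properly contained in another, forcing $S=X_b$ for infinitely many $b$ and contradicting their distinctness.
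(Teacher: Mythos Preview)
Your proof is correct and follows essentially the same approach as the paper: both use stability to see that each $X_b$ either contains $z$ or an attacker of $z$, then apply pigeonhole over the finite set $\{z\}\cup\{z\}^-$ to find a single $x$ lying in infinitely many $X_b$. The paper merely compresses your explicit case split into one pigeonhole application, and your final paragraph about assembling $S$ goes beyond the lemma itself but accurately reproduces the surrounding argument.
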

    \begin{proof}
        For each $b\in C_i$, by stability of $X_b$, either some $x\in X_b$ attacks $z$ or $z\in X_b$. Since only finitely many elements attack $z$, there is some $x$ which either is $z$ or attacks $z$ which is contained in infinitely many $X_b$ for $b\in C_i$. Then $S_{i+1}=S\cup \{x\}$ preserves  \IH.
    \end{proof}

    We proceed to define $S_i$ by iteratively applying Lemma \ref{Make It Stable}. We note that each $S_i$ is conflict free, since each $X_b$ is conflict free and there are always infinitely many $b$ so $S_i\subseteq X_b$. Thus $S:=\bigcup_i S_i$ is conflict-free. Further, for each $z\in A_\F$, we ensure at some stage that $S$ contains an element which either attacks $z$ or is $z$. Thus $S$ is stable. Finally, we note that $S$ cannot be finite. If $S$ were finite, then there would be some $i$ so that $S=S_i$, but then $S_i$ is a stable set contained in infinitely many different stable sets. But if $S$ is stable then no proper superset of it can be conflict-free, so this is impossible. Note that since each $S_i$ is contained in infinitely many $X_b$, which are each disjoint from $E$, it follows that $S\cap E=\emptyset$.
\end{proof}

\infstbLower*
\begin{proof} 
Let $(\H_n)_{n\in\omega}$ be the uniformly computable sequence of computably finitary AFs defined in the proof of Theorem \ref{infad lower} to prove that $\ex^\icf_\infad$ is $\Pi^0_2$-hard. Recall that each $\H_n$ is made of disjoint stars (AFs consisting of infinitely pairwise conflict-free arguments which are all attacked by another argument, called the center of the star). Furthermore, $\H_n$ contains exactly $|W_n|+1$ many stars. It is easy to see that $\H_n$ has only one stable extension, consisting of the set of centers of the stars; such a stable extension is infinite iff $W_n$ is infinite. Hence, we defined a reduction from $\Inf$ to $\ex^\icf_\infstb$, which proves that the latter set is $\Pi^0_2$-hard.
    
To see that $\cred^\icf_\infstb$ is also $\Pi^0_2$-hard, let $a$ be a center of a star in $\H_n$: we have that $a$ belongs to an infinite stable extension of $\H_n$ iff $n\in \Inf$.  
Similarly, if we choose an argument $b$ which is not the center of a star, then $b$ won't be in any stable   extension of $\H_n$. Thus, $b$ is skeptically accepted iff there is no infinite stable extension iff $n\in \Fin$, showing the $\Sigma^0_2$-hardness of $\skep^\icf_\infstb$.
\end{proof}

\uniinfstb*
\begin{proof}
It follows from the complexity of $\cred^\icf_\infstb$ and $\skep^\icf_\infstb$ (Theorem \ref{infstb lower}) that    $\uni^\icf_\infstb$ is $\Pi^0_3$, since a given computably finitary $\F$ has an infinite stable extension iff $\F\in \ex^\icf_\infstb$ and $(\forall x\in A_{\F}) (x\in \cred^\icf_\infstb(\F) \rightarrow x \in \skep^\icf_\infstb(\F))$.

    Fix $X$ a $\Pi^0_3$ set. Since $\Fin$ is $\Sigma^0_2$-complete, we can find a uniformly computable sequence of c.e.\ sets $(V^n_i)_{i,n\in\omega}$ so that $n\in X$ iff $V^n_i$ is finite for each $i$. For each $n\in \omega$, we construct $\G_n$ with $A_{\G_n}=\{a_i,b_i : i\in \omega\}$. We let $R_{\G_n}$ contain exactly the following attack relations:
    \begin{itemize}
    \item $a_i\att b_i$ and $b_i\att a_i$, for all $i\in\omega$;
    \item $a_i\att a_{i+j}$ and $a_i \att b_{i+j}$, if $j$ is a non-expansionary stage for $V^n_i$.
    \end{itemize}
    
    Note that $S=\{b_i : i\in \omega\}$ is a stable extension. 
    If $X$ were any other infinite stable extension, then $X$ contains an argument $a_i$. 
    But then $V^i_n$ must be infinite, since $X$ is infinite and $a_{i+j}$ or $b_{i+j}$ being in $X$ implies that $j$ is an expansionary stage for $V^i_n$. Thus $\G_n\notin\uni^\icf_\infstb$ implies $n\notin X$. On the other hand, if $n\notin X$, then there is some $V^i_n$ which is infinite, so there are infinitely many $j$ so that $a_i\not\att b_j$. Note that $\{a_i\}\cup \{b_j : a_i\not\att b_j\}$ is a second stable extension, so $\G_n\notin \uni^\icf_\infstb$. 
    Thus $\G_n\in \uni^\icf_\infstb$ iff $n\in X$, giving a reduction of $X$ to $\uni^\icf_\infstb$.
\end{proof}

\end{document}